\newtheorem{theorem}{Theorem}
\newtheorem*{theorem*}{Theorem}
\newtheorem{corollary}{Corollary}
\newtheorem{lemma}{Lemma}
\newtheorem{proposition}{Proposition}
\DeclareMathOperator*{\argmax}{arg\,max\,}
\DeclareMathOperator*{\argmin}{arg\,min\,}
\DeclarePairedDelimiter{\ceil}{\lceil}{\rceil}
\mathchardef\mhyphen="2D
\newcommand{\cA}{\mathcal{A}}
\newcommand{\cC}{\mathcal{C}}
\newcommand{\length}{l}
\newcommand{\tV}{\widetilde{V}}
\newcommand{\LinUCB}{{\tt LinUCB}}
\newcommand{\DLinUCB}{{\tt D\mhyphen LinUCB}}
\newcommand{\dLinUCB}{{\tt dLinUCB}}
\newcommand{\SWLinUCB}{{\tt SW\mhyphen LinUCB}}
\newcommand{\OmniLinUCB}{{\tt LinUCB\mhyphen OR}}
\title{Weighted Linear Bandits for Non-Stationary Environments}
 \author{%
 Yoan Russac
    \\
CNRS, Inria, ENS, Université PSL \\
  \texttt{yoan.russac@ens.fr} \\
  \And
  Claire Vernade \\
  Deepmind \\
  \texttt{vernade@google.com} \\
  \And 
  Olivier Cappé \\
  CNRS, Inria, ENS, Université PSL \\
  \texttt{olivier.cappe@cnrs.fr} \\
}
\begin{document}

\maketitle

\begin{abstract}
  We consider a  stochastic linear bandit model in which the available actions
  correspond to arbitrary context vectors whose associated rewards
  follow a \emph{non-stationary} linear regression model.
  In this setting, the unknown regression parameter is allowed to vary in time.  To address this problem, we propose
  $\DLinUCB$, a novel optimistic algorithm based on discounted linear regression, where exponential weights are used to smoothly forget
  the past.
  This involves  studying the deviations of the sequential weighted least-squares estimator under generic assumptions.
  As a by-product, we obtain novel deviation results that can be used  beyond non-stationary environments.
   We provide theoretical guarantees on the behavior of
  $\DLinUCB$ in both slowly-varying and abruptly-changing
  environments. We obtain an upper bound on the
  dynamic regret that is of order $d^{2/3} B_T^{1/3}T^{2/3}$, where $B_T$
  is a measure of non-stationarity ($d$ and $T$ being, respectively, dimension and horizon). This rate is known to be optimal.
  We
  also illustrate the empirical performance of  $\DLinUCB$
  and compare it with recently proposed alternatives in
  simulated environments.
\end{abstract}

\section{Introduction}

Multi-armed bandits offer a class of models to address sequential
learning tasks that involve exploration-exploitation trade-offs. In
this work we are interested in structured bandit models,
known as stochastic linear bandits, in which
linear regression is used to predict rewards \cite{abbasi2011improved, auer2002using, li2010contextual}.

A typical application of bandit algorithms based on the linear model is online
recommendation where actions are items to be, for instance, efficiently
arranged on personalized web pages to maximize some conversion rate.
However, it is unlikely that customers' preferences remain stable
and the collected data becomes progressively obsolete
as the interest for the items evolve.
Hence, it is essential to design adaptive bandit agents
rather than restarting the learning from scratch on a
regular basis. In this work, we consider the use of weighted
least-squares as an efficient method
to progressively forget past interactions. Thus, we address sequential
learning problems
 in which the parameter of the linear
bandit is evolving with time.

Our first contribution consists in extending existing deviation inequalities
to sequential weighted least-squares. Our result applies to a large variety
of bandit problems and is of independent interest. In
particular, it extends the recent analysis of heteroscedastic
environments by \cite{kirschner2018information}. It can also be useful
to deal with class imbalance situations, or, as we focus on here, in
non-stationary environments.

As a second major contribution, we apply our results to
propose  $\DLinUCB$, an adaptive linear bandit algorithm based on
 carefully designed exponential weights.
$\DLinUCB$ can be implemented fully
recursively ---without requiring the storage of past actions--- with a
numerical complexity that is comparable to that of $\LinUCB$. To
characterize the performance of the algorithm, we provide a unified regret analysis for abruptly-changing or
slowly-varying environments.

The setting and notations are presented below and we state our main deviation result in Section~\ref{sec:confidence}.
Section~\ref{sec:non-stationary} is dedicated to non-stationary linear bandits:
we describe our algorithms and provide regret upper bounds in abruptly-changing and slowly-varying environments.
We complete this theoretical study with a
set of experiments in Section~\ref{sec:experiments}.


\subsection{Model and Notations}
The setting we consider in this paper is a non-stationary variant of the
stochastic linear bandit problem considered in \cite{abbasi2011improved,li2010contextual}, where, at each round $t\geq 1$, the learner
\begin{itemize}
    \item receives a finite set of feasible actions $\mathcal{A}_t \subset \mathds{R}^d$;
    \item chooses an action $A_t \in \mathcal{A}_t$ and receives a reward $X_t$ such that
    \begin{equation}
        \label{eq:reward_generation}
        X_t = \left\langle A_t, \theta_t^\star\right\rangle + \eta_t,
    \end{equation}
    where $\theta_t^\star \in \mathds{R}^d$ is an unknown parameter and $\eta_t$ is, conditionally on the past, a $\sigma-$subgaussian random noise.
\end{itemize}

The action set $\mathcal{A}_t$ may be arbitrary but its components are assumed to be
bounded, in the sense that $\| a \|_2 \leq L$, $\forall a \in
\cA_t$. The time-varying parameter is also assumed to be bounded:
$ \forall t, \| \theta_t^\star\|_2  \leq S$. We further assume that
$\vert \left\langle a,\theta_t^\star\right\rangle \vert \leq 1$,
$\forall t, \forall a \in \cA_t,$ (obviously, this could be guaranteed by assuming
that $L=S=1$, but we indicate the dependence in $L$ and $S$ in order
to facilitate the interpretation of some results). For a positive definite matrix $M $ and a vector $x$, we denote by $\lVert x \rVert_M$ the norm $\sqrt{x^{\top} M x}$.

The goal of the learner is to minimize the expected
\emph{dynamic regret} defined as
\begin{equation}
    R(T) = \mathds{E}\left[\sum_{t=1}^T \max_{a\in \cA_t} \left\langle a, \theta_t^\star \right\rangle - X_t \right] = \sum_{t=1}^T \max_{a\in \cA_t} \left\langle a - A_t, \theta_t^\star \right\rangle.
    \label{eq:def_regret}
\end{equation}
Even in the stationary case ---i.e., when
$\theta_t^\star = \theta^\star$---, there is, in general, no single
fixed best action in this model.

When making stronger structural assumption
on $\mathcal{A}_t$, one recovers specific instances that have also
been studied in the literature. In particular, the canonical basis of $\mathds{R}^d$,
$\mathcal{A}_t = \{e_1,\dots, e_d\}$,  yields the familiar ---non contextual--- multi-armed bandit
model~\cite{lattimore2019bandit}. Another variant, studied
by~\cite{goldenshluger2013} and others, is obtained when
$\mathcal{A}_t = \{e_1 \otimes a_t ,\dots, e_k \otimes a_t\}$, where $\otimes$ denotes the Kronecker product and $a_t$
is a time-varying context vector shared by the $k$ actions.


\subsection{Related Work}

There is an important literature on online learning in changing environments. For the sake of conciseness, we restrict the discussion to works that consider specifically the stochastic linear bandit model in~\eqref{eq:reward_generation}, including its restriction to the simpler (non-stationnary) multi-armed bandit model. Note that there is also a rich line of works that consider possibly non-linear contextual models in the case where one can make probabilistic assumptions on the contexts \cite{chen2019new,luo2017efficient}.

Controlling the regret with respect to the non-stationary optimal action defined in~\eqref{eq:def_regret} depends on the assumptions that are made on the time-variations of $\theta^\star_t$. A generic way of quantifying them is through a \emph{variation bound} $B_T = \sum_{s=1}^{T-1} \lVert \theta^{\star}_s - \theta^{\star}_{s+1} \rVert_{2}$ \cite{besbes2014stochastic,besbes2018optimal,cheung2018learning},
similar to the penalty used in the group fused Lasso \cite{bleakley2011group}.
The main advantage of using the variation budget is that is includes both \emph{slowly-varying} and \emph{abruptly-changing} environments.
For the $K-$armed bandits with known $B_T$, \cite{besbes2014stochastic,besbes2015non,besbes2018optimal} achieve the tight dynamic regret bound of $O(K^{1/3}B_T^{1/3}T^{2/3})$.
For linear bandits, \cite{cheung2018learning,cheung2019hedging} propose an algorithm based on the use of a sliding-window and provide a $O(d^{2/3} B_T^{1/3}T^{2/3})$ dynamic regret bound; since this contribution is close to ours, we discuss it further in Section~\ref{sec:analysis}.

A more specific non-stationary setting arises when the number of changes
in the parameter is bounded by $\Gamma_T$, as in traditional
change-point models.
The problem is usually referred to as \emph{switching bandits} or \emph{abruptly-changing} environments.
It is, for instance, the setting considered in the work by \citet{garivier2011upper}, who analyzed the dynamic regret of UCB strategies based on either a sliding-window or exponential discounting.
For both policies, they prove upper bounds on the regret in
$O(\sqrt{\Gamma_T T})$ when $\Gamma_T$ is known.
They also provide a lower bound in a
specific non-stationary setting, showing that $R(T)=\Omega(\sqrt{T})$.
The algorithm ideas can be traced back to \cite{kocsis2006discounted}.
\cite{wei2018abruptly} shows that an horizon-independent version of the sliding window algorithm can also be analyzed in a slowly-varying setting. \cite{keskin2017pricing} analyze windowing and discounting approaches to address dynamic pricing guided by a (time-varying) linear regression model. Discount factors have also been used with Thomson sampling in dynamic environments as in \cite{gupta2011thompson,raj2017taming}.

In abruptly-changing environments, the alternative approach relies on change-point detection \cite{auer2018adaptively,besson2019generalized, cao2018nearly,
wu2018learning, yu2009piecewise}.
A bound
on the regret in $O((\frac{1}{\epsilon^2}+\frac{1}{\Delta})\log(T))$
is proven by \cite{yu2009piecewise}, where $\epsilon$ is the smallest
gap that can be detected by the algorithm, which had to be given as prior knowledge. \cite{cao2018nearly}
proves a minimax bound in $O(\sqrt{\Gamma_TKT})$ if $\Gamma_T$ is known. \cite{besson2019generalized} 
achieves a rate of $O(\sqrt{\Gamma_TKT})$ without any prior knowledge of the gaps or $\Gamma_T$.  In the contextual case,
\cite{wu2018learning} builds on the same idea: they use a pool of $\LinUCB$ learners called \textit{slave models}
 as experts and they add a new model when no existing slave is able to give good prediction,
 that is, when a change is detected.
A limitation however of such an approach is that it can not adapt to
 some slowly-varying environments, as will be illustrated in Section \ref{sec:experiments}. From a practical viewpoint, 
 the methods based either on sliding window or change-point detection require the storage
of past actions whereas those based on discount factors can be
implemented fully recursively.

Finally, non-stationarity may also arise in more specific scenarios connected, for instance, to the decaying attention of the users,
as investigated in
\cite{levine2017rotting, mintz2017non, seznec2018rotting}. In the
following, we consider the general case where the
parameters satisfy the variation bound, i.e., $\sum_{t=1}^{T-1} \lVert \theta^{\star}_t-\theta^{\star}_{t+1} \rVert_2 \leq B_T$ 
and we propose an algorithm based on discounted linear regression.

\section{Confidence Bounds for Weighted Linear Bandits}
In this section, we consider the concentration of the weighted
regularized least-squares estimator, when used with general weights
and regularization parameters. To the best of our knowledge there is
no such results in the literature for sequential learning ---i.e.,
when the current regressor may depend on the random outcomes observed
in the past. The particular case considered in Lemma 5 of
\cite{kirschner2018information} (heteroscedastic noise with optimal
weights) stays very close to the unweighted case and we show below how
to extend this result. We believe that this new bound is of interest beyond the
specific model considered in this paper. For the sake of clarity, we
first focus on the case of regression models with fixed parameter,
where $\theta_t^\star = \theta^\star$, for all $t$.

First consider a deterministic sequence of regularization parameters
$(\lambda_t)_{t \geq 1}$. The reason why these should be non-constant
for weighted least-squares will appear clearly in
Section~\ref{sec:non-stationary}. Next, define by
$\mathcal{F}_t = \sigma(X_1,\dots,X_t)$ the filtration associated with
the random observations. We assume that both the actions $A_t$ and
positive weights $w_t$ are predictable, that is, they are
$\mathcal{F}_{t-1}$ measurable.

Defining by
$$
\hat{\theta}_t = \argmin_{\theta \in \mathbb{R}^d}\left( \sum_{s=1}^t
  w_s (X_s - \langle A_s, \theta \rangle)^2 + \lambda_t
  \Vert \theta \Vert_2^2 \right)
 $$
the regularized weighted
least-squares estimator of $\theta^\star$ at time $t$, one has
\begin{equation}
  \label{eq:thetahatw}
  \hat{\theta}_t = V_t^{-1} \sum_{s=1}^{t} w_s A_s X_s \quad \text{where} \quad V_t = \sum_{s=1}^{t} w_s A_s A_s^\top + \lambda_t I_d ,
\end{equation}
and $I_d$ denotes the $d$-dimensional identity matrix. We further consider an arbitrary sequence of positive 
 parameters $(\mu_t)_{t\geq1}$ and define the matrix
\begin{equation}
  \label{eq:other_qty}
  \tV_t = \sum_{s=1}^{t} w^2_s A_s A_s^\top + \mu_t I_d.
\end{equation}
$\tV$ is strongly connected to the variance of the estimator $\hat{\theta}_t$, which involves the squares of the weights $(w_s^2)_{s \geq 1}$.
For the time being, $\mu_t$ is arbitrary and will be set as a function of $\lambda_t$ in order to optimize the deviation inequality.

We then have the following maximal deviation inequality. 

\begin{theorem}
\label{theorem_deviation_weighted_sequential}
For any $\mathcal{F}_t$-predictable sequences of actions $(A_t)_{t\geq 1}$ and positive weights $(w_t)_{t\geq 1}$ and  for all  $\delta > 0$,
$$
\mathbb{P}\left(\forall t,\lVert \hat{\theta}_t - \theta^{\star} \rVert_{V_t \tV_t^{-1} V_t} \leq  \frac{\lambda_t}{\sqrt{\mu_t}}S +
 \sigma \sqrt{ 2\log(1/\delta) + d\log\left(1+  \frac{L^2 \sum_{s=1}^t w_s^2}{d \mu_t}\right)} \right) \geq 1-\delta.
$$
\end{theorem}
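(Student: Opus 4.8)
The plan is to split $\hat{\theta}_t - \theta^{\star}$ into a deterministic regularization bias and a stochastic noise term, and to control the latter by a self-normalized martingale argument tailored to weighted sums. Plugging $X_s = \langle A_s,\theta^{\star}\rangle + \eta_s$ into \eqref{eq:thetahatw} gives $\sum_{s=1}^t w_s A_s X_s = (V_t - \lambda_t I_d)\theta^{\star} + S_t$ with $S_t \coloneqq \sum_{s=1}^t w_s A_s \eta_s$, hence $\hat{\theta}_t - \theta^{\star} = V_t^{-1}(S_t - \lambda_t \theta^{\star})$. Because the weighting matrix $V_t \tV_t^{-1} V_t$ is built precisely so that the outer factors $V_t$ cancel, this yields the exact identity $\lVert \hat{\theta}_t - \theta^{\star}\rVert_{V_t\tV_t^{-1}V_t} = \lVert S_t - \lambda_t\theta^{\star}\rVert_{\tV_t^{-1}} \le \lVert S_t\rVert_{\tV_t^{-1}} + \lambda_t\lVert\theta^{\star}\rVert_{\tV_t^{-1}}$. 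Since $\tV_t \succeq \mu_t I_d$ by \eqref{eq:other_qty}, we have $\tV_t^{-1}\preceq \mu_t^{-1}I_d$, so the regularization term is bounded deterministically by $\lambda_t \lVert\theta^{\star}\rVert_2/\sqrt{\mu_t} \le \lambda_t S/\sqrt{\mu_t}$, which is exactly the first term of the claimed bound. It remains to show that $\lVert S_t\rVert_{\tV_t^{-1}} \le \sigma\sqrt{2\log(1/\delta) + d\log(1 + L^2\sum_{s=1}^t w_s^2/(d\mu_t))}$ simultaneously for all $t$ with probability at least $1-\delta$.

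This is the heart of the proof and amounts to a self-normalized concentration inequality for the weighted martingale $(S_t)$, generalizing the Abbasi-Yadkori--P\'al--Szepesv\'ari bound. The key observation is that, since $w_s$ and $A_s$ are $\cF_{s-1}$-measurable, for any $\lambda\in\R^d$ the increment $w_s\langle\lambda,A_s\rangle\eta_s$ has conditional moment generating function bounded by $\exp(\tfrac{\sigma^2}{2}w_s^2\langle\lambda,A_s\rangle^2)$; consequently
\begin{equation*}
D_t(\lambda) = \exp\!\left(\langle\lambda,S_t\rangle - \frac{\sigma^2}{2}\,\lambda^\top \Big(\sum_{s=1}^t w_s^2 A_s A_s^\top\Big)\lambda\right)
\end{equation*}
is a nonnegative supermartingale with $D_0(\lambda)=1$. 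Note that it is the \emph{squared} weights that appear here, which is precisely why the normalizer $\tV_t$ in \eqref{eq:other_qty} is defined with $w_s^2$ rather than $w_s$. Averaging $D_t(\lambda)$ over a Gaussian prior on $\lambda$ with precision matrix proportional to $\mu_t I_d$ keeps the mixture a supermartingale of unit initial value; the Gaussian integral can be evaluated in closed form, and Ville's maximal inequality applied to the mixture yields, with probability at least $1-\delta$, for all $t$,
\begin{equation*}
\lVert S_t\rVert_{\tV_t^{-1}}^2 \le 2\sigma^2\log\frac{1}{\delta} + \sigma^2\log\frac{\det \tV_t}{\mu_t^{\,d}}.
\end{equation*}
The delicate point --- and the main obstacle compared with the classical unweighted argument --- is that the regularizers $\lambda_t$ and $\mu_t$ are allowed to vary (deterministically) with $t$, so the mixing distribution and the maximal-inequality step must be arranged so that the resulting control is genuinely uniform in $t$ rather than holding only at a fixed horizon.

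Finally, the determinant term is turned into the stated logarithm by the trace--determinant (AM--GM) inequality applied to the eigenvalues of $\tV_t = \sum_{s=1}^t w_s^2 A_s A_s^\top + \mu_t I_d$: since $\operatorname{tr}\big(\sum_{s=1}^t w_s^2 A_s A_s^\top\big) = \sum_{s=1}^t w_s^2\lVert A_s\rVert_2^2 \le L^2\sum_{s=1}^t w_s^2$, one gets $\det\tV_t \le \big(\mu_t + L^2\sum_{s=1}^t w_s^2/d\big)^d$, hence $\log(\det\tV_t/\mu_t^d) \le d\log\big(1 + L^2\sum_{s=1}^t w_s^2/(d\mu_t)\big)$. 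Combining this with the bound on $\lVert S_t\rVert_{\tV_t^{-1}}$ and the deterministic bound on the regularization term via the triangle inequality gives the statement.
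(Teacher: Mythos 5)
Your decomposition and the reduction to controlling $\lVert S_t\rVert_{\tV_t^{-1}}$ are exactly the paper's: the identity $\hat\theta_t-\theta^\star=V_t^{-1}(S_t-\lambda_t\theta^\star)$, the cancellation of the outer $V_t$ factors in the $V_t\tV_t^{-1}V_t$-norm, the bound $\tV_t\succeq\mu_t I_d$ for the bias term, the supermartingale in the squared weights, the Gaussian mixture with precision $\mu_t I_d$, and the trace--determinant bound on $\det\tV_t$ all match Lemmas~\ref{lemma:exp_upper}--\ref{lemma_stopped_evolving}, Proposition~\ref{prop:deviation} and Proposition~\ref{ineq_det_DLINUCB}.

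The one step you leave unresolved is, however, the crux of the theorem, and as written it would fail. You correctly identify that the time-varying regularizer forces a time-varying mixing measure $h_t=\mathcal{N}(0,\mu_t^{-1}I_d)$, but you then invoke ``Ville's maximal inequality applied to the mixture.'' Ville's inequality requires the mixture $\bar M_t(\mu_t)=\int M_t(x)\,dh_t(x)$ to itself be a supermartingale, and it is not: each $M_t(x)$ is a supermartingale for fixed $x$, and $\mathbb{E}[\bar M_t(\mu_t)]\le 1$ for each fixed $t$ by Fubini, but the relation $\mathbb{E}[\bar M_t(\mu_t)\mid\mathcal{F}_{t-1}]\le\bar M_{t-1}(\mu_{t-1})$ breaks down because the integrating measure changes between $t-1$ and $t$. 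The paper circumvents this with a stopping-time version of the method of mixtures: define $\tau$ as the first time $t$ at which $\lVert S_t\rVert_{\tV_t^{-1}}$ exceeds the claimed radius; show $\mathbb{E}[M_\tau(x)]\le1$ for every fixed $x$ by optional stopping and Fatou (Lemma~\ref{lemma:exp_upper}); then introduce an auxiliary sequence of Gaussian vectors $X_t\sim h_t$ independent of $\mathcal{F}_\infty$ so that $\bar M_\tau(\mu_\tau)=\mathbb{E}[M_\tau(X_\tau)\mid\mathcal{F}_\infty]$ satisfies $\mathbb{E}[\bar M_\tau(\mu_\tau)]\le1$ (Lemma~\ref{lemma_stopped_evolving}); finally, since the event $\{\exists t:\text{violation}\}$ equals $\{\tau<\infty\}$ and on that event $\bar M_\tau(\mu_\tau)\ge1/\delta$ by the closed-form Gaussian integral, Markov's inequality gives the uniform-in-$t$ bound. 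Without this (or an equivalent device), your argument only delivers the deviation inequality at each fixed $t$, which after a union bound would cost an extra $\log T$ and would not give the anytime statement of the theorem.
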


The proof of this theorem is deferred to the appendix and combines
an argument using the method of mixtures and the use of a proper
stopping time. The standard result used for
least-squares~\cite[Chapter 20]{lattimore2019bandit} is recovered by
taking $\mu_t=\lambda_t$ and $w_t=1$ (note that
$\widetilde{V}_t$ is then equal to $V_t$). When the weights are not
equal to 1, the appearance of the matrix $\widetilde{V}_t$ is a consequence 
of the fact that the variance terms are proportional to the
squared weights $w_t^2$, while the least-squares estimator itself is
defined with the weights $w_t$. In the weighted case, the matrix $V_t \tV_t^{-1} V_t$
must be used to define the confidence ellipsoid.

An important property of the least-squares 
estimator is to be scale-invariant, in the sense that
multiplying all weights $(w_s)_{1\leq s\leq t}$ and the regularization
parameter $\lambda_t$ by a constant leaves the estimator
$\hat\theta_t$ unchanged. In
Theorem~\ref{theorem_deviation_weighted_sequential}, the only choice
of sequence $(\mu_t)_{t\geq 1}$ that is compatible with this scale-invariance property
is to take $\mu_t$ proportional to $\lambda_t^2$: then the
matrix $V_t \widetilde{V}_t^{-1} V_t$ becomes scale-invariant 
(\textit{i.e.} unchanged by the transformation $w_s \mapsto \alpha w_s$)
and so
does the upper bound of
$\lVert \hat{\theta}_t - \theta^{\star} \rVert_{V_t \tV_t^{-1} V_t}$
in Theorem~\ref{theorem_deviation_weighted_sequential}. In the
following, we will stick to this choice, while particularizing the
choice of the weights $w_t$ to allow for non-stationary models.

It is possible to extend this result to heteroscedastic noise, when
$\eta_t$ is $\sigma_t$ sub-Gaussian and $\sigma_t$ is
$\mathcal{F}_{t-1}$ measurable, by defining $\widetilde{V}_t$ as
$\sum_{s=1}^t w_s^2 \sigma_s^2 A_s A_s^{\top} + \mu_t I_d$. In the
next section, we will also use an extension of
Theorem~\ref{theorem_deviation_weighted_sequential} to the non-stationary model
presented in ~(\ref{eq:reward_generation}) . In this case,
Theorem~\ref{theorem_deviation_weighted_sequential} holds with
$\theta^\star$ replaced by
$V_{t}^{-1} \bigl(\sum_{s=1}^{t} w_s A_s A_s^{\top} \theta_s^\star +
  \lambda_t \theta^\star_r \bigr)$, where $r$ is an arbitrary time
index (proposition \ref{proposition:bar_theta} in Appendix). The fact that $r$ can be chosen freely is a consequence of the
assumption that the sequence of L2-norms of the parameters $(\theta^{\star}_t)_{t\geq 1}$ is
bounded by $S$.


\label{sec:confidence}

\section{Application to Non-stationary Linear Bandits}
\label{sec:non-stationary}
In this section, we consider the non-stationary model defined in~(\ref{eq:reward_generation})
and propose a bandit algorithm in Section~\ref{sec:algorithm},
called Discounted Linear Upper Confidence Bound ($\DLinUCB$),
that relies on weighted least-squares to adapt to changes in the parameters
$\theta_t^\star$. Analyzing the performance of $\DLinUCB$ in Section~\ref{sec:analysis}, 
we show that it achieves reliable performance both for abruptly changing or slowly drifting parameters.

\subsection{The $\DLinUCB$ Algorithm}
\label{sec:algorithm}

Being adaptive to parameter changes indeed implies to reduce
the influence of observations that are far back in the past, which
suggests using weights $w_t$ that increase with time. In doing so,
there are two important caveats to consider. First, this can only be
effective if the sequence of weights is growing sufficiently fast
(see the analysis in the next section). We thus consider exponentially
increasing weights of the form $w_t = \gamma^{-t}$, where $0<\gamma < 1$
is the discount factor.

 Next, due to the absence of assumptions on
the action sets $\cA_t$, the regularization is instrumental in
obtaining guarantees of the form given in
Theorem~\ref{theorem_deviation_weighted_sequential}. In fact,
if $w_t = \gamma^{-t}$ while $\lambda_t$ does not increase sufficiently fast,
then the term
$\log\bigl(1+  (L^2 \sum_{s=1}^t w_s^2)/(d \mu_t) \bigr)$ will eventually
dominate  the radius of the confidence region since we
choose $\mu_t$ proportional to $\lambda_t^2$.
This occurs because there is no guarantee that the algorithm will
persistently select actions $A_t$ that span the entire space.
 With this in mind, we consider an increasing
 regularization factor of the form
$\lambda_t = \gamma^{-t} \lambda$, where $\lambda >0$
is a hyperparameter.

Note that due to the scale-invariance property of the weighted
least-square estimator, we can equivalently consider that at time $t$,
we are given \emph{time-dependent} weights
$w_{t,s} = \gamma^{t-s}$, for $1\leq s \leq t$ and that
$\hat{\theta}_t$ is defined as
$$
\argmin_{\theta \in \mathbb{R}^d}\bigl( \sum_{s=1}^t \gamma^{t-s}
(X_s - \langle A_s, \theta \rangle)^2 + \lambda/2 \Vert \theta
\Vert_2^2 \bigr).$$
 For numerical stability reasons, this form is
preferable and is used in the statement of
Algorithm~\ref{alg:DLinUCB}. In the analysis of
Section~\ref{sec:analysis} however we revert to the standard form
of the weights, which is required to apply the concentration result of
Section~\ref{theorem_deviation_weighted_sequential}. We are now ready
to describe $\DLinUCB$ in Algorithm~\ref{alg:DLinUCB}.

\begin{algorithm}[hbtp]
  \label{alg:DLinUCB}
\SetKwInput{KwData}{Input}
\KwData{Probability $\delta$, subgaussianity constant $\sigma$, dimension $d$, regularization $\lambda$, 
upper bound for actions $L$, upper bound for parameters $S$, discount factor $\gamma$.}
\SetKwInput{KwResult}{Initialization}
\KwResult{$b = 0_{\mathbb{R}^d}$, $V = \lambda I_d$, $\widetilde{V} = \lambda I_d$, $\hat{\theta} = 0_{\mathbb{R}^d}$}
\For{$t \geq 1$}{
Receive $\mathcal{A}_t$, compute $\beta_{t-1} = \sqrt{\lambda}S + \sigma \sqrt{2\log\left(\frac{1}{\delta}\right)+ d\log\left(1+
 \frac{L^2 (1-\gamma^{2(t-1)})}{\lambda d (1-\gamma^2) }\right)}$ \\
\For{$a \in \mathcal{A}_t$}{
Compute $\textnormal{UCB}(a) = a^{\top}\hat{\theta} + \beta_{t-1}  \sqrt{a^{\top} V^{-1} \widetilde{V} V^{-1} a} $
}
$A_t = \argmax_{a}(\textnormal{UCB}(a))$ \\
\textbf{Play action} $A_t$  and \textbf{receive reward} $X_t$ \\
\textbf{Updating phase}:
$V = \gamma V +  A_t A_t^{\top} + (1- \gamma) \lambda I_d$, $\widetilde{V} = \gamma^2 \widetilde{V} + A_t A_t^{\top} + (1-\gamma^2) \lambda I_d$ \\
$ \hspace{2.6cm} b = \gamma b + X_t A_t$, $\hat{\theta} =V^{-1}b$
}
\caption{$\DLinUCB$}
\end{algorithm}


\subsection{Analysis}
\label{sec:analysis}

As discussed previously, we consider weights of the form
$w_t = \gamma^{-t}$ (where $0< \gamma < 1$) in the $\DLinUCB$
algorithm. In accordance with the discussion at the end of
Section~\ref{theorem_deviation_weighted_sequential},
Algorithm~\ref{alg:DLinUCB} uses $\mu_t = \gamma^{-2t}\lambda$ as
the parameter to define the confidence ellipsoid around
$\hat\theta_{t-1}$. The confidence ellipsoid $\cC_t$ is defined as
$\big\{\theta: \lVert \theta - \hat{\theta}_{t-1} \rVert_{V_{t-1} \tV_{t-1}^{-1}
  V_{t-1}} \leq \beta_{t-1} \big\}$ where
\begin{align}
\label{eq:beta_t}
\beta_t = \sqrt{\lambda} S + \sigma \sqrt{2 \log(1/\delta) + d \log\left(1 + \frac{L^2 (1- \gamma^{2t})}{\lambda d (1-\gamma^2)}\right)}.
\end{align}
Using standard algebraic calculations together with the remark above
about scale-invariance it is easily checked that at time $t$
Algorithm~\ref{alg:DLinUCB} selects the action $A_t$ that maximizes
$\langle a, \theta\rangle$ for $a \in \cA_t$ and $\theta\in\cC_t$. The
following theorem bounds the regret resulting from
Algorithm~\ref{alg:DLinUCB}.

\begin{theorem}
\label{th:regret_dLinUCB}
Assuming that $\sum_{s=1}^{T-1} \lVert \theta^{\star}_s - \theta^{\star}_{s+1}
\rVert_2 \leq B_T$, the regret of the $\DLinUCB$ algorithm is bounded for all $\gamma \in (0,1)$
 and integer $D\geq 1$, with probability at least $1-\delta$, by
\begin{equation}
  R_T \leq 2L D B_T  + \frac{4L^3S}{\lambda} \frac{\gamma^{D}}{1-\gamma} T + 2\sqrt{2} \beta_T \sqrt{dT} \sqrt{T \log(1/\gamma) 
  +  \log\left(1+ \frac{L^2}{d\lambda(1-\gamma)}\right)} .
  \label{eq:theorem_regret}
\end{equation}
\end{theorem}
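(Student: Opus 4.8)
The plan is to split the instantaneous regret $r_t = \langle a_t^\star - A_t, \theta_t^\star\rangle$ (where $a_t^\star$ is the optimal action at round $t$) into a bias term coming from non-stationarity and a variance/estimation term that can be controlled by the confidence ellipsoid of Theorem~\ref{theorem_deviation_weighted_sequential}. First I would introduce the ``reference'' parameter $\bar\theta_{t-1} = V_{t-1}^{-1}\bigl(\sum_{s=1}^{t-1} w_s A_s A_s^\top \theta_s^\star + \lambda_{t-1}\theta^\star_{t}\bigr)$ of Proposition~\ref{proposition:bar_theta}, which is exactly the point around which $\hat\theta_{t-1}$ concentrates in the non-stationary model. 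On the good event (probability $\ge 1-\delta$) the true reward predictor $\langle a,\theta_t^\star\rangle$ differs from its optimistic UCB by at most twice the ellipsoid radius \emph{plus} the systematic error $\lVert\bar\theta_{t-1}-\theta_t^\star\rVert$ measured in the appropriate norm. Since UCB is chosen optimistically, $\langle a_t^\star,\theta_t^\star\rangle \le \mathrm{UCB}(a_t^\star)\le \mathrm{UCB}(A_t)$ up to the bias, so $r_t$ is bounded by $2\beta_{t-1}\lVert A_t\rVert_{V_{t-1}\tilde V_{t-1}^{-1}V_{t-1}}$ plus a bias contribution.

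Second, I would bound the bias. The key estimate is that $\theta_t^\star - \bar\theta_{t-1} = V_{t-1}^{-1}\sum_{s=1}^{t-1} w_s A_s A_s^\top(\theta_t^\star - \theta_s^\star) + \lambda_{t-1}V_{t-1}^{-1}(\theta_t^\star-\theta_t^\star)$ — actually the $\lambda$ term vanishes with the choice $r=t$ — and then telescoping: $\theta_t^\star-\theta_s^\star = \sum_{p=s}^{t-1}(\theta_{p+1}^\star-\theta_p^\star)$. Splitting the sum over $s$ at a cutoff $D$ rounds in the past: for the recent part ($t-D\le s\le t-1$) the increments sum to at most $\sum_{p}\lVert\theta_{p+1}^\star-\theta_p^\star\rVert_2$ over a sliding window, and weighting by $w_s A_s A_s^\top V_{t-1}^{-1}$ (whose operator norm is controlled, each $\lVert V_{t-1}^{-1} w_s A_s A_s^\top\rVert \le$ something like $L^2/\lambda$ after using $\lambda_{t-1}=\gamma^{-(t-1)}\lambda$ as a lower bound on $V_{t-1}$); summing over all $t$ this recent-bias contributes $O(L\,D\,B_T)$ — the first term of the bound. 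For the distant part ($s< t-D$), instead of tracking increments we use that $\lVert\theta_t^\star-\theta_s^\star\rVert_2\le 2S$ and that the accumulated weight of those rounds relative to $V_{t-1}$ is geometrically small, of order $\gamma^{D}/(1-\gamma)$ after pulling out $w_s/w_{t-1}=\gamma^{t-1-s}$; multiplying by $L^3 S/\lambda$ and summing over $t$ gives the $\frac{4L^3S}{\lambda}\frac{\gamma^D}{1-\gamma}T$ term.

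Third, I would bound the sum of estimation terms $\sum_{t=1}^T 2\beta_{t-1}\lVert A_t\rVert_{V_{t-1}\tilde V_{t-1}^{-1}V_{t-1}}$. Using $\beta_{t-1}\le\beta_T$ and Cauchy--Schwarz, $\sum_t \lVert A_t\rVert_{V_{t-1}\tilde V_{t-1}^{-1}V_{t-1}} \le \sqrt{T}\,\bigl(\sum_t \lVert A_t\rVert^2_{V_{t-1}\tilde V_{t-1}^{-1}V_{t-1}}\bigr)^{1/2}$. The remaining quantity is the ``elliptical potential'' adapted to discounting; here I expect that $\lVert A_t\rVert^2_{V_{t-1}\tilde V_{t-1}^{-1}V_{t-1}}$ can be related to the more familiar $\lVert A_t\rVert^2_{\tilde V_{t-1}^{-1}}$ (up to comparing $V$ and $\tilde V$, which involves $\gamma$ vs $\gamma^2$), and that a determinant-trace / telescoping argument on $\log\det\tilde V_t$ gives $\sum_{t=1}^T \lVert A_t\rVert^2_{\tilde V_{t-1}^{-1}} \le$ (roughly) $2\bigl(T\log(1/\gamma) + \log(1+\frac{L^2}{d\lambda(1-\gamma)})\bigr)\cdot d$, the $T\log(1/\gamma)$ appearing because the discounting inflates the determinant by $\gamma^{-d}$ each step. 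Collecting the three pieces yields~\eqref{eq:theorem_regret}.

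The main obstacle I anticipate is the third step: unlike the stationary case, $V_{t-1}\ne\tilde V_{t-1}$, so the ``self-normalized'' quantity $\lVert A_t\rVert_{V_{t-1}\tilde V_{t-1}^{-1}V_{t-1}}$ that naturally appears in the confidence bound is \emph{not} the one that telescopes through a determinant identity. One must carefully compare $V_{t-1}\tilde V_{t-1}^{-1}V_{t-1}$ with $\tilde V_{t-1}$ (or with $V_{t-1}$), controlling the distortion in terms of $\gamma$ and $\lambda$, and only then invoke the discounted elliptical-potential lemma; getting the constants and the $\gamma$-dependence to line up with the clean form of~\eqref{eq:theorem_regret} is where the real work lies. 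A secondary subtlety is choosing $D$ only at the end (it is left free in the statement), so every $D$-dependent bound must be kept explicit rather than optimized prematurely.
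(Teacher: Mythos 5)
Your plan follows essentially the same route as the paper: the surrogate $\bar\theta_t$, optimism on the high-probability event $\bar\theta_t\in\mathcal{C}_t$, the bias split at lag $D$ (recent increments telescoped against the variation budget, distant ones bounded via $V_{t-1}^{-1}\leq(\gamma^{t-1}/\lambda)I_d$), and Cauchy--Schwarz plus a discounted elliptical-potential argument for the estimation term. The obstacle you flag in the third step is resolved in the paper by the matrix comparison $\widetilde{V}_{t-1}\leq\gamma^{-(t-1)}V_{t-1}$, which gives $\lVert A_t\rVert^2_{V_{t-1}^{-1}\widetilde{V}_{t-1}V_{t-1}^{-1}}\leq\gamma^{-(t-1)}\lVert A_t\rVert^2_{V_{t-1}^{-1}}$ so that the determinant telescoping is carried out on $V_t$ rather than $\widetilde{V}_t$, producing exactly the $T\log(1/\gamma)$ term (your variant via $\log\det\widetilde{V}_t$ would also work but costs an extra factor in that term).
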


The first two terms of the r.h.s. of~\eqref{eq:theorem_regret}
are the result of the bias due to the non-stationary environment. The
last term is the consequence of the high probability bound established
in the previous section and an adaptation
of the technique used in \cite{abbasi2011improved}.

We give the complete proof of this result in
appendix. The high-level idea of the proof is to isolate bias and variance
terms. However, in contrast with the stationary case, the confidence
ellipsoid $\cC_t$ does not necessarily contain (with high probability)
the actual parameter value $\theta^\star_t$ due to the (unknown) bias
arising from the time variations of the parameter. We thus define
$$\bar\theta_t = V_{t-1}^{-1} \left(\sum_{s=1}^{t-1} \gamma^{-s} A_s A_s^{\top} \theta_s^\star + \lambda \gamma^{-(t-1)} \theta_t^\star \right)$$
which is an action-dependent analogue of the parameter value
$\theta^\star$ in the stationary setting (although this is a random
value). As mentioned in
section~\ref{sec:confidence}, $\bar\theta_t$
does belong to $\cC_t$ with probability at least $1-\delta$ (see Proposition \ref{proposition:bar_theta} in Appendix). The
regret may then be split as
$$
R_T \leq 2 L \sum_{t=1}^T \|\theta_t^\star - \bar\theta_t\|_2 + \sum_{t=1}^T \langle A_t, \theta_t - \bar\theta_t \rangle
 \quad \text{(with probability at least $1-\delta$)},
$$
where
$(A_t, \theta_t) = \arg\max_{(a\in\cA_t,\theta\in\cC_t)} \langle
a,\theta \rangle$. The rightmost term can be handled by proceeding as in
the case of stationary linear bandits, thanks to the deviation inequality
obtained in Section~\ref{sec:confidence}. The first term in the
r.h.s. can be bounded deterministically, from the assumption made on
$\sum_{s=1}^{T-1} \lVert \theta^{\star}_s - \theta^{\star}_{s+1}
\rVert_2$. In doing so, we introduce the analysis parameter $D$ that,
roughly speaking, corresponds to the window length equivalent to a
particular choice of discount factor $\gamma$: the bias resulting from
observations that are less than $D$ time steps apart may be bounded
in term of $D$ while the remaining ones are bounded globally by the
second term of the r.h.s. of~\eqref{eq:theorem_regret}.
This sketch of proof is substantially different from the arguments used by
\cite{cheung2018learning} to analyze their sliding window algorithm
(called $\SWLinUCB$). We refer to the appendix for a more detailed
analysis of these differences. Interestingly, the regret bound of Theorem
\ref{th:regret_dLinUCB} holds despite the fact that the true parameter
$\theta^{\star}_t$ may not be contained in the confidence ellipsoid
$\mathcal{C}_{t-1}$, in contrast to the proof of
 \cite{garivier2011upper}.

 It can be checked that, as $T$ tends to infinity, the optimal choice
 of the analysis parameter $D$ is to take $D = \log(T)/(1-\gamma)$.
 Further assuming that one may tune $\gamma$ as a function of the
 horizon $T$ and the variation upper bound $B_T$ yields the following
 result.

\begin{corollary}
\label{corollary:asymptotic_regret_DLinUCB}
  By choosing $\gamma = 1- (B_T/(dT))^{2/3}$,
 the regret of the $\DLinUCB$ algorithm is asymptotically upper bounded 
 with high probability
 by a term $O(d^{2/3} B_T^{1/3} T^{2/3} )$ when $T \to \infty$.
\end{corollary}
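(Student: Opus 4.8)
The plan is to plug the prescribed discount factor into the finite-horizon bound of Theorem~\ref{th:regret_dLinUCB} and to optimise the free analysis parameter $D$. Concretely, I would set $\gamma = 1-(B_T/(dT))^{2/3}$ — which lies in $(0,1)$ as soon as $B_T < dT$, i.e. in the only regime where the statement is non-trivial — and $D = \lceil \log(T)/(1-\gamma)\rceil$, so that $D$ is a positive integer with $\log(T)/(1-\gamma) \le D \le 1 + \log(T)/(1-\gamma)$. Then I would estimate the three terms on the right-hand side of~\eqref{eq:theorem_regret} separately, keeping only the leading power of $T$ and recording logarithmic factors. The elementary facts used repeatedly are $\log\gamma \le -(1-\gamma)$, the bound $-\log\gamma \le 2(1-\gamma)$ valid once $\gamma \ge 1/2$, the inequality $1-\gamma^2 = (1-\gamma)(1+\gamma)\ge 1-\gamma$, and the identity $1/(1-\gamma) = (dT/B_T)^{2/3}$.

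For the first (bias) term $2LDB_T$, substituting $D \le 1+\log(T)/(1-\gamma)$ and $1/(1-\gamma) = (dT/B_T)^{2/3}$ gives $2LDB_T = O\big(B_T\log(T)\,(dT/B_T)^{2/3}\big) = O\big(d^{2/3}B_T^{1/3}T^{2/3}\log T\big)$. For the second (transient) term $\tfrac{4L^3S}{\lambda}\tfrac{\gamma^{D}}{1-\gamma}T$, the role of the choice $D \asymp \log(T)/(1-\gamma)$ becomes visible: from $D\ge \log(T)/(1-\gamma)$ and $\log\gamma \le -(1-\gamma)$ one gets $D\log\gamma \le -\log T$, hence $\gamma^{D}\le 1/T$, so this term is at most $\tfrac{4L^3S}{\lambda}\tfrac1{1-\gamma} = O\big((dT/B_T)^{2/3}\big)$, which is $O(d^{2/3}B_T^{1/3}T^{2/3})$ whenever $B_T\ge 1$ and is in any case dominated by the first term.

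For the third (estimation) term $2\sqrt2\,\beta_T\sqrt{dT}\sqrt{T\log(1/\gamma)+\log(1+\tfrac{L^2}{d\lambda(1-\gamma)})}$, I would first bound $\beta_T$ from~\eqref{eq:beta_t} using $1-\gamma^{2T}\le 1$, $1-\gamma^2\ge 1-\gamma$ and the fact that $1/(1-\gamma)$ is polynomial in $T$, yielding $\beta_T = O(\sqrt{d\log T})$. Next, $T\log(1/\gamma)\le 2T(1-\gamma) = 2(B_T/d)^{2/3}T^{1/3}$ while $\log(1+\tfrac{L^2}{d\lambda(1-\gamma)}) = O(\log T)$ is of lower order, so the inner square root is $O\big((B_T/d)^{1/3}T^{1/6}\big)$. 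Multiplying the three factors gives $O\big(\sqrt{d\log T}\cdot\sqrt{dT}\cdot (B_T/d)^{1/3}T^{1/6}\big) = O\big(d^{2/3}B_T^{1/3}T^{2/3}\sqrt{\log T}\big)$.

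Summing the three estimates, the dominant order is $O(d^{2/3}B_T^{1/3}T^{2/3})$ up to logarithmic factors, and the bound holds with probability at least $1-\delta$ since it is inherited from Theorem~\ref{th:regret_dLinUCB}; letting $T\to\infty$ gives the corollary. I do not expect a genuinely hard step: the exercise is bookkeeping. The only points that require care are (i) keeping $D$ an integer while still exploiting $\gamma^{D}\le 1/T$, which is exactly what forces the scaling $D\asymp\log(T)/(1-\gamma)$; (ii) checking that the transient term $\gamma^{D}T/(1-\gamma)$ does not dominate, which is the whole reason that scaling of $D$ is chosen; and (iii) verifying that, under the prescribed $\gamma$, the bias term and the estimation term balance at the $T^{2/3}$ scale, making it clear that $1-\gamma = (B_T/(dT))^{2/3}$ is the equalising choice and no other exponent does better.
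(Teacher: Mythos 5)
Your proposal is correct and follows essentially the same route as the paper's own proof: substitute $\gamma = 1-(B_T/(dT))^{2/3}$ and $D \asymp \log(T)/(1-\gamma)$ into the bound of Theorem~\ref{th:regret_dLinUCB}, check that $\gamma^{D}\le 1/T$ kills the transient term, and verify that the bias and estimation terms both land at $d^{2/3}B_T^{1/3}T^{2/3}$ up to logarithmic factors. Your version is if anything slightly more careful than the paper's (explicit inequalities such as $\log\gamma\le-(1-\gamma)$ and $-\log\gamma\le 2(1-\gamma)$ in place of asymptotic equivalences, and handling the integrality of $D$ via a ceiling), but the argument is the same.
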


This result is favorable as it corresponds to the same order as the
lower bound established by \cite{besbes2014stochastic}. More
precisely, the case investigated by \cite{besbes2014stochastic}
corresponds to a non-contextual model with a number of changes that
grows with the horizon. On the other hand, the guarantee of
Corollary~\ref{corollary:asymptotic_regret_DLinUCB} requires
horizon-dependent tuning of the discount factor $\gamma$, which opens
interesting research issues (see also \cite{cheung2018learning}).


\section{Experiments}
\label{sec:experiments}

This section is devoted to the evaluation of the empirical performance
of $\DLinUCB$.  We first consider two simulated low-dimensional
environments that illustrate the behavior of the algorithms when
confronted to either abrupt changes or slow variations of the
parameters. The analysis of the previous section, suggests that
$\DLinUCB$ should behave properly in both situations. We then consider
a more realistic scenario in Section~\ref{sec:criteo_data}, where the
contexts are high-dimensional and extracted from a data set of actual
user interactions with a web service.

For benchmarking purposes, we compare $\DLinUCB$ to the Dynamic Linear
Upper Confidence Bound ($\dLinUCB$) algorithm proposed by
\cite{wu2018learning} and with the Sliding Window Linear UCB
($\SWLinUCB$) of \cite{cheung2018learning}. The principle of the
$\dLinUCB$ algorithm is that a master bandit algorithm is in charge of
choosing the best $\LinUCB$ slave bandit for making the
recommendation. Each slave model is built to run in each one of the
different environments.  The choice of the slave model is based on a
lower confidence bound for the so-called \textit{badness} of the
different models. The badness is defined as the number of times the
expected reward was found to be far enough from the actual observed
reward on the last $\tau$ steps, where $\tau$ is a parameter of the
algorithm.  When a slave is chosen, the action proposed to a user is
the result of the $\LinUCB$ algorithm associated with this slave. When
the action is made, all the slave models that were good enough are
updated and the models whose badness were too high are deleted from
the pool of slaves models.  If none of the slaves were found to be sufficiently
good, a new slave is added to the pool.

The other algorithm that we use for comparison is $\SWLinUCB$, as
presented in \cite{cheung2018learning}. Rather than using
exponentially increasing weights, a hard threshold is adopted. Indeed,
the actions and rewards included in the $\length$-length 
sliding window
are used to estimate the linear regression coefficients. We expect
$\DLinUCB$ and $\SWLinUCB$ to behave similarly as they both may be
shown to have the same sort of regret guarantees (see appendix).

In the case of abrupt changes, we also compare these algorithms to the
Oracle Restart LinUCB ($\OmniLinUCB$) strategy that would know the
change-points and simply restart, after each change, a new instance of the
$\LinUCB$ algorithm. The regret of this strategy may be seen as an empirical lower
bound on the optimal behavior of an online learning algorithm in
abruptly changing environments.

In the following figures, the vertical red dashed lines correspond to
the change-points (in abrupt changes scenarios). They are represented
to ease the understanding but except for $\OmniLinUCB$, they are of
course unknown to the learning algorithms.
When applicable, the blue
dashed lines correspond to the average detection time of the breakpoints
with the $\dLinUCB$ algorithm.  For $\DLinUCB$ the discount parameter 
is chosen as
$\gamma = 1 - (\frac{B_T}{dT})^{2/3}$.  For $\SWLinUCB$ the window's
length is set to $\length = (\frac{dT}{B_T})^{2/3}$, where $d=2$ in the experiment.  Those values are
theoretically supposed to minimize the asymptotic regret.  For the
Dynamic Linear UCB algorithm, the badness is estimated from
$\tau = 200$ steps, as in the experimental section of \cite{wu2018learning}.

\subsection{Synthetic data in abruptly-changing or slowly-varying scenarios}

\begin{figure}[!h]
        \centering
        \begin{tabular}{cc}
            \includegraphics[width=0.355\textwidth]{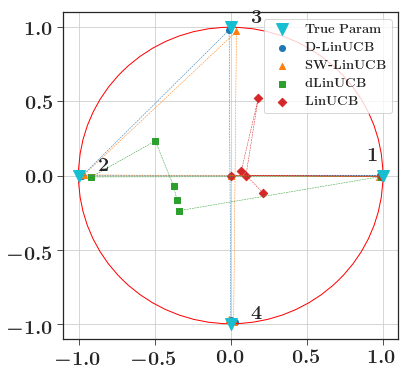} & \hspace*{0.9em} \includegraphics[width=0.355\textwidth]{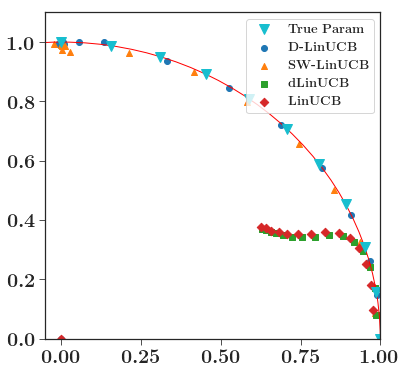}\\
            \includegraphics[width=0.37\textwidth]{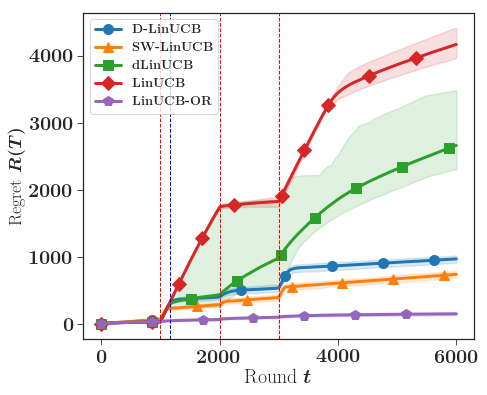} & \includegraphics[width=0.365\textwidth]{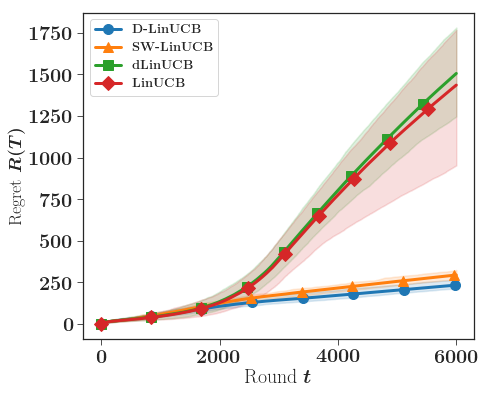}  \\
        \end{tabular}
        \caption{Performances of the algorithms in the abruptly-changing environment (on the left), and, the slowly-varying environment (on the right). 
        The upper plots correspond to the estimated parameter and the lower ones to the accumulated regret, both are averaged on $N=100$ 
        independent experiments}
        \label{tbl:breakpoints_expes}
\end{figure}

In this first experiment, we observe the empirical
performance of all algorithms in an abruptly changing environment of
dimension 2 with 3 breakpoints. The number of rounds is set to
$T=6000$. The light blue triangles correspond to the different
positions of the true unknown parameter $\theta^{\star}_t$: before
$t=1000$, $\theta^{\star}_t = (1,0)$; for
$t \in [\![ 1000,2000]\!], \theta^{\star}_t = (-1,0)$; for
$t \in [\![ 2000,3000]\!], \theta^{\star}_t = (0,1)$; and, finally,
for $t > 3000, \theta^{\star}_t = (0,-1)$.  This corresponds to a hard
problem as the sequence of parameters is widely spread in the unit
ball.  Indeed it forces the algorithm to adapt to big changes, which
typically requires a longer adaptation phase. On the other hand, it
makes the detection of changes easier, which is an advantage for
$\dLinUCB$. In the second half of the experiment (when $t\geq 3000$)
there is no change, $\LinUCB$ struggles to
catch up and suffers linear regret for long periods after the last
change-point.  The results of our simulations are shown in
the left column of Figure~\ref{tbl:breakpoints_expes}.  On the top row
we show a 2-dimensional scatter plot of the estimate of the unknown
parameters $\hat{\theta}_t$ every 1000 steps averaged on
 100 independent experiment. The bottom row
corresponds to the regret averaged over 100 independent
 experiments with the upper and
the lower $5\%$ quantiles.
In this environment, with $1$-subgaussian random noise, $\dLinUCB$
struggles to detect the change-points. Over the 100 experiments, the
first change-point was detected in $95\%$ of the runs, the second
 was never detected and the
third only in $6\%$ of the runs, thus limiting the effectiveness of the
$\dLinUCB$ approach. When decreasing the variance of the noise,
the performance of $\dLinUCB$ improves and gets closer to
 the performance of
the oracle restart strategy $\OmniLinUCB$. It is worth noting that for
both $\SWLinUCB$ and $\DLinUCB$, the estimator $\hat{\theta}_t$ adapts
itself to non-stationarity and is able to follow $\theta^{\star}_t$
(with some delay), as shown on the scatter plot. Predictably,
$\OmniLinUCB$ achieves the best performance by restarting exactly
whenever a change-point happens.
 
The second experiment corresponds to a slowly-changing environment.
It is easier for $\LinUCB$ to keep up with the adaptive policies in
this scenario.  Here, the parameter $\theta^{\star}_t$ starts at
$(1$ and moves continuously counter-clockwise on the unit-circle up to
the position $[0,1]$ in 3000 steps. We then have a steady period of
3000 steps. For
this sequence of parameters,
$B_T = \sum_{t=1}^{T-1} \lVert \theta^{\star}_t -
\theta^{\star}_{t+1}\rVert_2 = 1.57$.  The results are reported in the
right column of Figure~\ref{tbl:breakpoints_expes}.  Unsurprisingly,
$\dLinUCB$ does not detect any change and thus displays the same
performance as $\LinUCB$. $\SWLinUCB$ and $\DLinUCB$ behaves similarly
and are both robust to such an evolution in the regression
parameters. The performance of $\OmniLinUCB$ is not reported here, as
restarting becomes ineffective when the changes are too frequent
(here, during the first 3000 time steps, there is a change at every
single step). The scatter plot
also gives interesting information: $\hat{\theta}_t$ tracks
$\theta^{\star}_t$ quite effectively for both $\SWLinUCB$ and
$\DLinUCB$ but the two others algorithms lag behind. $\LinUCB$ will
eventually catch up if the length of the stationary period becomes
larger.

\subsection{Simulation based on a real dataset}
\label{sec:criteo_data}

\begin{figure}[hbt]
        \centering
        \includegraphics[width=0.41\textwidth]{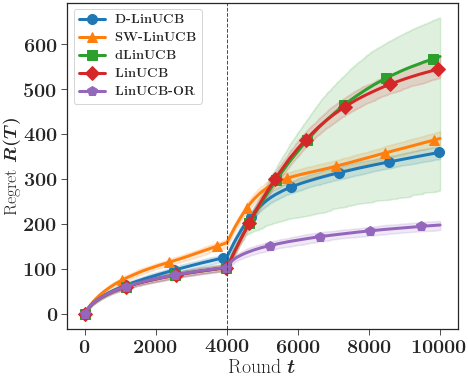}
        \caption{Behavior of the different algorithms on large-dimensional data}
        \label{breakpoints_criteo}
\end{figure}

$\DLinUCB$ also performs well in high-dimensional space ($d=50$). For this
experiment, a dataset
providing a sample of 30 days of Criteo live traffic data \cite{DiemertMeynet2017} was
used. It contains banners that were displayed to different users and
contextual variables, including the information of whether the banner
was clicked or not. We kept the categorical variables $cat1$ to $cat9$
, together  with the variable $campaign$, which is a unique identifier of each campaign. 
Beforehand, these contexts have been  one-hot encoded and $50$ of
the resulting features have been selected using a Singular Value Decomposition. 
$\theta^{\star}$ is obtained by
linear regression. The rewards are then simulated using the regression
model with an additional Gaussian noise of variance $\sigma^2 = 0.15$. At each time step, the different
 algorithms have the choice
between two 50-dimensional contexts drawn at random from two
separate pools of $10000$ contexts corresponding, respectively, to
clicked or not clicked banners. The non-stationarity is created by 
switching $60\%$ of $\theta^{\star}$ coordinates
to $-\theta^{\star}$ at time $4000$, corresponding to
a partial class inversion.  The
cumulative dynamic regret is then averaged over 100 independent
replications. The results are shown on
Figure~\ref{breakpoints_criteo}.  In the first stationary period,
$\LinUCB$ and $\dLinUCB$ perform better than the
adaptive policies by using all available data, whereas the adaptive
 policies only use the most recent events. 
 After the breakpoint, $\LinUCB$ suffers a large
regret, as the algorithm fails to adapt to the new
environment. In this experiment,  $\dLinUCB$ does not detect the 
change-point systematically 
and performs similarly as $\LinUCB$ on average, it can still 
outperform adaptive policies from time to 
time when the breakpoint is detected as can be seen 
with the $5\%$ quantile. $\DLinUCB$ and $\SWLinUCB$ adapt 
more quickly to the change-point 
and perform 
significantly better than the non-adaptive policies after the breakpoint. 
Of course, the oracle policy 
$\OmniLinUCB$ is the best performing policy. The take-away message is 
that there is no free lunch: in a stationary period by using only the 
most recent events
 $\SWLinUCB$ and $\DLinUCB$ do not perform as good as a policy 
 that uses all the available information. 
 Nevertheless, after a breakpoint, the recovery is much faster with 
 the adaptive policies.


\bibliographystyle{abbrvnat}
\bibliography{refs.bib}

\newpage
\appendix

\section*{Appendix}

\section{Confidence Bounds for Weighted Linear Bandits}
\label{Appendix:confidence_bounds}
\subsection{Preliminary results}
In this section we give the main results for obtaining Theorem \ref{theorem_deviation_weighted_sequential}. 
For the sake of conciseness all the results will be stated with $\sigma$-subgaussian noises but the proofs will 
be done with the particular value of $\sigma = 1$. 
The model we consider is the one defined by equation (\ref{eq:reward_generation}), where we recall that $(\eta_s)_s$ is, 
conditionally on the past, a sequence of $\sigma$-subgaussian random noises. The results of this section are close
to the one proposed in \citep{abbasi2011improved} but our results are valid with a sequence of predictable weights.

We introduce the quantity $S_t = \sum_{s=1}^t w_s A_s \eta_s$ and $\tV_t = \sum_{s=1}^t w_s^2 A_s A_s^{\top} + \mu_t I_d$. 
When the regularization term is omitted, let $\tV_t(0) = \sum_{s=1}^t w_s^2 A_s A_s^{\top}$. 
The filtration associated with the random observations is denoted $\mathcal{F}_t =\sigma(X_1,\dots,X_t)$ 
such that $A_t$ is $\mathcal{F}_{t-1}$-measurable and $\eta_t$ is $\mathcal{F}_t$-measurable. 
The weights are also assumed to be predictable. The following lemma is an extension 
to the weighted case of Lemma 8 of \citep{abbasi2011improved}.
\begin{lemma}
\label{lemma:exp_upper}
Let $(w_t)_{t \geq 1}$ be a sequence of predictable and positive weights. Let $x \in \mathbb{R}^d$ be arbitrary and consider for any $t \geq 1$
$$
M_t(x)  = \exp\left(\frac{1}{\sigma}x^{\top} S_t - \frac{1}{2} x^{\top} \tV_t(0) x \right).
$$ 

Let $\tau$ be a stopping time with respect to the filtration $\{\mathcal{F}_t\}_{t=0}^{\infty}$. Then $M_{\tau}(x)$ is almost surely well-defined and 
$$
\forall x \in \mathbb{R}^d, \mathbb{E}\lbrack M_{\tau}(x)\rbrack \leq 1.
$$
\end{lemma}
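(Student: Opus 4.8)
The plan is to recognize $M_t(x)$ as a product of conditionally sub-unit factors, deduce that it is a nonnegative supermartingale, and then conclude by optional stopping combined with Fatou's lemma — this is the weighted analogue of the method-of-mixtures building block from \citep{abbasi2011improved}.

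First I would set, for $s\geq 1$,
$$
D_s = \exp\left(\frac{1}{\sigma} w_s \langle x, A_s\rangle \eta_s - \frac{1}{2} w_s^2 \langle x, A_s\rangle^2\right),
$$
with the convention $M_0(x)=1$. Expanding the definitions of $S_t=\sum_{s=1}^t w_s A_s \eta_s$ and $\tV_t(0)=\sum_{s=1}^t w_s^2 A_s A_s^{\top}$ shows that $M_t(x)=\prod_{s=1}^t D_s$, hence $M_t(x)=M_{t-1}(x)\,D_t$. Since $w_s$ and $A_s$ are $\mathcal{F}_{s-1}$-measurable while $\eta_s$ is, conditionally on $\mathcal{F}_{s-1}$, $\sigma$-subgaussian, applying the subgaussian MGF bound with the $\mathcal{F}_{s-1}$-measurable scalar $\lambda = w_s\langle x, A_s\rangle/\sigma$ gives $\mathbb{E}[\exp(\lambda\eta_s)\mid \mathcal{F}_{s-1}]\leq \exp(\sigma^2\lambda^2/2)$, that is $\mathbb{E}[D_s\mid\mathcal{F}_{s-1}]\leq 1$. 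Because $M_{t-1}(x)\geq 0$ is $\mathcal{F}_{t-1}$-measurable, this yields $\mathbb{E}[M_t(x)\mid\mathcal{F}_{t-1}]=M_{t-1}(x)\,\mathbb{E}[D_t\mid\mathcal{F}_{t-1}]\leq M_{t-1}(x)$, so $(M_t(x))_{t\geq 0}$ is a nonnegative supermartingale with $\mathbb{E}[M_t(x)]\leq 1$ for every $t$.

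It remains to make sense of $M_\tau(x)$ when $\tau$ may be infinite and to bound its expectation. Since a nonnegative supermartingale converges almost surely to a finite limit by the martingale convergence theorem, $M_\infty(x)=\lim_t M_t(x)$ exists a.s., so $M_\tau(x)$ is a.s. well-defined (as $M_\infty(x)$ on $\{\tau=\infty\}$, as $M_{\tau}(x)$ pathwise on $\{\tau<\infty\}$). For the expectation I would pass to the stopped process $(M_{t\wedge\tau}(x))_{t\geq 0}$, which is again a supermartingale, so $\mathbb{E}[M_{t\wedge\tau}(x)]\leq \mathbb{E}[M_0(x)]=1$ for all $t$; letting $t\to\infty$, $M_{t\wedge\tau}(x)\to M_\tau(x)$ almost surely, and Fatou's lemma gives $\mathbb{E}[M_\tau(x)]\leq\liminf_t \mathbb{E}[M_{t\wedge\tau}(x)]\leq 1$.

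The only delicate points are bookkeeping: checking that the subgaussian inequality may legitimately be invoked with the random but $\mathcal{F}_{s-1}$-measurable coefficient $w_s\langle x,A_s\rangle$ — this is the single place where predictability of the weights is used and the one spot where the argument genuinely departs from the unweighted case — and handling $\{\tau=\infty\}$ through almost-sure convergence of the nonnegative supermartingale rather than assuming $\tau<\infty$. Neither step requires computation beyond what is indicated above.
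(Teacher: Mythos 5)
Your proof is correct and follows essentially the same route as the paper's: establish the supermartingale property via the conditional subgaussian moment-generating-function bound (using predictability of $w_t$ and $A_t$), invoke the nonnegative-supermartingale convergence theorem to make $M_\tau(x)$ well-defined on $\{\tau=\infty\}$, and conclude with the stopped process and Fatou's lemma. The only cosmetic difference is that you carry the general $\sigma$ explicitly through the subgaussian bound, whereas the paper normalizes to $\sigma=1$ in its proofs.
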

\begin{proof}
First, we prove that $\forall x \in \mathbb{R}^d, (M_t(x))_{t=0}^{\infty}$ is a super-martingale. 

Let $x \in \mathbb{R}^d$,
\begin{align*}
\mathbb{E}\lbrack M_t(x)| \mathcal{F}_{t-1}\rbrack &= \mathbb{E}\left\lbrack \exp\left( x^{\top} S_{t-1}+ x^{\top} w_t A_t \eta_t -   
1/2 x^{\top} ( \tV_{t-1}(0) + w_t^2 A_t A_t^{\top}) x
  \right)| \mathcal{F}_{t-1}\right\rbrack \\
  &= M_{t-1}(x) \mathbb{E}\left \lbrack \exp( x^{\top}  w_t A_t \eta_t -\frac{1}{2} w_t^2 x^{\top} A_t A_t^{\top} x)| \mathcal{F}_{t-1} \right\rbrack  \\
  &= M_{t-1}(x) \exp(-\frac{1}{2} w_t^2 x^{\top} A_t A_t^{\top} x) \mathbb{E}\left \lbrack \exp( x^{\top}  w_t A_t \eta_t)| \mathcal{F}_{t-1} \right\rbrack \\
  & \leq M_{t-1}(x) \exp(-\frac{1}{2} w_t^2 x^{\top} A_t A_t^{\top} x) \exp(1/2 w_t^2 (x^{\top} A_t )^2) \\
  & = M_{t-1}(x).
\end{align*}
The second equality comes from the fact that $S_{t-1}$ and $\tV_{t-1}$ are $\mathcal{F}_{t-1}$-measurable. 
The inequality is the definition of the conditional $1$-subgaussianity where we also use the $\mathcal{F}_{t-1}$-measurability of $w_t$.

Using this supermartingale property, we have $\mathbb{E}\lbrack M_t(x) \rbrack \leq 1$. 
The convergence theorem for non-negative supermartingales ensures that  $M_{\infty}(x) = \lim_{t \to \infty} M_t(x)$ is almost surely well defined. 
By introducing the stopped supermartingale $\mathcal{M}_t(x) = M_{\min(t,\tau)}(x)$, we have $M_\tau(x) = \lim_{t \to \infty} \mathcal{M}_t(x)$.
Knowing that $\mathcal{M}_t(x)$ is also a supermartingale, we have
$$
\mathbb{E} \lbrack \mathcal{M}_t(x) \rbrack = \mathbb{E} \lbrack M_{\min(t,\tau)}(x) \rbrack \leq \mathbb{E} \lbrack M_{\min(0,\tau)}(x) \rbrack = \mathbb{E} \lbrack M_{0}(x) \rbrack  = 1.
$$
By using Fatou's lemma:
$$
\mathbb{E} \lbrack M_{\tau}(x) \rbrack = \mathbb{E} \lbrack \liminf_{t\to \infty} \mathcal{M}_{t}(x) \rbrack \leq \liminf_{t \to \infty}{\mathbb{E}\lbrack \mathcal{M}_{t}(x) \rbrack} \leq 1.
$$
\end{proof}
In the next lemma, we will integrate $M_t(x)$ with respect to a time-dependent probability measure. 
This is the key for allowing sequential regularizations in the concentration inequality stated in Theorem \ref{theorem_deviation_weighted_sequential}.
 This lemma is inspired by the method of mixtures first presented in \cite{pena2008self}.
The idea of using time-varying probability measures is inspired from the proof of Theorem 11 in \cite{kirschner2018information}. 
The two following lemmas are included in the appendix so that the article is self-contained. There are not a mere consequence of the results in
 \cite{abbasi2011improved} because of the time-dependent regularization parameters. As explained in Section \ref{sec:non-stationary}, this is unavoidable when using exponential weights to avoid the
vanishing effect of the regularization. 
\begin{lemma}
\label{lemma:tilde}
Let $(h_t)_t$  be a sequence of probability measures on $\mathbb{R}^d$. We define $\widetilde{M}_t = \int_{\mathbb{R}^d} M_t(x) d h_t(x)$. Then,
$$
\forall t, \mathbb{E} \lbrack \widetilde{M}_t \rbrack \leq 1
$$
\end{lemma}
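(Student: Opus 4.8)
The plan is to interchange the expectation over the noise with the integration against the (deterministic) measure $h_t$, and then apply Lemma~\ref{lemma:exp_upper}. Fix $t \geq 1$. Since $h_t$ is a fixed probability measure on $\mathbb{R}^d$, it carries no dependence on the randomness of $(A_s, w_s, \eta_s)_{s\leq t}$, and this is what makes the interchange legitimate.

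First I would record measurability and nonnegativity: the map $(x,\omega) \mapsto M_t(x)(\omega) = \exp\bigl(\tfrac{1}{\sigma} x^{\top} S_t(\omega) - \tfrac{1}{2} x^{\top} \tV_t(0)(\omega) x\bigr)$ is jointly measurable, being a continuous function of $x$ together with the $\mathcal{F}_t$-measurable random vector $S_t$ and random matrix $\tV_t(0)$, and it is everywhere nonnegative. Tonelli's theorem then yields
$$
\mathbb{E}\lbrack \widetilde{M}_t \rbrack = \mathbb{E}\left\lbrack \int_{\mathbb{R}^d} M_t(x)\, dh_t(x) \right\rbrack = \int_{\mathbb{R}^d} \mathbb{E}\lbrack M_t(x) \rbrack\, dh_t(x).
$$

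Next I invoke Lemma~\ref{lemma:exp_upper} with the (deterministic, hence valid) stopping time $\tau \equiv t$: for every fixed $x \in \mathbb{R}^d$ we get $\mathbb{E}\lbrack M_t(x) \rbrack \leq 1$ (equivalently, one may directly use the supermartingale property of $(M_s(x))_{s\geq 0}$ established in the proof of that lemma together with $M_0(x) = 1$). Substituting this bound and using that $h_t$ is a probability measure gives
$$
\mathbb{E}\lbrack \widetilde{M}_t \rbrack \leq \int_{\mathbb{R}^d} 1\, dh_t(x) = 1,
$$
which holds for every $t$ and is exactly the claim.

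The only genuinely delicate point is the application of Tonelli, i.e. checking joint measurability of $(x,\omega) \mapsto M_t(x)$ and, crucially, that $h_t$ is deterministic so that $\int M_t(x)\,dh_t(x)$ is a well-defined random variable whose expectation commutes with the integral; once this is granted, the result is an immediate consequence of Lemma~\ref{lemma:exp_upper}. I expect this measurability/Tonelli step to be the only real obstacle, everything else being a one-line substitution.
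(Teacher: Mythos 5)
Your proof is correct and follows essentially the same route as the paper's: interchange the expectation with the integral against $h_t$ (the paper cites Fubini where you more carefully invoke Tonelli via nonnegativity and joint measurability), apply Lemma~\ref{lemma:exp_upper} pointwise to get $\mathbb{E}\lbrack M_t(x)\rbrack \leq 1$, and conclude since $h_t$ is a probability measure. The extra care on the measurability of $(x,\omega)\mapsto M_t(x)$ is a welcome refinement but not a different argument.
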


\begin{proof}
\begin{align*}
    \mathbb{E}\lbrack \widetilde{M}_t \rbrack = \int \widetilde{M}_t \, d \mathbb{P}    & = \int \left(\int_{\mathbb{R}^d} M_t(x) dh_t(x) \right) d\mathbb{P} \\
    &= \int_{\mathbb{R}^d} \left(\int M_t(x) d\mathbb{P} \right) dh_t(x) \quad \textnormal{(Fubini's theorem)} \\
    &= \int_{\mathbb{R}^d} \mathbb{E} \lbrack M_t(x)\rbrack dh_t(x) \\ 
    &\leq \int_{\mathbb{R}^d} dh_t(x) \quad \textnormal{(Lemma \ref{lemma:exp_upper})} \\
    &\leq 1.   \quad \textnormal{($h_t$ probability measure.)}
 \end{align*}
\end{proof}
Lemma \ref{lemma:tilde} is a warm-up for the next lemma and is helpful for understanding why Lemma \ref{lemma_stopped_evolving} holds. It is valid for any fixed time $t$.
 The next step is to give its equivalent in a stopped version in the specific case of gaussian random vectors.
\begin{lemma}
\label{lemma_stopped_evolving}
Let $(\mu_t)_t$ be a deterministic sequence of regularization parameters. Let $\mathcal{F}_{\infty}= \sigma\left( \cup_{t=1}^{\infty} \mathcal{F}_t \right)$ be the tail $\sigma$-algebra of the filtration
 $(\mathcal{F}_t)_t$. Let $X = (X_t)_{t \geq 1}$ be an independent sequence of gaussian 
 random vectors such that $X_t \sim \mathcal{N}(0, \frac{1}{\mu_t}I_d) = h_t$ with $X$ 
 independent of $\mathcal{F}_{\infty}$.
We define
$$
\bar{M}_t(\mu_t) = \mathbb{E} \lbrack M_t(X_t) | \mathcal{F}_{\infty} \rbrack  = \int_{\mathbb{R}^d} M_t(x) f_{\mu_t}(x) dx,
$$
where  $f_{\mu_t}$ is the probability density function associated with $h_t$ defined as,
$$ f_{\mu_t}(x) = \frac{1}{\sqrt{(2\pi)^d \det(1/\mu_t I_d)}}\exp(-\frac{\mu_t x^{\top}x}{2}).$$ 
Let  $\tau$ be a stopping time with respect to the filtration $(\mathcal{F}_t)_t$ then,
$$
\mathbb{E} \lbrack \bar{M}_{\tau}(\mu_{\tau}) \rbrack \leq 1.
$$
\end{lemma}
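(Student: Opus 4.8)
The plan is to adapt the proof of Lemma~\ref{lemma:exp_upper} (exponential supermartingale $\to$ stopped supermartingale $\to$ Fatou) from the single pointwise exponential $M_t(x)$ to its Gaussian mixture $\bar M_t(\mu_t)$, using the auxiliary sequence $X=(X_t)_{t\ge1}$ precisely to keep the time-varying mixing measure $h_t$ under control. I would first enlarge the filtration, setting $\mathcal{G}_t := \sigma(\mathcal{F}_t, X_1,\dots,X_t)$, so that at round $t$ both the observation and the fresh Gaussian draw $X_t\sim h_t$ become available. Since $X$ is independent of $\mathcal{F}_\infty$, on this larger filtration $\tau$ is still a stopping time (as $\{\tau=t\}\in\mathcal{F}_t\subseteq\mathcal{G}_t$), $A_t$ and $w_t$ are still $\mathcal{G}_{t-1}$-measurable, $\eta_t$ is still conditionally $\sigma$-subgaussian given $\mathcal{G}_{t-1}$, and $\bar M_t(\mu_t)=\mathbb{E}[M_t(X_t)\mid\mathcal{F}_\infty]$ is $\mathcal{F}_t$- hence $\mathcal{G}_t$-measurable.

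Next I would form the stopped mixture $\bar{\mathcal{M}}_t := \bar M_{t\wedge\tau}(\mu_{t\wedge\tau})$ and argue that it is a non-negative $(\mathcal{G}_t)$-supermartingale with $\mathbb{E}[\bar{\mathcal{M}}_1]\le 1$ (the latter being Lemma~\ref{lemma:tilde} with $t=1$). On $\{\tau\le t-1\}\in\mathcal{G}_{t-1}$ the process is frozen, so nothing is needed there. On $\{\tau\ge t\}\in\mathcal{G}_{t-1}$ one has $\bar{\mathcal{M}}_t=\bar M_t(\mu_t)$, and I would use that, since $\bar M_t(\mu_t)=\mathbb{E}[M_t(X_t)\mid\mathcal{F}_t]$ and $X_t$ is independent of $\mathcal{F}_\infty$, $\mathbb{E}[\bar M_t(\mu_t)\mid\mathcal{G}_{t-1}]=\mathbb{E}[M_t(X_t)\mid\mathcal{G}_{t-1}]$; then one peels off round $t$ exactly as in Lemma~\ref{lemma:exp_upper}, via $\mathbb{E}[\exp(X_t^{\top}w_tA_t\eta_t)\mid\mathcal{G}_{t-1}]\le\exp(\tfrac12 w_t^2(X_t^{\top}A_t)^2)$, and finally integrates out the fresh draw $X_t$ against $h_t$ as in Lemma~\ref{lemma:tilde}. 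This is the step at which the choice of the sequence $(h_t)_t$, together with the order in which the $X_t$ are revealed on $(\mathcal{G}_t)$, must be arranged so that what comes out is bounded by $\bar{\mathcal{M}}_{t-1}=\bar M_{t-1}(\mu_{t-1})$ and not merely by $\bar M_{t-1}(\mu_t)$; this is the device borrowed from the proof of Theorem~11 of \cite{kirschner2018information}.

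Granting the supermartingale property, $\mathbb{E}[\bar{\mathcal{M}}_t]\le 1$ for all $t$; being non-negative, $\bar{\mathcal{M}}_t$ converges almost surely, its limit equals $\bar M_\tau(\mu_\tau)$ on $\{\tau<\infty\}$ (and on $\{\tau=\infty\}$ one reads $\bar M_\tau(\mu_\tau)$ as that limit), so Fatou's lemma gives $\mathbb{E}[\bar M_\tau(\mu_\tau)]=\mathbb{E}[\liminf_t\bar{\mathcal{M}}_t]\le\liminf_t\mathbb{E}[\bar{\mathcal{M}}_t]\le 1$, which is the claim.

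The obstacle I expect to dominate is exactly the supermartingale inequality on $\{\tau\ge t\}$ when $h_{t-1}\ne h_t$: a fixed mixing measure makes it immediate (Lemma~\ref{lemma:tilde} plus optional stopping on the pointwise supermartingales $M_t(x)$), but with a genuinely time-varying measure the naive peeling yields $\bar M_{t-1}(\mu_t)$, not $\bar M_{t-1}(\mu_{t-1})$, and one must argue that replacing $h_{t-1}$ by $h_t$ at a not-yet-stopped round does not increase the conditional expectation. This is precisely where revealing the auxiliary draws $X_t$ incrementally on $(\mathcal{G}_t)$ is essential --- it lets the randomness responsible for the change of measure be absorbed inside the same conditioning step that exploits the sub-Gaussianity of $\eta_t$ --- and where one must be careful that no hidden monotonicity of $\mu\mapsto\bar M_t(\mu)$ is being assumed.
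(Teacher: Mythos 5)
You have put your finger on exactly the right difficulty, but your proposed device does not overcome it, and the step you flag as the dominant obstacle is in fact fatal to the supermartingale route. Your plan is to show that the stopped mixture $\bar M_{t\wedge\tau}(\mu_{t\wedge\tau})$ is a supermartingale for the enlarged filtration $\mathcal{G}_t=\sigma(\mathcal{F}_t,X_1,\dots,X_t)$; as you note, peeling off round $t$ and integrating the fresh draw against $h_t$ yields $\int M_{t-1}(x)\,dh_t(x)=\bar M_{t-1}(\mu_t)$, and one would still need $\bar M_{t-1}(\mu_t)\le\bar M_{t-1}(\mu_{t-1})$. No ordering of the revelation of the $X_t$'s can supply this: the map $\mu\mapsto\bar M_{t-1}(\mu)=\exp\bigl(\tfrac12\lVert S_{t-1}\rVert^2_{(\tV_{t-1}(0)+\mu I_d)^{-1}}\bigr)\sqrt{\mu^d/\det(\tV_{t-1}(0)+\mu I_d)}$ is increasing in $\mu$ on the event that $\lVert S_{t-1}\rVert$ is small, so for an increasing sequence $(\mu_t)$ (which is precisely the regime used by $\DLinUCB$, where $\mu_t=\lambda\gamma^{-2t}$) the stopped mixture is genuinely not a supermartingale. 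Worse, the asserted conclusion can fail for a general stopping time: take $d=1$, $A_s\equiv1$, $w_s\equiv1$, $\eta_s$ i.i.d.\ standard normal, $\mu_1=1$, $\mu_t=100$ for $t\ge2$, and $\tau=1$ on $\{\eta_1>1\}$, $\tau=2$ otherwise. Fubini and the Gaussian moment generating function give
$$
\mathds{E}\bigl[\bar M_\tau(\mu_\tau)\bigr]=\Phi\Bigl(-\tfrac{1}{\sqrt{1+1/\mu_1}}\Bigr)+\Phi\Bigl(\tfrac{1}{\sqrt{1+1/\mu_2}}\Bigr)\approx 1.08>1,
$$
where $\Phi$ is the standard normal c.d.f. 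Any complete argument must therefore either exploit the particular stopping time actually used in Proposition~\ref{prop:deviation} (the first crossing of the threshold by $\bar M_t(\mu_t)$ itself) or restrict the sequence $(\mu_t)$; a proof for arbitrary stopping times, which is what your sketch (and the lemma) aims at, cannot go through.

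For comparison, the paper's own proof does not attempt the supermartingale property of the mixture at all: it writes $\mathds{E}[\bar M_\tau(\mu_\tau)]=\mathds{E}[M_\tau(X_\tau)]$, conditions on the \emph{entire} auxiliary sequence $(X_t)_{t\ge1}$ in a single step, and then asserts that the conditional integrand ``is of the form $M_\tau(x)$ with a fixed $x$'' so that Lemma~\ref{lemma:exp_upper} applies. This is where the same difficulty reappears in disguise: conditionally on $(X_t)_t=(x_t)_t$ the integrand is $M_\tau(x_\tau)$, whose argument moves with $\tau$, and $t\mapsto M_t(x_t)$ is not a supermartingale for a general deterministic sequence $(x_t)$ (one only gets $\mathds{E}[M_t(x_t)\mid\mathcal{F}_{t-1}]\le M_{t-1}(x_t)$, the same mismatch you identified). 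So your instinct that something substantive is missing is correct; the gap you flagged is the gap, it is not closed by the incremental-revelation device you borrow from \cite{kirschner2018information}, and closing it requires either specializing the stopping time or an additional assumption on $(\mu_t)$.
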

\begin{proof}
We can use the result of Lemma \ref{lemma:exp_upper} which gives $\forall x \in \mathbb{R}^d , \, \mathbb{E} \lbrack M_{\tau}(x)\rbrack \leq 1$.

We have,
\begin{align*}
    \mathbb{E} \lbrack \bar{M}_{\tau}(\mu_{\tau}) \rbrack &= \mathbb{E} \lbrack \mathbb{E} \lbrack M_{\tau}(X_{\tau})| \mathcal{F}_{\infty}\rbrack \rbrack 
    =  \mathbb{E} \lbrack \mathbb{E} \lbrack \mathbb{E} \lbrack M_{\tau}(X_{\tau})| \mathcal{F}_{\infty}\rbrack | (X_t)_{t\geq 1} \rbrack \rbrack \\
    & = \mathbb{E} \lbrack \mathbb{E} \lbrack \mathbb{E} \lbrack M_{\tau}(X_{\tau})| (X_t)_{t\geq 1} \rbrack | \mathcal{F}_{\infty} \rbrack \rbrack  
    \leq 1.
\end{align*}

The inequality is a consequence of Lemma \ref{lemma:exp_upper} as, 
conditionally to the sequence  $(X_t)_t$, $M_{\tau}(X_{\tau})$ is of the 
form $M_{\tau}(x)$ with a fixed $x$.
\end{proof}

We finally state the main result needed to obtain Theorem \ref{theorem_deviation_weighted_sequential}.
\begin{proposition} For $(w_s)_{s\geq 1}$ a sequence of predictable and positive weights,  $\forall  \delta > 0 $, the following deviation inequality holds
\label{prop:deviation}
\begin{align*}
\mathbb{P}\left(\exists t \geq 0,\lVert S_t \rVert_{\widetilde{V}_t^{-1}}\geq  \sigma\sqrt{ 2\log\left(\frac{1}{\delta}\right) +
 \log\left(\frac{\det(\widetilde{V}_t)}{\mu_t^d}\right)}   \right) \leq \delta.
\end{align*}
\end{proposition}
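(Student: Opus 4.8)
The plan is to combine the explicit Gaussian mixture of the martingale $M_t$ with a stopping-time argument, in the spirit of the method of mixtures but accommodating the time-varying regularization $\mu_t$. Throughout I take $\sigma = 1$, as in the rest of this section.

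First I would evaluate $\bar{M}_t(\mu_t) = \int_{\mathbb{R}^d} M_t(x)\, f_{\mu_t}(x)\,dx$ in closed form. Since $M_t(x) = \exp\bigl(x^{\top} S_t - \tfrac12 x^{\top} \tV_t(0) x\bigr)$ and $f_{\mu_t}$ is the density of $\mathcal{N}(0,\mu_t^{-1} I_d)$, the integrand is the exponential of a concave quadratic in $x$; completing the square against the matrix $\tV_t = \tV_t(0) + \mu_t I_d$ and using the normalization of a multivariate Gaussian gives
$$
\bar{M}_t(\mu_t) = \left(\frac{\mu_t^d}{\det \tV_t}\right)^{1/2} \exp\left(\tfrac12 \lVert S_t \rVert_{\tV_t^{-1}}^2\right).
$$
This is the step where the $\log(\det \tV_t/\mu_t^d)$ term of the bound originates: because the mixing covariance is $\mu_t^{-1} I_d$ rather than a fixed matrix, the prefactor carries $\mu_t^d$ instead of a constant.

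Next I would introduce the stopping time $\tau = \inf\{t \geq 0 : \lVert S_t \rVert_{\tV_t^{-1}}^2 \geq 2\log(1/\delta) + \log(\det \tV_t/\mu_t^d)\}$, with $\tau = \infty$ if the set is empty; it is a stopping time for $(\mathcal{F}_t)$ since $S_t$ and $\tV_t$ are $\mathcal{F}_t$-measurable (the weights being predictable). On $\{\tau < \infty\}$, substituting the defining inequality of $\tau$ into the closed form above yields $\bar{M}_\tau(\mu_\tau) \geq 1/\delta$. Because $\bar{M}_\tau(\mu_\tau) \geq 0$ and, by Lemma~\ref{lemma_stopped_evolving}, $\mathbb{E}[\bar{M}_\tau(\mu_\tau)] \leq 1$, Markov's inequality gives
$$
\mathbb{P}(\tau < \infty) \leq \mathbb{P}\bigl(\bar{M}_\tau(\mu_\tau) \geq 1/\delta\bigr) \leq \delta\,\mathbb{E}[\bar{M}_\tau(\mu_\tau)] \leq \delta .
$$
Since the event in the statement is exactly $\{\tau < \infty\}$, this concludes the proof. (At $t=0$ one has $S_0 = 0$, so the inequality is vacuous there, consistent with the range $t \geq 0$.)

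\textbf{Main obstacle.} The delicate points are, first, carrying out the Gaussian integral rigorously with the $t$-dependent covariance, and, second, making sure $\bar{M}_\tau(\mu_\tau)$ is almost surely well-defined on $\{\tau = \infty\}$ so that Lemma~\ref{lemma_stopped_evolving} applies; the latter follows from the supermartingale convergence already used to prove Lemma~\ref{lemma:exp_upper}, and a clean alternative is to establish the bound for the truncated stopping time $\tau \wedge n$ and let $n \to \infty$ via Fatou's lemma. Everything else is routine algebra together with the supermartingale machinery of the preceding lemmas.
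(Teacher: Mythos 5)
Your proposal is correct and follows essentially the same route as the paper's proof: the explicit Gaussian integral yielding $\bar{M}_t(\mu_t) = \exp\bigl(\tfrac12\lVert S_t\rVert_{\tV_t^{-1}}^2\bigr)\sqrt{\mu_t^d/\det(\tV_t)}$, the same stopping time $\tau$, and the same application of Markov's inequality combined with $\mathbb{E}[\bar{M}_\tau(\mu_\tau)]\leq 1$ from Lemma~\ref{lemma_stopped_evolving}. Your remark about handling $\{\tau=\infty\}$ via the supermartingale convergence (or a truncation-and-Fatou argument) matches how the paper justifies the same step inside Lemmas~\ref{lemma:exp_upper} and~\ref{lemma_stopped_evolving}.
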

\begin{proof}
For a fixed $t$,
\begin{align*}
    \Bar{M}_{t}(\mu_t) &= \int_{\mathbb{R}^d} M_{t}(x) f_{\mu_t}(x) dx \\
    &= \frac{1}{\sqrt{(2 \pi)^d \det(1/\mu_t I_d)}}\int_{\mathbb{R}^d} \exp\left( x^{\top}S_{t} -
    \frac{1}{2}\Vert x \Vert_{\mu_t I_d}^2-\frac{1}{2}\Vert x \Vert_{\widetilde{V}_{t}(0)}^2 \right) dx \\
     &= \frac{1}{\sqrt{(2 \pi)^d \det(1/\mu_t I_d)}}\int_{\mathbb{R}^d} \exp\left( x^{\top}S_{t} 
   -\frac{1}{2}\Vert x \Vert_{\widetilde{V}_{t}}^2 \right) dx \\
    &= \frac{1}{\sqrt{(2 \pi)^d \det(1/\mu_t I_d)}}\int_{\mathbb{R}^d}\exp\left(\frac{1}{2}\Vert S_{t}\Vert_{\widetilde{V}_t^{-1}}^2 - 
    \frac{1}{2}\Vert x - \widetilde{V}_t^{-1}S_{t}\Vert_{\widetilde{V}_t}^2\right) dx \\
    &= \frac{\exp\left(\frac{1}{2}\Vert S_{t}\Vert_{\widetilde{V}_t^{-1}}^2\right)}{\sqrt{(2 \pi)^d \det(1/\mu_t I_d)}}\int_{\mathbb{R}^d}\exp\left( - 
    \frac{1}{2}\Vert x - \widetilde{V}_t^{-1}S_{t}
    \Vert_{\widetilde{V}_t}^2\right) dx \\
    &= \frac{\exp\left(\frac{1}{2}\Vert S_{t}\Vert_{\widetilde{V}_t^{-1}}^2\right)}{\sqrt{(2 \pi)^d \det(1/\mu_t I_d)}} \sqrt{(2\pi)^d
     \det\left(\widetilde{V}_t^{-1}\right)} \\
    &= \exp\left(\frac{1}{2}\Vert S_{t}\Vert_{\widetilde{V}_t^{-1}}^2\right) \sqrt{\frac{\det(\mu_t I_d)}{\det(\widetilde{V}_t)}}.
\end{align*}
We introduce the particular stopping time,
$$
\tau = \min \bigg\{t\geq 0,  \lVert S_{t} \rVert_{\widetilde{V}_{t}^{-1}} \geq \sqrt{2\log\left(\frac{1}{\delta}\right) + 
\log\left(\frac{\det(\tV_{t})}{\det(\mu_{t}I_d)}\right)} \bigg\}.
$$
Thus,
\begin{align*}
    &
    \mathbb{P}\left(\exists t \geq 0, \lVert S_{t} \rVert_{\widetilde{V}_{t}^{-1}} \geq \sqrt{2\log\left(\frac{1}{\delta}\right) + 
    \log\left(\frac{\det(\tV_{t})}{\det(\mu_{t}I_d)}\right)} \right) 
    =  \mathbb{P}(\tau < \infty) \\
    & \,\, 
    = \mathbb{P}\left( \tau < \infty, \lVert S_{\tau} \rVert_{\widetilde{V}_{\tau}^{-1}} \geq \sqrt{2\log\left(\frac{1}{\delta}\right) +
     \log\left(\frac{\det(\tV_{\tau})}{\det(\mu_{\tau}I_d)}\right)} \right) \\
    & \,\, 
     \leq \mathbb{P}\left(  \lVert S_{\tau} \rVert_{\widetilde{V}_{\tau}^{-1}} \geq \sqrt{2\log\left(\frac{1}{\delta}\right) + 
     \log\left(\frac{\det(\tV_{\tau})}{\det(\mu_{\tau}I_d)}\right)} \right) \\
    & \,\, 
    =  \mathbb{P}\left( \exp\left(\frac{1}{2}\Vert S_{\tau}\Vert_{\widetilde{V}_{\tau}^{-1}
    }^2\right) \sqrt{\frac{\det(\mu_{\tau} I_d)}{\det(\widetilde{V}_{\tau})}} \geq \frac{1}{\delta}\right) \\
     & \,\,
      \leq  \delta \mathbb{E}\lbrack \bar{M}_{\tau}(\mu_{\tau})\rbrack \,\, \textnormal{(Markov's inequality)}
     \leq \delta \, \, \textnormal{(Lemma \ref{lemma_stopped_evolving})}.
\end{align*}
\end{proof}
\subsection{Proof of Theorem \ref{theorem_deviation_weighted_sequential}}
We recall that Theorem \ref{theorem_deviation_weighted_sequential} is established in a stationary environment where 
$\forall t \geq 1, \, \theta^{\star}_t = \theta^{\star}$.
\begin{proof}
First note that, 
\begin{align*}
\hat{\theta}_t  &= V_t^{-1} \sum_{s=1}^t w_s A_s X_s\\
& = V_t^{-1} \sum_{s=1}^t w_s A_s (A_s^{\top} \theta^{\star}+ \eta_s) \quad (\textnormal{Equation}  \, \ref{eq:reward_generation}) \\
& =  V_t^{-1} \left(\sum_{s=1}^t w_s A_s A_s^{\top} \theta^{\star} + \lambda_t \theta^{\star} - \lambda_t \theta^{\star}\right) + V_t^{-1} S_t 
= \theta^{\star}   - \lambda_t  V_t^{-1} \theta^{\star} + V_t^{-1} S_t.
\end{align*}
Thus,
\begin{align}
\label{link_thetas}
\hat{\theta}_t - \theta^{\star} = V_t^{-1} S_t - \lambda_t  V_t^{-1} \theta^{\star}.
\end{align}
$\forall x \in \mathbb{R}^d, \forall t >0$, we have
\begin{align*}
\lvert x^{\top}(\hat{\theta}_t - \theta^{\star}) \rvert &\leq \lVert x \rVert_{V_t^{-1}\tV_tV_t^{-1}} \left( \lVert V_t^{-1} S_t \rVert_{V_t \tV_t^{-1} V_t}  +
\lVert \lambda_t V_t^{-1}\theta^{\star} \rVert_{V_t \tV_t^{-1} V_t}\right) \\ 
& \leq \lVert x \rVert_{V_t^{-1}\tV_tV_t^{-1}} \left( \lVert S_t \rVert_{ \tV_t^{-1}}  + \lambda_t \lVert  \theta^{\star} \rVert_{ \tV_t^{-1}}\right) .
\end{align*}
By applying the previous inequality with $ x = V_t \tV_t^{-1} V_t
 (\hat{\theta}_t- \theta^{\star})$, we have
\begin{align*}
\forall t, \lVert \hat{\theta}_t - \theta^{\star} \rVert_{V_t \widetilde{V}_t^{-1} V_t} \leq \lVert S_t \rVert_{\widetilde{V}_t^{-1}} +
 \lambda_t  \lVert \theta^{\star} \rVert_{\widetilde{V}_t^{-1}} .
\end{align*}
Knowing that $\widetilde{V}_t \geq \mu_t I_d$ and that $\tV_t $ is positive definite, 
we have $ \lVert \theta^{\star} \rVert_{\widetilde{V}_t^{-1}} \leq \frac{1}{\sqrt{\mu_t}}\lVert \theta^{\star}\rVert_2$.

Finally,
\begin{align}
\label{ineq_concentration}
    \forall t, \lVert \hat{\theta}_t - \theta^{\star} \rVert_{V_t \widetilde{V}_t^{-1} V_t} \leq \lVert S_t \rVert_{\widetilde{V}_t^{-1}} + 
    \frac{\lambda_t}{\sqrt{\mu_t}}  \lVert \theta^{\star} \rVert_{2}.
\end{align}
From Proposition \ref{prop:deviation}, we obtain the following any time high probability upper
 bound for $\lVert S_t\rVert_{\widetilde{V}_t^{-1}}$, 
\begin{align*}
\mathbb{P}\left(\forall t \geq 0,\lVert S_t \rVert_{\widetilde{V}_t^{-1}}\leq  
\sigma\sqrt{ 2\log\left(\frac{1}{\delta}\right) + \log\left(\frac{\det(\widetilde{V}_t)}{\mu_t^d}\right)}   \right) 
\geq 1- \delta.
\end{align*}
Therefore by using inequality \ref{ineq_concentration},
\begin{align*}
\mathbb{P}\left(\forall t \geq 0,\lVert \hat{\theta}_t - \theta^{\star} \rVert_{\widetilde{V}_t^{-1}}\leq \frac{\lambda_t}{\sqrt{\mu_t}}S + 
\sigma\sqrt{ 2\log\left(\frac{1}{\delta}\right) + \log\left(\frac{\det(\widetilde{V}_t)}{\mu_t^d}\right)}   \right) \geq 1- \delta.
\end{align*}
We obtain the exact formula of Theorem \ref{theorem_deviation_weighted_sequential} by upper bounding $\det(\tV_t)$
 as proposed in Proposition \ref{ineq_det_DLINUCB}
\end{proof}
\section{$\DLinUCB$ Analysis}
In this section, the environment is non-stationary, which means 
that the unknown parameter $\theta^{\star}$ may evolve 
over time and is denoted $\theta^{\star}_t$. The reward generation process in 
the one presented in Equation~(\ref{eq:reward_generation}).
\subsection{Preliminary results}

In this section, $V_t$ and $\tV_t$ are defined by
$$
V_t = \sum_{s=1}^t \gamma^{-s} A_s A_s^{\top} + 
\lambda \gamma^{-t} I_d, \quad
\tV_t = \sum_{s=1}^t \gamma^{-2s}
 A_s A_s^{\top} + \lambda \gamma^{-2t} I_d.
$$
We recall the definition of $\beta_t$:
$$
\beta_t = \sqrt{\lambda} S + \sigma \sqrt{2 \log(1/\delta) + d \log\left(1 + \frac{L^2 (1- \gamma^{2t})}{\lambda d (1-\gamma^2)}\right)}.
$$
With $\hat{\theta}_t$ defined in equation (\ref{eq:thetahatw}), the confidence
 ellipsoid we consider is defined by
\begin{equation}
    \label{confidence_ellipsoid}
    \mathcal{C}_t = \bigg\{\theta \in \mathbb{R}^d: \lVert \theta - \hat{\theta}_{t-1} \rVert_{V_{t-1}\widetilde{V}_{t-1}^{-1} V_{t-1}} \leq \beta_{t-1} \bigg\}.
\end{equation}
Theorem \ref{theorem_deviation_weighted_sequential} can be applied with this choice of 
weights and regularization. We combine it with an upper bound for $\det(\tV_t)$ given below.

\begin{proposition}[Determinant inequality for the weighted design matrix]
\label{ineq_det_DLINUCB}
Let $(\lambda_t)_t$ be a deterministic sequence of regularization parameters. Let $V_t = \sum_{s=1}^t w_s A_s A_s^{\top} + \lambda_t I_d$ be the weighted design matrix. 
Under the assumption $\forall t, \lVert A_t \rVert_2 \leq L$, the following holds
$$
\det(V_t) \leq  \left(\lambda_t + \frac{ L^2 \sum_{s=1}^t w_s}{d}\right)^d.
$$
\end{proposition}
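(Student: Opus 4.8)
The plan is to bound the determinant of $V_t = \sum_{s=1}^t w_s A_s A_s^\top + \lambda_t I_d$ by controlling its eigenvalues via the trace. The key elementary fact is the AM--GM inequality for positive definite matrices: if $V_t$ has eigenvalues $\nu_1,\dots,\nu_d > 0$, then $\det(V_t) = \prod_{i=1}^d \nu_i \leq \bigl(\tfrac{1}{d}\sum_{i=1}^d \nu_i\bigr)^d = \bigl(\tfrac{1}{d}\operatorname{tr}(V_t)\bigr)^d$.

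First I would compute the trace explicitly using linearity:
$$
\operatorname{tr}(V_t) = \sum_{s=1}^t w_s \operatorname{tr}(A_s A_s^\top) + \lambda_t \operatorname{tr}(I_d) = \sum_{s=1}^t w_s \lVert A_s \rVert_2^2 + d\,\lambda_t.
$$
Then I would invoke the boundedness assumption $\lVert A_s \rVert_2 \leq L$, which gives $\lVert A_s \rVert_2^2 \leq L^2$ and hence $\operatorname{tr}(V_t) \leq L^2 \sum_{s=1}^t w_s + d\,\lambda_t$. Dividing by $d$ and raising to the $d$-th power yields
$$
\det(V_t) \leq \left( \frac{L^2 \sum_{s=1}^t w_s + d\,\lambda_t}{d} \right)^d = \left( \lambda_t + \frac{L^2 \sum_{s=1}^t w_s}{d} \right)^d,
$$
which is exactly the claimed bound.

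There is no real obstacle here: the only things to be slightly careful about are that $V_t$ is genuinely positive definite (so all eigenvalues are strictly positive and the AM--GM step applies), which holds because $\lambda_t > 0$ and $\sum_s w_s A_s A_s^\top \succeq 0$ since the weights $w_s$ are positive; and that the trace of a rank-one matrix $A_s A_s^\top$ equals $\lVert A_s \rVert_2^2$, which is immediate from $\operatorname{tr}(A_s A_s^\top) = \operatorname{tr}(A_s^\top A_s) = A_s^\top A_s$. Everything else is a one-line chain of inequalities.
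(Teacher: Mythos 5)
Your argument is correct and is essentially identical to the paper's proof: both bound $\det(V_t)$ by $\bigl(\tfrac{1}{d}\operatorname{tr}(V_t)\bigr)^d$ via the AM--GM inequality on the eigenvalues, then compute the trace by linearity and apply $\lVert A_s \rVert_2 \leq L$. Your added remarks on positive definiteness and $\operatorname{tr}(A_s A_s^\top) = \lVert A_s \rVert_2^2$ are accurate but not needed beyond what the paper states.
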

\begin{proof}
\begin{align*}
\det(V_t) &= \prod_{i=1}^d l_i \quad (l_i \textnormal{ are the eigenvalues}) \\
&\leq \left(\frac{1}{d}\sum_{i=1}^d l_i \right)^d  \quad (\textnormal{AM-GM inequality})\\
&\leq \left(\frac{1}{d} \textnormal{trace}(V_t) \right)^d 
\leq \left(\frac{1}{d} \sum_{s=1}^t w_s \textnormal{trace}(A_s A_s^{\top}) + \lambda_t\right)^d \\
&\leq \left(\frac{1}{d} \sum_{s=1}^t w_s\Vert A_s \Vert_2^2 + \lambda_t\right)^d 
\leq \left(\lambda_t + \frac{L^2}{d} \sum_{s=1}^t w_s \right)^d. 
\end{align*}
\end{proof}
\begin{corollary}
\label{corollary:inequality_determinant}
In the specific case where the weights are given by $w_t = \gamma^{-t}$ with $0< \gamma <1$. Proposition \ref{ineq_det_DLINUCB} can be rewritten
\end{corollary}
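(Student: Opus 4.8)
The plan is to simply instantiate Proposition~\ref{ineq_det_DLINUCB} with the exponential weights used by $\DLinUCB$ and then evaluate the two geometric sums that appear. Recall that in the analysis we take $w_s = \gamma^{-s}$ together with $\lambda_t = \lambda\gamma^{-t}$ for the matrix $V_t$, and --- since $\mu_t$ is chosen proportional to $\lambda_t^2$ in order to preserve scale-invariance --- $\mu_t = \lambda\gamma^{-2t}$ for the matrix $\tV_t = \sum_{s=1}^t \gamma^{-2s} A_s A_s^\top + \lambda\gamma^{-2t} I_d$. The only computation needed is the closed form of
$$
\sum_{s=1}^t \gamma^{-s} = \frac{\gamma^{-t}-1}{1-\gamma}, \qquad \sum_{s=1}^t \gamma^{-2s} = \frac{\gamma^{-2t}-1}{1-\gamma^2},
$$
which are standard geometric series, valid because $0<\gamma<1$.

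First I would apply Proposition~\ref{ineq_det_DLINUCB} to $\tV_t$, viewing it as a weighted design matrix with weights $\gamma^{-2s}$ and regularization $\lambda\gamma^{-2t}$. Substituting the second geometric sum gives
$$
\det(\tV_t) \le \left(\lambda\gamma^{-2t} + \frac{L^2}{d}\cdot\frac{\gamma^{-2t}-1}{1-\gamma^2}\right)^{d} = \lambda^d \gamma^{-2td}\left(1 + \frac{L^2\,(1-\gamma^{2t})}{\lambda d\,(1-\gamma^2)}\right)^{d},
$$
where the last equality is obtained by factoring $\gamma^{-2t}$ out of the parenthesis. Dividing by $\mu_t^d = \lambda^d\gamma^{-2td}$ then yields the scale-invariant bound
$$
\frac{\det(\tV_t)}{\mu_t^d} \le \left(1 + \frac{L^2\,(1-\gamma^{2t})}{\lambda d\,(1-\gamma^2)}\right)^{d},
$$
which, plugged into the term $\tfrac12\log(\det(\tV_t)/\mu_t^d)$ appearing in Proposition~\ref{prop:deviation} and Theorem~\ref{theorem_deviation_weighted_sequential}, reproduces exactly the confidence radius $\beta_t$ of~\eqref{eq:beta_t}; this is the point of the corollary. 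The analogous computation for $V_t$, with weights $\gamma^{-s}$ and regularization $\lambda\gamma^{-t}$, gives $\det(V_t) \le \bigl(\lambda\gamma^{-t} + \tfrac{L^2(\gamma^{-t}-1)}{d(1-\gamma)}\bigr)^d$, which can be included as well if the corollary states both forms.

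There is essentially no obstacle here --- the result is a direct specialization. The only care required is bookkeeping: to keep track of the two different exponents ($\gamma^{-s}$ for $V_t$ versus $\gamma^{-2s}$ for $\tV_t$), to remember that the scale-invariant choice forces $\mu_t = \lambda\gamma^{-2t}$ (not $\lambda\gamma^{-t}$), and to factor $\gamma^{-2t}$ out before dividing by $\mu_t^d$ so that the divergent factor $\gamma^{-2td}$ cancels and one is left with a bound that stays finite and, in fact, converges to $\bigl(1 + L^2/(\lambda d(1-\gamma^2))\bigr)^d$ as $t\to\infty$.
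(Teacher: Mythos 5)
Your proposal is correct and matches the paper's proof exactly: both simply instantiate Proposition~\ref{ineq_det_DLINUCB} with the weights $\gamma^{-s}$ (resp.\ $\gamma^{-2s}$) and regularizations $\lambda\gamma^{-t}$ (resp.\ $\mu_t=\lambda\gamma^{-2t}$), then substitute the closed forms of the two geometric sums. The extra step of factoring out $\gamma^{-2t}$ and dividing by $\mu_t^d$ is not part of the corollary itself but correctly anticipates how it is used to obtain $\beta_t$.
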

$$
\det(V_t) \leq \left(\lambda_t + \frac{L^2 (\gamma^{-t}-1)}{d(1-\gamma)} \right)^d = \left( \lambda \gamma^{-t} +
 \frac{L^2 (\gamma^{-t}-1)}{d(1-\gamma)} \right)^d.
$$
We also have,
$$
\det(\tV_t) \leq \left(\mu_t + \frac{L^2 (\gamma^{-2t}-1)}{d(1-\gamma^2)} \right)^d = \left( \lambda \gamma^{-2t} +
 \frac{L^2 (\gamma^{-2t}-1)}{d(1-\gamma^2)} \right)^d.
$$
\begin{proof}
Apply Proposition \ref{ineq_det_DLINUCB} and use $\sum_{s=1}^{t} \gamma^{-s} = \frac{\gamma^{-t}-1}{1-\gamma}$ and $\sum_{s=1}^{t} \gamma^{-2s} = \frac{\gamma^{-2t}-1}{1-\gamma^2}$ .
\end{proof}

Corollary \ref{corollary:inequality_determinant} and Proposition \ref{prop:deviation} yield the
following result.

\begin{corollary}
$\forall \delta > 0$, with the weights $w_t = \gamma^{-t}$ and $0 < \gamma < 1$,  we have
\label{corollary:S_t}
\begin{align*}
\mathbb{P}\left(\exists t \geq 0,\lVert S_t \rVert_{\widetilde{V}_t^{-1}}\geq \sigma\sqrt{ 2\log\left(\frac{1}{\delta}\right) +d \log\left(1 +
\frac{L^2 (1-\gamma^{2t})}{\lambda d (1-\gamma^2)}\right)}   \right) \leq \delta.
\end{align*}
\end{corollary}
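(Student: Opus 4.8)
The plan is to obtain Corollary~\ref{corollary:S_t} by specializing Proposition~\ref{prop:deviation} to the exponential weights $w_t=\gamma^{-t}$ and replacing the abstract quantity $\det(\widetilde V_t)/\mu_t^d$ by the explicit bound from Corollary~\ref{corollary:inequality_determinant}. Recall that in the $\DLinUCB$ analysis the regularization attached to $\widetilde V_t$ is $\mu_t=\lambda\gamma^{-2t}$, in agreement with the scale-invariance discussion of Section~\ref{sec:confidence}, so that $\widetilde V_t=\sum_{s=1}^t\gamma^{-2s}A_sA_s^\top+\lambda\gamma^{-2t}I_d$ as stated in the preliminary results.

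First I would invoke Proposition~\ref{prop:deviation}, which already gives, for every $\delta>0$,
\[
\mathbb{P}\Big(\exists t\geq 0,\ \lVert S_t\rVert_{\widetilde V_t^{-1}}\geq \sigma\sqrt{2\log(1/\delta)+\log\big(\det(\widetilde V_t)/\mu_t^d\big)}\Big)\leq\delta.
\]
Since $x\mapsto\sigma\sqrt{2\log(1/\delta)+x}$ is nondecreasing, it suffices to bound $\log\big(\det(\widetilde V_t)/\mu_t^d\big)$ from above, deterministically and uniformly in $t$, by $d\log\big(1+L^2(1-\gamma^{2t})/(\lambda d(1-\gamma^2))\big)$; the conclusion then follows with the same failure probability $\delta$. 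For the determinant, the bound on $\det(\widetilde V_t)$ in Corollary~\ref{corollary:inequality_determinant} gives $\det(\widetilde V_t)\leq\big(\lambda\gamma^{-2t}+L^2(\gamma^{-2t}-1)/(d(1-\gamma^2))\big)^d$. Dividing by $\mu_t^d=(\lambda\gamma^{-2t})^d$ and using the elementary identity $(\gamma^{-2t}-1)/\gamma^{-2t}=1-\gamma^{2t}$ yields
\[
\det(\widetilde V_t)/\mu_t^d\leq\big(1+L^2(1-\gamma^{2t})/(\lambda d(1-\gamma^2))\big)^d,
\]
hence $\log\big(\det(\widetilde V_t)/\mu_t^d\big)\leq d\log\big(1+L^2(1-\gamma^{2t})/(\lambda d(1-\gamma^2))\big)$. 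Substituting into the bound from Proposition~\ref{prop:deviation} completes the argument.

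I do not anticipate a genuine obstacle: the proof is a substitution together with the elementary simplification $(\gamma^{-2t}-1)/\gamma^{-2t}=1-\gamma^{2t}$. The only points worth stating explicitly are that the determinant bound holds for all $t$ simultaneously (so the ``$\exists t$'' event is controlled in one step, without any union bound over $t$), and that $1-\gamma^{2t}\in[0,1)$ for $\gamma\in(0,1)$, so the logarithm appearing in the statement is well defined and nonnegative.
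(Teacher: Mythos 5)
Your proof is correct and follows exactly the route the paper takes: it combines Proposition~\ref{prop:deviation} with the determinant bound of Corollary~\ref{corollary:inequality_determinant} for $\mu_t=\lambda\gamma^{-2t}$, and the simplification $(\gamma^{-2t}-1)/\gamma^{-2t}=1-\gamma^{2t}$ gives the stated radius. The paper states this combination without detail, so your write-up is simply a more explicit version of the same argument.
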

Thanks to this corollary we are now ready to show that $\bar{\theta}_t$ belongs to $\mathcal{C}_{t-1}$ with high probability.
\begin{proposition}
\label{proposition:bar_theta}
Let $\mathcal{C}_t =\bigg\{\theta \in \mathbb{R}^d: \lVert \theta - \hat{\theta}_{t-1} \rVert_{V_{t-1}\widetilde{V}_{t-1}^{-1} 
V_{t-1}} \leq \beta_{t-1} \bigg\} $ denote the confidence ellipsoid.
Let $\bar\theta_t = V_{t-1}^{-1} \left(\sum_{s=1}^{t-1} \gamma^{-s} 
 A_s A_s^{\top} \theta_s^\star + \lambda \gamma^{-(t-1)} \theta_t^\star \right)$.
 Then, $\forall \delta >0$, 
 $$
\mathbb{P}\left( \forall t \geq 1, \bar\theta_t \in \mathcal{C}_t\right)\geq 1- \delta.
 $$
\end{proposition}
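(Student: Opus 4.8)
The plan is to mimic the proof of Theorem~\ref{theorem_deviation_weighted_sequential}, replacing the fixed parameter $\theta^\star$ by the random vector $\bar\theta_t$: indeed $\bar\theta_t$ is exactly the reference vector announced at the end of Section~\ref{sec:confidence}, taken with the free reference index equal to the current time.

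First I would rewrite the estimator. Plugging the non-stationary reward model~\eqref{eq:reward_generation} into $\hat\theta_{t-1} = V_{t-1}^{-1}\sum_{s=1}^{t-1}\gamma^{-s}A_s X_s$ gives $\hat\theta_{t-1} = V_{t-1}^{-1}\bigl(\sum_{s=1}^{t-1}\gamma^{-s}A_sA_s^\top\theta_s^\star + S_{t-1}\bigr)$ with $S_{t-1}=\sum_{s=1}^{t-1}\gamma^{-s}A_s\eta_s$. Subtracting the definition of $\bar\theta_t$, the data-dependent sums $\sum_{s=1}^{t-1}\gamma^{-s}A_sA_s^\top\theta_s^\star$ cancel exactly --- this is precisely why $\bar\theta_t$ is defined this way --- leaving
$$
\hat\theta_{t-1} - \bar\theta_t = V_{t-1}^{-1}S_{t-1} - \lambda\gamma^{-(t-1)}V_{t-1}^{-1}\theta_t^\star ,
$$
the non-stationary analogue of the identity $\hat\theta_t - \theta^\star = V_t^{-1}S_t - \lambda_t V_t^{-1}\theta^\star$ used to prove Theorem~\ref{theorem_deviation_weighted_sequential}, the sole regularization term now carrying $\theta_t^\star$.

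Next I would reproduce the Cauchy--Schwarz step from that proof: for an arbitrary $x$, bound $\lvert x^\top(\hat\theta_{t-1}-\bar\theta_t)\rvert$ by $\lVert x\rVert_{V_{t-1}\tV_{t-1}^{-1}V_{t-1}}\bigl(\lVert S_{t-1}\rVert_{\tV_{t-1}^{-1}} + \lambda\gamma^{-(t-1)}\lVert\theta_t^\star\rVert_{\tV_{t-1}^{-1}}\bigr)$, using $\lVert V_{t-1}^{-1}y\rVert_{V_{t-1}\tV_{t-1}^{-1}V_{t-1}} = \lVert y\rVert_{\tV_{t-1}^{-1}}$, and then take $x = V_{t-1}\tV_{t-1}^{-1}V_{t-1}(\hat\theta_{t-1}-\bar\theta_t)$ to get
$$
\lVert \hat\theta_{t-1} - \bar\theta_t\rVert_{V_{t-1}\tV_{t-1}^{-1}V_{t-1}} \leq \lVert S_{t-1}\rVert_{\tV_{t-1}^{-1}} + \lambda\gamma^{-(t-1)}\lVert\theta_t^\star\rVert_{\tV_{t-1}^{-1}}.
$$
The deterministic term I would control via $\tV_{t-1}\succeq \lambda\gamma^{-2(t-1)}I_d$ and $\lVert\theta_t^\star\rVert_2\leq S$, which gives $\lambda\gamma^{-(t-1)}\lVert\theta_t^\star\rVert_{\tV_{t-1}^{-1}}\leq\sqrt{\lambda}\,S$, matching the first term of $\beta_{t-1}$. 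For the random term I would invoke Corollary~\ref{corollary:S_t}, which on a single event of probability at least $1-\delta$ bounds $\lVert S_{t-1}\rVert_{\tV_{t-1}^{-1}}$, simultaneously for every $t\geq 1$, by $\sigma\sqrt{2\log(1/\delta) + d\log(1 + L^2(1-\gamma^{2(t-1)})/(\lambda d(1-\gamma^2)))}$, which is exactly the second term of $\beta_{t-1}$. Summing the two bounds yields $\lVert\hat\theta_{t-1}-\bar\theta_t\rVert_{V_{t-1}\tV_{t-1}^{-1}V_{t-1}}\leq\beta_{t-1}$ on that event for all $t\geq 1$, i.e.\ $\bar\theta_t\in\mathcal{C}_t$.

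I do not expect a genuine obstacle, since the statement is essentially Theorem~\ref{theorem_deviation_weighted_sequential} read through the substitution $\theta^\star\mapsto\bar\theta_t$. The two points needing care are: (i) verifying that the cross terms $\sum_s\gamma^{-s}A_sA_s^\top\theta_s^\star$ cancel exactly in $\hat\theta_{t-1}-\bar\theta_t$, which pins down the affine form of $\bar\theta_t$ and uses the freedom to pick the reference index equal to the current time; and (ii) keeping the time indices aligned, as $\mathcal{C}_t$ is built from $\hat\theta_{t-1}$, $V_{t-1}$, $\tV_{t-1}$, $\beta_{t-1}$ whereas the regularization part of $\bar\theta_t$ involves $\theta_t^\star$. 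The uniform-in-$t$ form of Corollary~\ref{corollary:S_t} then hands the ``$\forall t\geq 1$'' conclusion directly.
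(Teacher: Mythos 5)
Your proposal is correct and follows essentially the same route as the paper's proof: the same cancellation of $\sum_s\gamma^{-s}A_sA_s^\top\theta_s^\star$ yielding $\bar\theta_t-\hat\theta_{t-1}=-V_{t-1}^{-1}S_{t-1}+\lambda\gamma^{-(t-1)}V_{t-1}^{-1}\theta_t^\star$, the same bound $\lambda\gamma^{-(t-1)}\lVert\theta_t^\star\rVert_{\tV_{t-1}^{-1}}\leq\sqrt{\lambda}S$, and the same uniform-in-$t$ application of Corollary~\ref{corollary:S_t}. The only cosmetic difference is that you rederive the norm identity $\lVert V_{t-1}^{-1}y\rVert_{V_{t-1}\tV_{t-1}^{-1}V_{t-1}}=\lVert y\rVert_{\tV_{t-1}^{-1}}$ via the Cauchy--Schwarz duality step, which the paper simply asserts.
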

\begin{proof}
\begin{align*}
 \bar \theta_{t} - \hat{\theta}_{t-1}  &= V_{t-1}^{-1}\left( \sum_{s=1}^{t-1} \gamma^{-s}A_s A_s^{\top}\theta_{s}^{\star} + \lambda 
 \gamma^{-(t-1)} \theta^{\star}_t -  \sum_{s=1}^{t-1} \gamma^{-s}A_s X_s\right)  \\
 &=V_{t-1}^{-1}\left( \sum_{s=1}^{t-1} \gamma^{-s}A_s A_s^{\top}\theta_{s}^{\star} + \lambda \gamma^{-(t-1)} \theta^{\star}_t - 
 \sum_{s=1}^{t-1} \gamma^{-s}A_s A_s^{\top} \theta_{s}^{\star} - \sum_{s=1}^{t-1} \gamma^{-s}A_s \eta_s\right) \\
 &= - V_{t-1}^{-1} S_{t-1} + \lambda \gamma^{-(t-1)} V_{t-1}^{-1} \theta_t^{\star}.
\end{align*}
Therefore,
\begin{align*}
\lVert \bar \theta_{t} - \hat{\theta}_{t-1}\rVert_{V_{t-1} \tV_{t-1}^{-1} V_{t-1}} &\leq \lVert S_{t-1} \rVert_{\tV_{t-1}^{-1}} + \lambda \gamma^{-(t-1)}  \lVert\theta_t^{\star}\rVert_{\tV_{t-1}^{-1}} \\
&\leq \lVert S_{t-1} \rVert_{\tV_{t-1}^{-1}} + \sqrt{\lambda} S \quad \textnormal{($\tV_{t-1}^{-1} \leq 1/(\gamma^{-2(t-1)}\lambda) I_d$  and $\lVert \theta^{\star}_t \rVert_2 \leq S$)} \\
&\leq \beta_{t-1} \quad \textnormal{(Corollary \ref{corollary:S_t})}.
\end{align*}
\end{proof}
\subsection{Control of the norm of actions}
\label{sub-sec:elliptical-lemma}

\begin{lemma}
\label{lemma_ineq_mat}
Let $V_t = \sum_{s=1}^t  \gamma^{-s} A_s A_s^{\top} + \lambda \gamma^{-t} I_d$ and $\widetilde{V}_t= \sum_{s=1}^t 
\gamma^{-2s} A_s A_s^{\top} + \lambda\gamma^{-2t} I_d$ and $0<\gamma<1$. We have
$$
\forall t, \, V_t^{-1}\widetilde{V}_t \,V_t^{-1} \leq \gamma^{-t} \, V_t^{-1}.
$$
\end{lemma}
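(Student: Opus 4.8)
The plan is to reduce the claimed inequality to a transparent statement about the Loewner order of positive semi-definite matrices. Since $V_t$ is symmetric positive definite it is invertible, and conjugating both sides of a Loewner inequality by $V_t$ preserves it; hence the assertion $V_t^{-1}\widetilde{V}_t V_t^{-1} \leq \gamma^{-t} V_t^{-1}$ is equivalent to
$$
\widetilde{V}_t \leq \gamma^{-t} V_t .
$$
So it suffices to establish this last inequality.

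Next I would write out both matrices explicitly. We have
$$
\gamma^{-t} V_t = \sum_{s=1}^t \gamma^{-t-s} A_s A_s^\top + \lambda \gamma^{-2t} I_d,
\qquad
\widetilde{V}_t = \sum_{s=1}^t \gamma^{-2s} A_s A_s^\top + \lambda \gamma^{-2t} I_d,
$$
so that the regularization terms cancel and it remains only to check that
$$
\sum_{s=1}^t \bigl(\gamma^{-2s} - \gamma^{-t-s}\bigr) A_s A_s^\top \leq 0,
$$
i.e. that $\sum_{s=1}^t \gamma^{-s}\bigl(\gamma^{-s} - \gamma^{-t}\bigr) A_s A_s^\top$ is negative semi-definite.

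Finally, each summand is a scalar multiple of the rank-one positive semi-definite matrix $A_s A_s^\top$, the scalar being $\gamma^{-s}\bigl(\gamma^{-s} - \gamma^{-t}\bigr)$. Since $0 < \gamma < 1$, the map $u \mapsto \gamma^{-u}$ is nondecreasing, so from $s \leq t$ we get $\gamma^{-s} \leq \gamma^{-t}$, whence $\gamma^{-s} - \gamma^{-t} \leq 0$, while $\gamma^{-s} > 0$. Thus every summand is negative semi-definite, and a sum of negative semi-definite matrices is negative semi-definite, which yields the displayed inequality and hence the lemma. There is essentially no obstacle here; the only point that deserves a word is the equivalence used in the first step, namely that for an invertible symmetric matrix $M$ one has $A \leq B \iff M A M \leq M B M$, which is immediate from the definition of the Loewner order.
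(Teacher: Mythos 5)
Your proof is correct and follows essentially the same route as the paper: both reduce the claim to the Loewner inequality $\widetilde{V}_t \leq \gamma^{-t} V_t$, verified termwise via $\gamma^{-2s} \leq \gamma^{-t-s}$ for $s \leq t$, and then conjugate by $V_t^{-1}$. Your observation that the regularization terms cancel exactly is a harmless cosmetic difference.
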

\begin{proof}
\begin{align*}
    \widetilde{V}_t &= \sum_{s=1}^t \gamma^{-2s} A_s A_s^{\top} + \lambda \gamma^{-2t} I_d  
    \leq \gamma^{-t} \sum_{s=1}^t \gamma^{-s} A_s A_s^{\top} + \lambda \gamma^{-2t} I_d = \gamma^{-t} V_t.
\end{align*}
Consequently,
$$
V_t^{-1}\tV_t V_t^{-1} \leq \gamma^{-t} V_t^{-1} V_t V_t ^{-1} \leq \gamma^{-t}  V_t^{-1}.
$$
\end{proof}
Thanks to Lemma \ref{lemma_ineq_mat} we establish the following proposition,
\begin{proposition}
$$
\sum_{t=1}^T  \min\left(1,\lVert A_t\rVert_{V_{t-1}^{-1} \tV_{t-1} V_{t-1}^{-1}}^2\right) \leq 2 \sum_{t=1}^T  \log\left( 1 + 
\gamma^{-t} \lVert A_t\rVert_{V_{t-1}^{-1}}^2\right) \leq 2\log\left( \frac{\det(V_T)}{\lambda^d}\right).
$$
\end{proposition}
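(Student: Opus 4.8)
The plan is to prove the chain of inequalities from left to right, treating the two inequalities separately.

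\textbf{First inequality.} For the leftmost inequality, I would use Lemma~\ref{lemma_ineq_mat}, which gives $V_{t-1}^{-1} \tV_{t-1} V_{t-1}^{-1} \leq \gamma^{-(t-1)} V_{t-1}^{-1}$. Since $A_s A_s^\top \preceq \gamma^{-s} A_s A_s^\top$ only helps in one direction, the relevant observation is that the matrix ordering $M \preceq N$ implies $x^\top M x \leq x^\top N x$ for all $x$, hence $\lVert A_t \rVert_{V_{t-1}^{-1} \tV_{t-1} V_{t-1}^{-1}}^2 \leq \gamma^{-(t-1)} \lVert A_t \rVert_{V_{t-1}^{-1}}^2$. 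Wait --- I need to be careful about whether the power is $\gamma^{-t}$ or $\gamma^{-(t-1)}$; since $\gamma<1$, $\gamma^{-(t-1)} \leq \gamma^{-t}$, so in any case $\lVert A_t \rVert_{V_{t-1}^{-1} \tV_{t-1} V_{t-1}^{-1}}^2 \leq \gamma^{-t} \lVert A_t \rVert_{V_{t-1}^{-1}}^2$. Then I invoke the elementary numerical inequality $\min(1,u) \leq 2\log(1+u)$, valid for all $u \geq 0$ (it holds because $\log(1+u)\geq \log 2 \geq 1/2$ when $u\geq 1$, and $\log(1+u)\geq u/2$ when $0\leq u\leq 1$ — actually $\log(1+u) \geq u - u^2/2 \geq u/2$ on $[0,1]$). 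Applying this with $u = \gamma^{-t}\lVert A_t\rVert_{V_{t-1}^{-1}}^2$ and summing over $t$ gives the first inequality.

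\textbf{Second inequality.} For the second inequality, this is the standard ``elliptical potential'' argument adapted to the weighted setting. The key algebraic identity is a rank-one update relation between $V_t$ and $V_{t-1}$. With $w_t = \gamma^{-t}$ and $\lambda_t = \lambda\gamma^{-t}$, one has $V_t = V_{t-1} + \gamma^{-t} A_t A_t^\top + (\text{regularization increment})$. The cleanest route is to write $V_t \succeq V_{t-1} + \gamma^{-t} A_t A_t^\top$ (dropping the nonnegative regularization increment $\lambda(\gamma^{-t}-\gamma^{-(t-1)}) I_d$), so that $\det(V_t) \geq \det(V_{t-1} + \gamma^{-t} A_t A_t^\top) = \det(V_{t-1}) \det(I_d + \gamma^{-t} V_{t-1}^{-1/2} A_t A_t^\top V_{t-1}^{-1/2}) = \det(V_{t-1})(1 + \gamma^{-t}\lVert A_t\rVert_{V_{t-1}^{-1}}^2)$, using the matrix determinant lemma and the fact that $V_{t-1}^{-1/2} A_t A_t^\top V_{t-1}^{-1/2}$ has a single nonzero eigenvalue equal to $\gamma^{-t}\lVert A_t\rVert_{V_{t-1}^{-1}}^2$. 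Wait, the $\gamma^{-t}$ factor: more precisely $\det(I_d + \gamma^{-t} M) = 1 + \gamma^{-t}\operatorname{tr}(M)$ when $M$ is rank one, and $\operatorname{tr}(V_{t-1}^{-1/2} A_t A_t^\top V_{t-1}^{-1/2}) = \lVert A_t\rVert_{V_{t-1}^{-1}}^2$. Taking logs, $\log(1+\gamma^{-t}\lVert A_t\rVert_{V_{t-1}^{-1}}^2) \leq \log\det(V_t) - \log\det(V_{t-1})$, and the sum telescopes to $\log\det(V_T) - \log\det(V_0) = \log(\det(V_T)/\lambda^d)$ since $V_0 = \lambda I_d$. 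Multiplying by the factor $2$ completes the proof.

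\textbf{Main obstacle.} The genuinely delicate point is bookkeeping the discount factors and the increasing regularization correctly: one must verify that $V_t - V_{t-1} - \gamma^{-t}A_tA_t^\top = \lambda(\gamma^{-t} - \gamma^{-(t-1)})I_d \succeq 0$ (true since $\gamma < 1$), so that dropping it only decreases the determinant and the inequality goes the right way. Everything else is routine: the min-log inequality and the matrix determinant lemma are textbook facts, and the telescoping is immediate once the per-step bound is in place. I would not belabor the numerical inequality $\min(1,u)\leq 2\log(1+u)$ beyond a one-line justification.
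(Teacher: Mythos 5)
Your proposal is correct and follows essentially the same route as the paper's proof: Lemma~\ref{lemma_ineq_mat} combined with $\gamma^{-(t-1)}\leq\gamma^{-t}$ for the first inequality, then $\min(1,u)\leq 2\log(1+u)$, the observation that $V_t - V_{t-1} - \gamma^{-t}A_tA_t^{\top} = \lambda(\gamma^{-t}-\gamma^{-(t-1)})I_d\succeq 0$, the rank-one determinant identity, and telescoping against $\det(V_0)=\lambda^d$. All the delicate bookkeeping points you flag (direction of the $\gamma$ exponent, sign of the regularization increment) are handled the same way in the paper, so there is nothing to add.
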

\begin{proof}
We first use the fact that: $\forall x \geq 0, \min(1,x) \leq 2 \log(1+x)$.
\begin{align*}
\min\left(1,\lVert A_t\rVert_{V_{t-1}^{-1} \tV_{t-1} V_{t-1}^{-1}}^2\right) &\leq 2 \log\left(1 + \lVert A_t\rVert_{V_{t-1}^{-1} \tV_{t-1} V_{t-1}^{-1}}^2\right) \\
&\leq 2 \log\left(1 + \gamma^{-(t-1)} \lVert A_t\rVert_{ V_{t-1}^{-1}}^2\right) \quad \textnormal{(Lemma \ref{lemma_ineq_mat})}\\
&\leq 2 \log\left(1 + \gamma^{-t} \lVert A_t\rVert_{ V_{t-1}^{-1}}^2\right) \quad \textnormal{($\gamma \leq 1$)}.\\
\end{align*}
Furthermore, 
\begin{align*}
V_t \geq \gamma^{-t}A_t A_t^{\top} + V_{t-1} \geq  V_{t-1}^{1/2} (I_d + \gamma^{-t} V_{t-1}^{-1/2} A_t A_t^{\top} V_{t-1}^{-1/2})V_{t-1}^{1/2}.
\end{align*}
Given that all those matrices are symmetric positive definite, the previous inequality 
implies that
\begin{align*}
\det(V_t) &\geq \det(V_{t-1}) \det(1 + (\gamma^{-t/2} V_{t-1}^{-1/2} A_t) (\gamma^{-t/2} V_{t-1}^{-1/2} A_t)^{\top}) \\
&\geq \det(V_{t-1}) \left( 1 + \gamma^{-t} \lVert A_t\rVert_{ V_{t-1}^{-1}}^2 \right) \quad \left(\textnormal{Using $\det(I_d + x x^{\top}) = 1 + \lVert x \rVert_2^2  $}\right).
\end{align*}
Therefore, 
\begin{align*}
\frac{\det(V_T)}{\det(V_0)} = \prod_{t=1}^T \frac{\det(V_t)}{\det(V_{t-1})} \geq \prod_{t=1}^T (1 + \gamma^{-t} \lVert A_t\rVert_{ V_{t-1}^{-1}}^2).
\end{align*}
Finally by applying the log function to the previous inequality, 
$$
\sum_{t=1}^T  \min\left(1,\lVert A_t\rVert_{V_{t-1}^{-1} \tV_{t-1} V_{t-1}^{-1}}^2\right) \leq 2 \sum_{t=1}^T \log\left(1 + 
\gamma^{-t} \lVert A_t\rVert_{ V_{t-1}^{-1}}^2\right) \leq 2 \log\left(\frac{\det(V_T)}{\det(V_0)}\right).
$$
\end{proof}
\begin{corollary}
\label{corollary:lemma-elliptic}
$$
\sqrt{\sum_{t=1}^T  \min\left(1,\lVert A_t\rVert_{V_{t-1}^{-1} \tV_{t-1} V_{t-1}^{-1}}^2\right)} \leq \sqrt{2d} \sqrt{T\log\left(\frac{1}{\gamma}\right) +
 \log\left(1 + \frac{L^2}{d \lambda (1-\gamma)}\right)}.
$$
\end{corollary}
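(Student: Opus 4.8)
The plan is to chain the determinant bound established just above with the explicit determinant inequality of Corollary~\ref{corollary:inequality_determinant}, so that everything reduces to an elementary manipulation of logarithms. First I would invoke the preceding proposition, which already gives
$$
\sum_{t=1}^T \min\left(1, \lVert A_t \rVert_{V_{t-1}^{-1}\tV_{t-1}V_{t-1}^{-1}}^2\right) \leq 2\log\left(\frac{\det(V_T)}{\lambda^d}\right),
$$
so it remains only to control $\log(\det(V_T)/\lambda^d)$ in terms of the quantities appearing on the right-hand side of the corollary.

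Next I would apply Corollary~\ref{corollary:inequality_determinant} (with $w_t = \gamma^{-t}$ and $\lambda_T = \lambda\gamma^{-T}$), which yields $\det(V_T) \leq \bigl(\lambda\gamma^{-T} + L^2(\gamma^{-T}-1)/(d(1-\gamma))\bigr)^d$. Dividing through by $\lambda^d$ and factoring $\gamma^{-T}$ out of the parenthesis — using the crude bound $\gamma^{-T}-1 \leq \gamma^{-T}$ to dominate the sum term by a common multiple of $\gamma^{-T}$ — gives
$$
\frac{\det(V_T)}{\lambda^d} \leq \gamma^{-Td}\left(1 + \frac{L^2}{d\lambda(1-\gamma)}\right)^d,
$$
and taking logarithms turns this product into $Td\log(1/\gamma) + d\log\bigl(1 + L^2/(d\lambda(1-\gamma))\bigr)$. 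Combining with the display above, $\sum_{t=1}^T \min(1,\cdot) \leq 2d\bigl(T\log(1/\gamma) + \log(1 + L^2/(d\lambda(1-\gamma)))\bigr)$, and taking square roots — pulling $2d$ out as $\sqrt{2d}$ — produces exactly the claimed inequality.

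There is no substantial obstacle here; the only step requiring a little care is the factoring of $\gamma^{-T}$, where one must check that both the regularization term $\lambda\gamma^{-T}$ and the design term $L^2(\gamma^{-T}-1)/(d(1-\gamma))$ are bounded by a common multiple of $\gamma^{-T}$, so that $\log$ splits cleanly into the term linear in $T$ and the bounded term. The bound $\gamma^{-T}-1 \le \gamma^{-T}$ (valid since $0<\gamma<1$) achieves precisely this, and everything else is monotonicity of $\log$ and $\sqrt{\cdot}$.
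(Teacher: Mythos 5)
Your proposal is correct and follows essentially the same route as the paper: invoke the preceding elliptical-potential proposition to reduce everything to $\log(\det(V_T)/\lambda^d)$, bound the determinant via Corollary~\ref{corollary:inequality_determinant}, factor out $\gamma^{-T}$ using $\gamma^{-T}-1\le\gamma^{-T}$, and take logarithms and square roots. The paper's proof is just a terser version of the same computation.
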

\begin{proof}
The proof of this corollary is based on the previous lemma and on Corollary \ref{corollary:inequality_determinant}.
We have
\begin{align*}
\log\left(\frac{\det(V_T)}{\det(V_0)}\right) &\leq \log\left(\frac{1}{\lambda^d}\left(\lambda \gamma^{-T} + 
\frac{L^2(\gamma^{-T} -1)}{d(1-\gamma)}\right)^d  \right) \quad \textnormal{(Corollary \ref{corollary:inequality_determinant})} \\
& \leq dT\log\left(\frac{1}{\gamma}\right) + d \log\left( 1+ \frac{L^2}{d \lambda(1-\gamma)} \right).
\end{align*}
\end{proof}
\subsection{Proof of Theorem \ref{th:regret_dLinUCB}}
In this subsection we give the proof of Theorem \ref{th:regret_dLinUCB} for the high probability upper-bound of the regret for $\DLinUCB$. 
\begin{proof} 
\hspace{0.2cm}

\underline{First step:} Upper bound for the instantaneous regret. 

Let $ A_t^{\star} = \argmax_{a \in \mathcal{A}_t} \langle a, \theta^{\star}_t\rangle$ and
 $\theta_t = \argmax_{\theta \in \mathcal{C}_t} \langle A_t, \theta \rangle$.
We have,
\begin{align*}
r_t &=  \max_{a \in \mathcal{A}_t} \langle a, \theta^{\star}_t \rangle - \langle A_t, \theta^{\star}_t \rangle  = \langle A_t^{\star} - A_t, \theta_t^{\star} \rangle \\
&=  \langle A_t^{\star} - A_t, \bar\theta_t \rangle + \langle A_t^{\star} - A_t, \theta_t^{\star} - \bar\theta_t \rangle .
\end{align*}
Under the event $\{ \forall t >0, \, \bar\theta_t \in \mathcal{C}_t\}$, 
that occurs with probability at least $1-\delta$ thanks to Proposition \ref{proposition:bar_theta}, we have,
\begin{align}
\label{ineq:UCBs}
\langle A_t^{\star}, \bar \theta_t \rangle \leq \argmax_{\theta \in \mathcal{C}_t} \langle A_t^{\star}, \theta \rangle = 
\text{UCB}_t(A_t^{\star}) \leq \text{UCB}_t(A_t) =  \argmax_{\theta \in \mathcal{C}_t} \langle A_t, \theta \rangle  = \langle A_t, \theta_t \rangle.
\end{align}
Then, with probability at least $1-\delta$, $\forall t>0$,
\begin{align*}
r_t &\leq \langle A_t, \theta_t - \bar\theta_t \rangle +  \langle A_t^{\star} - A_t, \theta_t^{\star} - \bar\theta_t \rangle \\
&\leq \lVert A_t\rVert_{V_{t-1}^{-1} \tV_{t-1} V_{t-1}^{-1}} \lVert \theta_t - \bar\theta_t \rVert_{V_{t-1} \tV_{t-1}^{-1} V_{t-1}} + 
\lVert A_t^{\star} - A_t \rVert_2 \lVert \theta_t^{\star} - \bar\theta_t  \rVert_2 \quad \textnormal{(Cauchy-Schwarz)} \\
&\leq \lVert A_t\rVert_{V_{t-1}^{-1} \tV_{t-1} V_{t-1}^{-1}} \lVert \theta_t - \bar\theta_t \rVert_{V_{t-1} \tV_{t-1}^{-1} V_{t-1}} + 2L \lVert \theta_t^{\star} - 
\bar\theta_t  \rVert_2 \quad (\forall a \in \mathcal{A}_t \lVert a \rVert_2 \leq L).
\end{align*}
As discussed in Section \ref{sec:analysis}, the two terms are upper bounded using different techniques. 
The first term is handled with the equivalent in a non-stationary environment of the deviation inequality of
Theorem \ref{theorem_deviation_weighted_sequential} and the second term is 
the equivalent of the bias.

\underline{Second step:} Upper bound for $\lVert \theta_t - \bar\theta_t \rVert_{V_{t-1} \tV_{t-1}^{-1} V_{t-1}}$. 

We have,
$$
\lVert \theta_t - \bar\theta_t \rVert_{V_{t-1} \tV_{t-1}^{-1} V_{t-1}}  \leq \lVert \theta_t - \hat{\theta}_{t-1}\rVert_{V_{t-1} \tV_{t-1}^{-1} V_{t-1}} + \lVert \bar\theta_t - 
\hat{\theta}_{t-1}\rVert_{V_{t-1} \tV_{t-1}^{-1} V_{t-1}} \leq 2 \beta_{t-1},
$$
where the last inequality holds because under our assumption $\bar\theta_t \in \mathcal{C}_t$ with high probability
and by definition $\theta_t \in \mathcal{C}_t$.
\newpage

\underline{Third step:} Upper bound for the bias. 

Let $D>0$,
\begin{align*}
\lVert \theta_t^{\star} - \bar\theta_t  \rVert_2 &= \lVert V_{t-1}^{-1}\sum_{s=1}^{t-1}\gamma^{-s} A_s A_s^{\top}
(\theta_s^{\star}- \theta_t^{\star}) \rVert_2 \\
 &\leq  \lVert \sum_{s=t-D}^{t-1} V_{t-1}^{-1} \gamma^{-s} A_s A_s^{\top}(\theta_s^{\star}- \theta_t^{\star}) \rVert_2 + 
  \lVert V_{t-1}^{-1}\sum_{s=1}^{t-D-1} \gamma^{-s} A_s A_s^{\top}(\theta_s^{\star}- \theta_t^{\star}) \rVert_2 \\
 &\leq  \lVert \sum_{s=t-D}^{t-1} V_{t-1}^{-1} \gamma^{-s} A_s A_s^{\top} \sum_{p=s}^{t-1}(\theta_p^{\star}- \theta_{p+1}^{\star}) \rVert_2 + 
 \lVert \sum_{s=1}^{t-D-1} \gamma^{-s} A_s A_s^{\top}(\theta_s^{\star}- \theta_t^{\star}) \rVert_{V_{t-1}^{-2}} \\
 & \leq  \lVert \sum_{p=t-D}^{t-1} V_{t-1}^{-1} \gamma^{-s} A_s A_s^{\top} \sum_{s=t-D}^{p}(\theta_p^{\star}- \theta_{p+1}^{\star}) \rVert_2 + 
 \frac{1}{\lambda}\sum_{s=1}^{t-D-1} \gamma^{t-1-s} \lVert A_s A_s^{\top}(\theta_s^{\star}- \theta_t^{\star}) \rVert_2 \\
  & \leq  \sum_{p=t-D}^{t-1}  \lVert V_{t-1}^{-1}\sum_{s=t-D}^{p} \gamma^{-s} A_s A_s^{\top} (\theta_p^{\star}- \theta_{p+1}^{\star}) \rVert_2 +
   \frac{2L^2S}{\lambda} \sum_{s=1}^{t-D-1} \gamma^{t-1-s} \\
  & \leq \sum_{p=t-D}^{t-1}  \lambda_{\max} \left(V_{t-1}^{-1} \sum_{s=t-D}^{p} \gamma^{-s} A_s A_s^{\top}\right)\lVert \theta_p^{\star} - 
  \theta_{p+1}^{\star}\rVert_2 + \frac{2L^2S}{\lambda} \frac{\gamma^{D}}{1-\gamma}.
\end{align*}
The first inequality is a consequence of the triangular inequality.
The third inequality uses that $V_{t-1}^{-2} \leq (\frac{\gamma^{t-1}}{\lambda})^2 I_d$.
In the last inequality, we have used the fact that for a symmetric matrix 
$M \in \mathcal{M}_d(\mathbb{R})$ 
and a vector $x \in \mathbb{R}^d$, 
$\lVert Mx\rVert_2\leq \lambda_{\max}(M) \lVert x \rVert_2$.

Furthermore,  for $x$ such that $\lVert x \rVert_2 \leq 1$, we have that 
for $t-D \leq p \leq t-1$,
\begin{align*}
x^{\top} V_{t-1}^{-1} \sum_{s=t-D}^{p} \gamma^{-s} A_s A_s^{\top} x &
\leq x^{\top} V_{t-1}^{-1} \sum_{s=1}^{t-1} \gamma^{-s} A_s A_s^{\top} x  +
 \lambda \gamma^{-(t-1)} x^{\top} V_{t-1}^{-1}x  \\
& \leq x^{\top} V_{t-1}^{-1} (\sum_{s=1}^{t-1} \gamma^{-s} A_s A_s^{\top} +
 \lambda \gamma^{-(t-1)} I_d)x = x^{\top}x \leq 1.
\end{align*}
Therefore, for all $p$ such that $ t-D \leq p \leq t-1,  \lambda_{\max} \left(V_{t-1}^{-1} \sum_{s=t-D}^{p} \gamma^{-s} A_s A_s^{\top}\right) \leq 1$.

By combining the second and the third step, with probability at least $1-\delta$:
$$
r_t \leq 2L\sum_{p=t-D}^{t-1}\lVert \theta_p^{\star}- \theta_{p+1}^{\star}\rVert_2 + \frac{4L^3S}{\lambda} \frac{\gamma^{D}}{1-\gamma} + 2\beta_{t-1} \lVert A_t\rVert_{V_{t-1}^{-1} \tV_{t-1} V_{t-1}^{-1}}.
$$

The assumption $\left| \langle A_t, \theta_t^{\star} \rangle \right|  \leq 1$ also implies $r_t \leq 2$. Hence, with probability at least $1-\delta$:
\begin{align}
r_t \leq 2L\sum_{p=t-D}^{t-1}\lVert \theta_p^{\star}- \theta_{p+1}^{\star}\rVert_2 + 4L^3S \frac{\gamma^{D}}{1-\gamma} + 2\beta_{t-1} \min(1,\lVert A_t\rVert_{V_{t-1}^{-1} \tV_{t-1} V_{t-1}^{-1}}).
\end{align}
To conclude the proof we use the results of Subsection \ref{sub-sec:elliptical-lemma}.

\underline{Final step:}
\begin{align*}
R_T &= \sum_{t=1}^T r_t \\
&\leq 2L \sum_{t=1}^T \sum_{p=t-D}^{t-1}\lVert \theta_p^{\star}- \theta_{p+1}^{\star}\rVert_2 + \frac{4L^3S}{\lambda} \frac{\gamma^{D}}{1-\gamma} T 
+ 2 \beta_{T} \sum_{t=1}^T  \min\left(1,\lVert A_t\rVert_{V_{t-1}^{-1} \tV_{t-1} V_{t-1}^{-1}}\right) \\
& \leq 2L \sum_{t=1}^T \sum_{p=t-D}^{t-1}\lVert \theta_p^{\star}- \theta_{p+1}^{\star}\rVert_2 + \frac{4L^3S}{\lambda} \frac{\gamma^{D}}{1-\gamma} T 
+ 2 \beta_{T} \sqrt{T} \sqrt{\sum_{t=1}^T  \min\left(1,\lVert A_t\rVert_{V_{t-1}^{-1} \tV_{t-1} V_{t-1}^{-1}}^2\right)}  \\
&\leq 2L B_T D + \frac{4L^3S}{\lambda} \frac{\gamma^{D}}{1-\gamma} T + 2\sqrt{2} \beta_T \sqrt{dT} \sqrt{T \log(1/\gamma) + 
 \log\left(1+ \frac{L^2}{d\lambda(1-\gamma)}\right)}.
\end{align*}
In the first inequality, we use that $t \mapsto \beta_t$ is increasing.
The second inequality is an application of the Cauchy-Schwarz inequality to 
the third term and the last inequality is an application of Corollary 
\ref{corollary:lemma-elliptic}.
\end{proof}

\subsection{Proof of Corollary \ref{corollary:asymptotic_regret_DLinUCB}}

\begin{proof}
Let $\gamma$ be defined as $\gamma = 1- (\frac{B_T}{dT})^{2/3}$ and $D= \frac{\log(T)}{(1-\gamma)}$.
With this choice of $\gamma$, $D$ is equivalent to $d^{2/3} B_T^{-2/3}T^{2/3} \log(T)$. Thus, $D B_T$
 is equivalent to $ d^{2/3} B_T^{1/3} T^{2/3} \log(T)$. 

In addition,
\begin{align*}
\gamma^D &= \exp(D \log(\gamma)) = \exp \left( \frac{\log(\gamma)}{1- \gamma} \log(T)\right)  \sim 1/T.
\end{align*}
Hence, $T \gamma^D \frac{1}{1-\gamma}$ behaves as $d^{2/3}T^{2/3} B_T^{-2/3} $.

Furthermore, $\log(1/\gamma) \sim d^{-2/3} B_T^{2/3} T^{-2/3}$, implying that $T \log(1/\gamma) \sim d^{-2/3} B_T^{2/3} T^{1/3}$.

As a result, it holds that, $\beta_T \sqrt{dT} \sqrt{T \log(1 / \gamma) + \log\left( 1 + \frac{L^2}{d \lambda (1- \gamma)}\right)}$ 
is equivalent to $ d T^{1/2} \sqrt{\log(T/B_T)} \sqrt{d^{-2/3} B_T^{2/3} T^{1/3}} = d^{2/3} B_T^{1/3} T^{2/3} \sqrt{\log(T/B_T)}$.

By adding those three terms and neglecting the log factors, we obtain the desired result.
\end{proof}

\section{A new analysis of the $\SWLinUCB$ algorithm}

In this section we propose a new analysis of the $\SWLinUCB$ algorithm. 
This is useful as the proof provided in \cite{cheung2018learning} has 
several gaps. First, Lemma 2 of \cite{cheung2018learning} is presented 
as a specific case of the analysis of \citep{abbasi2011improved}. It would 
hold in the case of a growing window, where the argument developed in 
\cite{abbasi2011improved} could be used, but not with a sliding window, 
where past actions are removed from the design matrix.
 Furthermore, Theorem 2 of \cite{cheung2018learning} that bounds 
 $\lvert \langle x, \hat{\theta}_{t-1} - \theta_t^{\star} \rangle \rvert$ for any fixed direction $x$ 
 with high probability is used in equation (42) 
  with $x$ replaced by $A_t$, whereas $A_t$ is a random variable strongly related to $\hat{\theta}_{t-1}$.
   
We only mention this analysis in the Appendix because the deviation 
inequalities established for the weighted model can not be used. 
Nevertheless, we believe that this analysis gives new insights on the problem with a
 sliding window.
 
 \subsection{Deviation inequality}
 
 Let us introduce some notations to clarify the model. We suppose 
 that there is a sliding window of length $\length$, such that the estimate 
 of the unknown parameter at time $t$ is based on the $\length$ last
  observations. The optimization program solved is
$$
\hat{\theta}_t = \argmin_{\theta \in \mathbb{R}^d} \left(\sum_{s= \max(1, t-\length +1)}^t 
(X_s - \langle A_s, \theta\rangle )^2 + \lambda/2 \lVert \theta \rVert_2^2)\right).
$$

One has
\begin{align}
\hat{\theta}_t = V_t^{-1} \sum_{s = \max(1, t-\length +1)}^t A_s X_s,
 \quad \text{where} \quad V_t =  \sum_{s = \max(1, t-\length +1)}^t A_s A_s^{\top} + \lambda I_d.
\end{align}
The expression linking the matrices $V_t$ and $V_{t-1}$ is the following
$$
V_t = V_{t-1} + A_t A_t^{\top } - A_{t- \length} A_{t- \length}^{\top}.
$$
 
The specificity of the sliding window model is that at time $t$, to update the 
design matrix, a new action vector $A_t$ is added but the oldest term $A_{t- \length}$
 is also removed . When considering the equivalent of the quantity
 $M_t(x)$ defined in the Appendix \ref{Appendix:confidence_bounds}, 
 the property of supermartingale does not hold anymore  
 because of this loss of information. For this reason, all the reasoning
 that was done in \citep{abbasi2011improved} can not be applied directly. 
 
The reward generation process we consider is still the one presented in Equation 
\ref{eq:reward_generation}. As for the $\DLinUCB$ model, the results are stated 
with $\sigma$-subgaussian random noises but the proofs are done with $\sigma = 1$.
Let $S_t =  \sum_{s = \max(1, t-\length +1)}^t A_s \eta_s$. 
 We start by giving the proof of the analogue of Lemma 2 
 presented in \cite{cheung2018learning}. We give an instantaneous 
 deviation inequality. 
 
\begin{proposition}[Instantaneous deviation inequality with a sliding window]
\label{prop:instantaneous_deviation_SW}
Let $t$ be a fixed time instant. For all $\delta > 0$,
$$
\mathbb{P}\left( \lVert S_t \rVert_{V_t^{-1}}\geq  \sigma\sqrt{ 2\log\left(\frac{1}{\delta}\right) +
 \log\left(\frac{\det(V_t)}{\lambda^d}\right)}   \right) \leq \delta.
$$ 
 \end{proposition}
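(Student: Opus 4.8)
The plan is to run the method-of-mixtures argument that underlies Proposition~\ref{prop:deviation}, while exploiting the fact that here $t$ is \emph{fixed}. In that case the window $\{\max(1,t-\length+1),\dots,t\}$ is a deterministic set of consecutive indices, so no stopping time is needed and the super-martingale obstruction mentioned before the statement --- which stems from dropping $A_{t-\length}$ as $t$ grows --- simply does not arise: along the window, information is only added, never removed.

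Concretely, writing $t_0=\max(1,t-\length+1)$ for the (deterministic) left endpoint of the window, I would first fix $x\in\mathbb{R}^d$ and, with $\sigma=1$ as in the other proofs, consider $M_r(x)=\exp\bigl(x^\top\sum_{s=t_0}^{r}A_s\eta_s-\tfrac12\,x^\top(\sum_{s=t_0}^{r}A_sA_s^\top)x\bigr)$ for $t_0\le r\le t$. The one-step conditional computation of Lemma~\ref{lemma:exp_upper} (specialized to unit weights, using that $A_s$ is $\mathcal{F}_{s-1}$-measurable and $\eta_s$ is conditionally $1$-subgaussian) shows that $(M_r(x))_{r}$ is a super-martingale over the window, hence $\mathbb{E}[M_t(x)]\le 1$ for every fixed $x$, with $M_t(x)=\exp\bigl(x^\top S_t-\tfrac12\,x^\top(V_t-\lambda I_d)x\bigr)$. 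Next I would average over $x$ against the Gaussian prior $h=\mathcal{N}(0,\lambda^{-1}I_d)$ --- which may be taken time-independent since both $t$ and $\lambda$ are fixed --- and set $\bar M_t=\int_{\mathbb{R}^d}M_t(x)\,f_\lambda(x)\,dx$; Fubini's theorem, exactly as in Lemma~\ref{lemma:tilde}, then gives $\mathbb{E}[\bar M_t]\le 1$. Completing the square inside the integral --- the term $\tfrac12\lambda\|x\|_2^2$ coming from $f_\lambda$ combines with $\tfrac12\,x^\top(V_t-\lambda I_d)x$ into $\tfrac12\|x\|_{V_t}^2$, precisely because the prior precision equals the regularization parameter --- and evaluating the resulting Gaussian integral yields the closed form $\bar M_t=\exp\bigl(\tfrac12\|S_t\|_{V_t^{-1}}^2\bigr)\sqrt{\lambda^d/\det(V_t)}$, just as in the first display of the proof of Proposition~\ref{prop:deviation}. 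Markov's inequality applied to $\bar M_t$ then gives $\mathbb{P}\bigl(\|S_t\|_{V_t^{-1}}^2\ge 2\log(1/\delta)+\log(\det(V_t)/\lambda^d)\bigr)=\mathbb{P}(\bar M_t\ge 1/\delta)\le\delta\,\mathbb{E}[\bar M_t]\le\delta$, which is the claimed bound; the general-$\sigma$ statement follows by the usual rescaling of the noise.

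I do not expect a genuinely hard step here: the whole point of phrasing the inequality for a fixed $t$ is that this is exactly the regime in which the difficulty flagged before the statement evaporates. The real obstacle is deferred to the \emph{uniform-in-$t$} version, which cannot be reached by this route, since the sliding-window design matrices do not form a coherent super-martingale family across $t$; an anytime bound would instead require a union bound over $t$ (costing a logarithmic factor, and still demanding care because the effective sample size is capped at $\length$) or a different argument altogether. This is why the subsequent analysis of $\SWLinUCB$ relies only on the instantaneous inequality above.
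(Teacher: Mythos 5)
Your proposal is correct and follows essentially the same route as the paper: the paper's ``fictive regression model'' started at time $t-\length$ is exactly your restriction of the exponential super-martingale to the deterministic window $\{t_0,\dots,t\}$, after which the mixture over $\mathcal{N}(0,\lambda^{-1}I_d)$, the completion of the square, and Markov's inequality proceed identically. Your closing remark about why the anytime version needs a union bound (at the cost of a $\log(T/\delta)$ factor) also matches what the paper does in its Proposition~\ref{prop:S_t_SW_deviation}.
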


\begin{proof}
We present an interesting trick in this proof for avoiding the loss 
of information due to the sliding window 
that is only usable for instantaneous deviation inequalities.

Let $t$ be the time instant of interest.
We assume that $t \geq \length$. We know that the estimate 
$\hat{\theta}_t$
 is only based on observations between time $t-\length +1$ to $t$. The trick 
 is to create a fictive regression model starting a time $t- \length$ 
 and receiving the exact same information as the true model 
 between the time instants $t-\length +1$ to $t$.
 
 To ease the understanding of the proof, the notations with 
dotted symbols refer to the fictive model.
 Let $u$ be a time instant in $[\![ t- \length, t]\!]$.
 Let $\dot{V}_u = \sum_{s=\max(1,t- \length +1)}^u A_s A_s^{\top} + \lambda I_d$, $\dot{S}_u =  
 \sum_{s=\max(1,t- \length+1)}^u A_s \eta_s$ and $\dot{M}_u(x) = \exp(x^{\top}\dot{S}_u - x^{\top} \dot{V}_u(0) x/2)$. Once again, $\dot{V}_u(0) = 
  \sum_{s= \max(1,t- \length +1)}^u A_s A_s^{\top}$ corresponds to the design matrix without the regularization term. 
By definition, $ \forall x \in \mathbb{R}^d, \dot{M}_{t-\length}(x) = 1$. 
 
 Using the $1$-subgaussianity and following the lines of the proof of Lemma \ref{lemma:exp_upper},  
 $$
  \mathbb{E}\lbrack \dot{M}_u(x) | \mathcal{F}_{u-1}\rbrack \leq \dot{M}_{u-1}(x).
  $$
  Therefore,   $ \forall u \in [\![ t- \length, t]\!], \mathbb{E}\lbrack \dot{M}_u(x)\rbrack 
  \leq   \mathbb{E}\lbrack \dot{M}_{t-\length}(x)\rbrack =1$.
 In particular for $u=t$, $\forall x \in \mathbb{R}^d, \mathbb{E}\lbrack 
 \dot{M}_t(x) \rbrack \leq 1$. By introducing a measure of
  probability $h = \mathcal{N}(0, \frac{1}{\lambda}I_d)$, we 
  still have $\mathbb{E}\left\lbrack\int \dot{M}_t(x)
  dh(x) \right\rbrack \leq 1$ using a similar reasoning than in 
  Lemma \ref{lemma:tilde}.
We can also give an exact formula 
for $\int \dot{M}_t(x) dh(x)$ with the chosen $h$. Let us remark that
 $\dot{S}_t = S_t$ and $\dot{V}_t = V_t$.
\begin{align*}
 \int_{\mathbb{R}^d}  \dot{M}_t(x)  dh(x)
    &= \frac{1}{\sqrt{(2 \pi)^d \det(1/\lambda I_d)}}\int_{\mathbb{R}^d} \exp\left( x^{\top}S_{t} -
    \frac{1}{2}\Vert x \Vert_{\lambda I_d}^2-\frac{1}{2}\Vert x \Vert_{V_{t}(0)}^2 \right) dx \\
    &= \frac{1}{\sqrt{(2 \pi)^d \det(1/\lambda I_d)}}\int_{\mathbb{R}^d}\exp\left(1/2\Vert S_{t}\Vert_{V_t^{-1}}^2 - 
    1/2\Vert x - V_t^{-1}S_{t}\Vert_{V_t}^2\right) dx \\
    &= \frac{\exp\left(\frac{1}{2}\Vert S_{t}\Vert_{V_t^{-1}}^2\right)}{\sqrt{(2 \pi)^d \det(1/\lambda I_d)}}\int_{\mathbb{R}^d}\exp\left( - 
    \frac{1}{2}\Vert x - V_t^{-1}S_{t}
    \Vert_{V_t}^2\right) dx \\
    &= \frac{\exp\left(\frac{1}{2}\Vert S_{t}\Vert_{V_t^{-1}}^2\right)}{\sqrt{(2 \pi)^d \det(1/\lambda I_d)}} \sqrt{(2\pi)^d \det\left(V_t^{-1}\right)} \\
    &= \exp\left(\frac{1}{2}\Vert S_{t}\Vert_{V_t^{-1}}^2\right) \sqrt{\frac{\det(\lambda I_d)}{\det(V_t)}}.
\end{align*}
For this reason,
\begin{align*}
    &
    \mathbb{P}\left(\lVert S_{t} \rVert_{V_{t}^{-1}} \geq \sqrt{2\log\left(\frac{1}{\delta}\right) + \log\left(\frac{\det(V_{t})}{\det(\lambda I_d)}\right)} \right) \\
    & \,\, 
    =  \mathbb{P}\left( \exp\left(\frac{1}{2}\Vert S_t\Vert_{V_{t}^{-1}}^2\right) \sqrt{\frac{\det(\lambda I_d)}{\det(V_{t})}} \geq \frac{1}{\delta}\right) \\
     & \,\,
      \leq  \delta \mathbb{E}\left\lbrack \int_{\mathbb{R}^d}  \dot{M}_t(x)  dh(x) \right\rbrack \,\, \textnormal{(Markov's inequality)} \\
       & \,\,
     \leq \delta.
\end{align*}
\end{proof}
The next step is to upper-bound the quantity $\det(V_t)$ similarly as in Proposition \ref{ineq_det_DLINUCB} for the weighted model.
 
\begin{proposition}[Determinant inequality for the design matrix with a sliding window]
In the specific case where $V_t$ is defined as $ V_t =  \sum_{s = \max(1, t-\length +1)}^t A_s A_s^{\top} + \lambda I_d$. 
Under the assumption $\forall t, \lVert A_t \rVert_2 \leq L$, the following holds,

$$
\det(V_t) \leq  \left( \lambda + \frac{L^2 \min(t, \length)}{d}\right)^d.
$$
\end{proposition}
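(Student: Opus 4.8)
The plan is to reproduce verbatim the argument used for Proposition~\ref{ineq_det_DLINUCB}, the only modification being the bookkeeping of how many rank-one contributions enter the design matrix in the sliding-window regime. Since $V_t \succeq \lambda I_d$, it is symmetric positive definite; let $l_1,\dots,l_d$ denote its (positive) eigenvalues.

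First I would bound the determinant by the trace via AM-GM: $\det(V_t) = \prod_{i=1}^d l_i \leq \bigl(\tfrac1d \sum_{i=1}^d l_i\bigr)^d = \bigl(\tfrac1d \operatorname{trace}(V_t)\bigr)^d$. Next I would expand the trace, using $\operatorname{trace}(A_s A_s^\top) = \lVert A_s \rVert_2^2$ and $\operatorname{trace}(\lambda I_d) = \lambda d$, to get $\operatorname{trace}(V_t) = \sum_{s=\max(1,t-\length+1)}^{t} \lVert A_s \rVert_2^2 + \lambda d$. The assumption $\lVert A_s \rVert_2 \leq L$ then gives $\operatorname{trace}(V_t) \leq L^2\,\bigl|\{\max(1,t-\length+1),\dots,t\}\bigr| + \lambda d$.

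The one point that differs from Proposition~\ref{ineq_det_DLINUCB} — and it is entirely elementary — is counting the indices in the window: if $t \leq \length$ the lower limit is $1$, so the window is $\{1,\dots,t\}$ with $t$ elements; if $t > \length$ the lower limit is $t-\length+1$, so the window has exactly $\length$ elements. In both cases the cardinality is $\min(t,\length)$. Substituting, $\operatorname{trace}(V_t) \leq L^2 \min(t,\length) + \lambda d$, and therefore $\det(V_t) \leq \bigl(\tfrac1d (L^2 \min(t,\length) + \lambda d)\bigr)^d = \bigl(\lambda + \tfrac{L^2 \min(t,\length)}{d}\bigr)^d$, which is the claimed bound. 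There is no real obstacle here; the proof is a one-line adaptation of the earlier determinant inequality, with $\sum_{s=1}^t w_s$ replaced by the count $\min(t,\length)$ of unit-weight terms in the sliding window.
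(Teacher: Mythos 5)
Your proof is correct and matches the paper's, which simply states that the argument is the same as for Proposition~\ref{ineq_det_DLINUCB}: bound the determinant by the trace via AM-GM and count the $\min(t,\length)$ unit-weight terms in the window. Nothing further is needed.
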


The proof of this proposition is the same as in Proposition 
\ref{ineq_det_DLINUCB}.
By using the previous inequality, we can obtain the following proposition,

\begin{proposition} 
\label{prop:S_t_SW_deviation}
When using a sliding window model where the last $\length$ terms are considered, for all $\delta >0$,
$$
\mathbb{P}\left( \exists t \leq T, \lVert S_t \rVert_{V_t^{-1}}\geq  \sigma\sqrt{ 2\log\left(\frac{T}{\delta}\right) + d\log\left(
1 + \frac{L^2 \min(t, \length)}{\lambda d} 
\right)}   \right) \leq \delta.
$$
\end{proposition}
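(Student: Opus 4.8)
The plan is to combine the instantaneous deviation inequality of Proposition~\ref{prop:instantaneous_deviation_SW} with a simple union bound over the $T$ time steps, using the preceding determinant inequality to turn the random bound $\log(\det(V_t)/\lambda^d)$ into the explicit closed form appearing in the statement.

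First I would fix $t \leq T$ and apply Proposition~\ref{prop:instantaneous_deviation_SW} with the confidence level $\delta$ replaced by $\delta/T$, which yields
$$
\mathbb{P}\left( \lVert S_t \rVert_{V_t^{-1}} \geq \sigma\sqrt{ 2\log\!\left(\tfrac{T}{\delta}\right) + \log\!\left(\tfrac{\det(V_t)}{\lambda^d}\right)} \right) \leq \frac{\delta}{T}.
$$
(When $t < \length$ the window is simply $\{1,\dots,t\}$, so nothing is discarded and the argument of Proposition~\ref{prop:instantaneous_deviation_SW} — in fact the standard self-normalized argument — applies verbatim with $\length$ replaced by $t$; this is why $\min(t,\length)$ rather than $\length$ shows up below.) Next I would invoke the determinant inequality $\det(V_t) \leq (\lambda + L^2 \min(t,\length)/d)^d$ just established, which gives
$$
\log\!\left(\frac{\det(V_t)}{\lambda^d}\right) \leq d\log\!\left(1 + \frac{L^2 \min(t,\length)}{\lambda d}\right).
$$
Since the right-hand side dominates $\log(\det(V_t)/\lambda^d)$, the threshold in the displayed event is no larger than $\sigma\sqrt{2\log(T/\delta) + d\log(1 + L^2\min(t,\length)/(\lambda d))}$, so the corresponding ``bad event'' for the index $t$ contains the event from Proposition~\ref{prop:S_t_SW_deviation} restricted to that $t$, and hence the latter still has probability at most $\delta/T$.

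Finally I would take a union bound over $t = 1,\dots,T$:
$$
\mathbb{P}\left( \exists t \leq T,\, \lVert S_t \rVert_{V_t^{-1}} \geq \sigma\sqrt{ 2\log\!\left(\tfrac{T}{\delta}\right) + d\log\!\left(1 + \tfrac{L^2 \min(t,\length)}{\lambda d}\right)} \right) \leq \sum_{t=1}^{T} \frac{\delta}{T} = \delta,
$$
which is exactly the claim. There is no genuine obstacle here: in contrast with the discounted setting of Section~\ref{Appendix:confidence_bounds}, a time-uniform (maximal) bound is unavailable because the supermartingale property of $M_t(x)$ fails when an action leaves the window, so the $\log(T/\delta)$ inflation coming from the union bound is the unavoidable cost, and the only point requiring care is that Proposition~\ref{prop:instantaneous_deviation_SW} is a per-time statement that must be instantiated at each $t$ separately before summing the $T$ failure probabilities.
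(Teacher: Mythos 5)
Your proof is correct and matches the paper's argument exactly: a union bound over $t=1,\dots,T$, the determinant inequality $\det(V_t)\leq(\lambda+L^2\min(t,\length)/d)^d$ to pass from the explicit threshold to the (smaller) random one, and Proposition~\ref{prop:instantaneous_deviation_SW} applied at level $\delta/T$ for each fixed $t$. Your side remarks — that the $t<\length$ case needs no discarding and that the $\log(T/\delta)$ inflation is the price of losing the supermartingale structure — are also consistent with the paper's discussion.
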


\begin{proof}
\begin{align*}
& \mathbb{P}\left( \exists t \leq T, \lVert S_t \rVert_{V_t^{-1}}\geq  \sigma\sqrt{ 2\log\left(\frac{T}{\delta}\right) + d\log\left(
1 + \frac{L^2 \min(t, \length)}{\lambda d} 
\right)}   \right) \\
& \quad
\leq \sum_{t=1}^T \mathbb{P}\left( \lVert S_t \rVert_{V_t^{-1}}\geq  \sigma\sqrt{ 2\log\left(\frac{T}{\delta}\right) + d\log\left(
1 + \frac{L^2 \min(t, \length)}{\lambda d} 
\right)}   \right) \\
& \quad
\leq \sum_{t=1}^T \mathbb{P}\left( \lVert S_t \rVert_{V_t^{-1}}\geq  \sigma\sqrt{ 2\log\left(\frac{T}{\delta}\right) + \log\left(\frac{\det(V_t)}{\lambda^d}\right)}   \right) \\
& \quad 
\leq \sum_{t=1}^T \frac{\delta}{T} \quad (\text{Proposition \ref{prop:instantaneous_deviation_SW}}) \leq \delta.
\end{align*}
\end{proof}

\subsection{Regret analysis}

The regret analysis of the $\SWLinUCB$ algorithm is  similar to the 
one proposed for $\DLinUCB$. We start by defining the confidence ellipsoid used
by the algorithm $\SWLinUCB$.

With the $\SWLinUCB$ algorithm, the $\beta_t$ term is defined in the following way,
\begin{align}
\label{beta_t_SW}
\beta_t = \sqrt{\lambda} S + \sigma\sqrt{ 2\log\left(\frac{T}{\delta}\right) + d\log\left(
1 + \frac{L^2 \min(t, \length)}{\lambda d} 
\right)} 
\end{align}
\underline{Remark:} The cost of loosing some information at each step due 
to the sliding window when $t> \length$ is the term $\log\left(\frac{T}{\delta} \right)$ 
rather than  $\log\left(\frac{1}{\delta} \right)$ in the definition of $\beta_t$.

Note that due to the use of a union bound technique the confidence 
radius is larger than the one suggested in \cite{cheung2018learning}. Nevertheless,
this was not taken into account in simulations for $\SWLinUCB$.

\begin{proposition}
\label{proposition:bar_theta_SW}
Let $\mathcal{C}_t =\bigg\{\theta \in \mathbb{R}^d: \lVert \theta - 
\hat{\theta}_{t-1} \rVert_{V_{t-1}^{-1}} \leq \beta_{t-1} \bigg\} $ denote the confidence ellipsoid.
Let $\bar\theta_t = V_{t-1}^{-1} \left(\sum_{s=\max(1, t-\length)}^{t-1} 
 A_s A_s^{\top} \theta_s^\star + \lambda \theta_t^\star \right)$. Then,
$\forall \delta >0$, 
 $$
\mathbb{P}\left( \forall t \geq 1, \bar\theta_t \in \mathcal{C}_t\right)\geq 1- \delta.
 $$
\end{proposition}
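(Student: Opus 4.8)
The plan is to follow the same two–step template as in the proof of Proposition~\ref{proposition:bar_theta}, but in the simpler unweighted sliding-window setting where the confidence matrix is just $V_{t-1}$ rather than $V_{t-1}\tV_{t-1}^{-1}V_{t-1}$.

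\textbf{Step 1 (algebraic identity).} First I would expand $\hat{\theta}_{t-1}$. Using the reward model $X_s = \langle A_s,\theta_s^\star\rangle + \eta_s$ together with the closed form $\hat{\theta}_{t-1} = V_{t-1}^{-1}\sum_{s=\max(1,t-\length)}^{t-1} A_s X_s$ and $V_{t-1} = \sum_{s=\max(1,t-\length)}^{t-1} A_s A_s^{\top} + \lambda I_d$, one gets $\hat{\theta}_{t-1} = V_{t-1}^{-1}\bigl(\sum_{s=\max(1,t-\length)}^{t-1} A_s A_s^{\top}\theta_s^\star + S_{t-1}\bigr)$, where $S_{t-1} = \sum_{s=\max(1,t-\length)}^{t-1} A_s\eta_s$ is exactly the quantity controlled by Proposition~\ref{prop:S_t_SW_deviation}. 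Subtracting this from the definition of $\bar\theta_t$ makes the $\sum A_s A_s^{\top}\theta_s^\star$ terms cancel and leaves the clean identity $\bar\theta_t - \hat{\theta}_{t-1} = \lambda V_{t-1}^{-1}\theta_t^\star - V_{t-1}^{-1}S_{t-1}$.

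\textbf{Step 2 (norm bound).} Then I would take the $\lVert\cdot\rVert_{V_{t-1}}$ norm and apply the triangle inequality: $\lVert \bar\theta_t - \hat{\theta}_{t-1}\rVert_{V_{t-1}} \leq \lVert V_{t-1}^{-1}S_{t-1}\rVert_{V_{t-1}} + \lambda\lVert V_{t-1}^{-1}\theta_t^\star\rVert_{V_{t-1}} = \lVert S_{t-1}\rVert_{V_{t-1}^{-1}} + \lambda\lVert\theta_t^\star\rVert_{V_{t-1}^{-1}}$. For the regularization term, $V_{t-1}\succeq\lambda I_d$ yields $\lVert\theta_t^\star\rVert_{V_{t-1}^{-1}} \leq \lVert\theta_t^\star\rVert_2/\sqrt{\lambda}\leq S/\sqrt{\lambda}$, so this term is at most $\sqrt{\lambda}S$. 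For the noise term, I invoke the uniform deviation bound of Proposition~\ref{prop:S_t_SW_deviation}: with probability at least $1-\delta$, simultaneously for all $t\leq T$, $\lVert S_{t-1}\rVert_{V_{t-1}^{-1}} \leq \sigma\sqrt{2\log(T/\delta) + d\log(1 + L^2\min(t-1,\length)/(\lambda d))}$ (the case $t=1$ being trivial since $S_0=0$). Summing the two contributions, the right-hand side is exactly $\beta_{t-1}$ as defined in~\eqref{beta_t_SW}, so on this event $\lVert\bar\theta_t - \hat{\theta}_{t-1}\rVert_{V_{t-1}}\leq\beta_{t-1}$ for every $t\geq 1$, i.e.\ $\bar\theta_t\in\mathcal{C}_t$ for all $t$.

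There is no real obstacle at this level: all the difficulty has already been absorbed into Proposition~\ref{prop:S_t_SW_deviation}, whose own delicate point is that the supermartingale $M_t(x)$ is destroyed when past actions leave the window, which is why its proof used the \emph{fictive-model} trick for a fixed time instant followed by a union bound over $t\leq T$. The only thing to keep track of here is that this union bound is what turns the $\log(1/\delta)$ of the $\DLinUCB$ analysis into the $\log(T/\delta)$ appearing in $\beta_t$; otherwise the argument is a verbatim adaptation of Proposition~\ref{proposition:bar_theta} with $\gamma=1$, $\mu_t=\lambda$ and $\tV_{t-1}=V_{t-1}$.
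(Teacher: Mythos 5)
Your proof is correct and follows essentially the same route as the paper's: the identity $\bar\theta_t - \hat\theta_{t-1} = \lambda V_{t-1}^{-1}\theta_t^\star - V_{t-1}^{-1}S_{t-1}$, the triangle inequality in the $\lVert\cdot\rVert_{V_{t-1}}$ norm, the bound $\lambda\lVert\theta_t^\star\rVert_{V_{t-1}^{-1}}\leq\sqrt{\lambda}S$ via $V_{t-1}\succeq\lambda I_d$, and Proposition~\ref{prop:S_t_SW_deviation} for the noise term. You even correctly use the $\lVert\cdot\rVert_{V_{t-1}}$ norm where the paper's displayed subscript $V_{t-1}^{-1}$ is a typo, and correctly attribute the $\log(T/\delta)$ factor to the union bound.
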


 \begin{proof}
\begin{align*}
 \bar \theta_{t} - \hat{\theta}_{t-1} &=V_{t-1}^{-1}\left( \sum_{s=\max(1, t-\length)}^{t-1}
  A_s A_s^{\top}\theta_{s}^{\star} + \lambda  \theta^{\star}_t - \sum_{s=\max(1, t-\length)}^{t-1}A_s A_s^{\top} \theta_{s}^{\star} - \sum_{s=\max(1, t-\length)}^{t-1} A_s \eta_s\right) \\
 &= - V_{t-1}^{-1} S_{t-1} + \lambda V_{t-1}^{-1} \theta_t^{\star}.
\end{align*}
Therefore,
\begin{align*}
\lVert \bar \theta_{t} - \hat{\theta}_{t-1}\rVert_{V_{t-1}^{-1}} &\leq \lVert S_{t-1} \rVert_{V_{t-1}^{-1}} + 
\lambda  \lVert \theta_t^{\star}\rVert_{V_{t-1}^{-1}} \\
&\leq \lVert S_{t-1} \rVert_{V_{t-1}^{-1}} + \sqrt{\lambda} S \quad
 (V_{t-1}^{-1} \leq \frac{1}{\lambda} I_d) \\
&\leq \beta_{t-1}  \quad \textnormal{(with probability $\geq 1-\delta$ thanks to Proposition \ref{prop:S_t_SW_deviation})}.
\end{align*}
\end{proof}
 We need to bound 
 the quantity $\sum_{t=1}^T \min\bigl(1, \lVert A_t \rVert_{V_{t-1}^{-1}}^2
 \bigr)$. An analysis of this quantity is already proved in \cite{cheung2018learning}.
  Nevertheless, we provide a simpler analysis in the following proposition.

\begin{proposition} 
\label{prop:elliptical_SW}
With the sliding window model, the following upper bound holds,
$$
\sum_{t=1}^T  \min\left(1,\lVert A_t\rVert_{V_{t-1}^{-1}}^2\right) \leq 2d \ceil{T/ \length} \log\left(  1+ \frac{\length L^2}{\lambda d} \right).
$$
\end{proposition}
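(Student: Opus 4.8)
The plan is to split the horizon into consecutive blocks of length $\length$ and to apply, \emph{within each block}, the standard elliptical potential argument, exploiting the fact that the sliding-window design matrix behaves like a genuine ``growing'' design matrix as long as we stay inside a block that starts after the window has been emptied. Concretely, write $\{1,\dots,T\} = \bigcup_{j=1}^{\ceil{T/\length}} I_j$ with $I_j = \{(j-1)\length+1,\dots,\min(j\length,T)\}$, so there are at most $\ceil{T/\length}$ blocks, each of cardinality at most $\length$. The key observation is that for $t$ in block $I_j$, the actions entering $V_{t-1}$ that also belong to $I_j$ are exactly $A_{(j-1)\length+1},\dots,A_{t-1}$, i.e.\ none of them has yet been evicted by the sliding window; the evicted actions only \emph{remove} mass, so one cannot directly compare $V_{t-1}$ across the whole block. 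To handle this cleanly I would introduce, for each block $j$, the auxiliary \emph{intra-block} design matrix $W_{t} = \lambda I_d + \sum_{s=(j-1)\length+1}^{t} A_s A_s^{\top}$ for $t\in I_j$, which only accumulates and never discards.

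The first step is the pointwise bound $\min(1,\lVert A_t\rVert_{V_{t-1}^{-1}}^2) \le 2\log(1+\lVert A_t\rVert_{V_{t-1}^{-1}}^2)$, exactly as in the proof accompanying Lemma~\ref{lemma_ineq_mat}. The second step is to argue that $V_{t-1} \succeq W_{t-1}$ for $t\in I_j$ (because $V_{t-1}$ contains all the intra-block rank-one terms of $W_{t-1}$ plus possibly extra nonnegative terms from the tail of the previous block), hence $\lVert A_t\rVert_{V_{t-1}^{-1}}^2 \le \lVert A_t\rVert_{W_{t-1}^{-1}}^2$. The third step is the classical determinant telescoping inside the block: $\sum_{t\in I_j}\log(1+\lVert A_t\rVert_{W_{t-1}^{-1}}^2) \le \log\bigl(\det(W_{\max I_j})/\det(\lambda I_d)\bigr)$, and by the AM--GM / trace bound (Proposition~\ref{ineq_det_DLINUCB} specialised to unit weights over a window of length $\le\length$) this is at most $d\log\bigl(1+\length L^2/(\lambda d)\bigr)$. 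Summing over the at most $\ceil{T/\length}$ blocks and folding in the factor $2$ from step one gives the claimed $2d\ceil{T/\length}\log\bigl(1+\length L^2/(\lambda d)\bigr)$.

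The main obstacle — and the reason a naive ``one global elliptical lemma'' fails — is precisely the non-monotonicity of $t\mapsto V_t$ caused by eviction: the determinant $\det(V_t)$ can go down as well as up, so the telescoping product $\prod_t \det(V_t)/\det(V_{t-1})$ is not lower-bounded by the product of the $(1+\lVert A_t\rVert_{V_{t-1}^{-1}}^2)$ factors. The block decomposition is exactly what sidesteps this: inside a block the matrix is effectively monotone (after dominating it from below by $W_{t-1}$), and we simply pay a union-type price of one ``$d\log(\cdots)$'' per block, i.e.\ a factor $\ceil{T/\length}$. One should double-check the boundary block (which may be shorter than $\length$) and the edge effect for $t\le\length$ where $\max(1,t-\length+1)=1$, but in both cases the bound only improves since the effective window is shorter, so the stated inequality still holds.
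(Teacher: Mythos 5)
Your proposal is correct and follows essentially the same route as the paper's proof: the same block decomposition of the horizon into $\ceil{T/\length}$ segments of length $\length$, the same auxiliary intra-block matrix $W$ that only accumulates actions, the same domination $V_{t-1}\succeq W_{t-1}$ within a block, and the same per-block determinant telescoping combined with the trace/AM--GM bound. The boundary-block remark is a minor refinement the paper glosses over, but the argument is identical in substance.
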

\begin{proof}

We start by rewriting the sum as follows.
\begin{align*}
\sum_{t=1}^T   \min\left(1,\lVert A_t\rVert_{V_{t-1}^{-1}}^2\right) =
 \sum_{k=0}^{\ceil{T/\length}-1} \sum_{t= k \length + 1}^{(k+1)\length}  \min\left(1,\lVert A_t\rVert_{V_{t-1}^{-1}}^2\right) 
\end{align*}

For the $k$-th block of length $\length$ we define the matrix 
$W_t^{(k)} = \sum_{s= k \length +1}^t A_s A_s^{\top} + \lambda I_d$.
We also have $\forall t \in [\![ k\length, (k+1) \length]\!], V_t \geq W_t^{(k)}$ as every term in $W_t^{(k)}$ is 
contained in $V_t$ and the extra-terms in $V_t$ correspond
 to positive definite matrices. The matrices are definite positive, 
 thus $V_t^{-1} \leq (W_t^{(k)})^{-1}$ and consequently,
$$
\sum_{k=0}^{\ceil{T/\length}-1} \sum_{t= k \length + 1}^{(k+1)\length}
  \min\left(1,\lVert A_t\rVert_{V_{t-1}^{-1}}^2\right) 
   \leq \sum_{k=0}^{\ceil{T/\length}-1} \sum_{t= k \length + 1}^{(k+1)\length}  \min\left(1,\lVert A_t\rVert_{(W_{t-1}^{(k)})^{-1}}^2\right) 
$$

Furthermore, $\forall t \in[\![ k \length , (k+1) \length]\!] $
 we have,
$$ 
\det(W_t^{(k)}) = \det(W_{t-1}^{(k)}) \left( 1 +
 \lVert A_t \rVert_{(W_{t-1}^{(k)})^{-1}}^2 \right).
$$

With positive definitive matrices whose determinants 
are strictly positive,
 this implies that
$$
\frac{\det(W_{(k+1)\length}^{(k)})}{\det(W_{k \length}^{(k)})} = \prod_{t=k\length +1}^{(k+1) \length}
 \frac{\det(W_t^{(k)})}{\det(W_{t-1}^{(k)})} =  
 \prod_{t=k\length +1}^{(k+1) \length}  \left(1 +
  \lVert A_t\rVert_{ (W_{t-1}^{(k)})^{-1}}^2\right).
$$

By definition we have $W_{k \length}^{(k)} = \lambda I_d$ and  
$\forall x \geq 0, \min(1,x) \leq 2 \log(1+x)$. So,
\begin{align*}
\sum_{t=1}^T   \min\left(1,\lVert A_t\rVert_{V_{t-1}^{-1}}^2\right) &\leq 2  \sum_{k=0}^{\ceil{T/\length}-1} \sum_{t= k \length + 1}^{(k+1)\length}
 \log \left( 1 +\lVert A_t\rVert_{(W_{t-1}^{(k)})^{-1}}^2 \right) \\
& \leq 2 \sum_{k=0}^{\ceil{T/\length}-1} \log \left( \frac{\det(W_{(k+1)\length}^{(k)}   )}{\lambda^d} \right).
\end{align*}

Knowing that $W_{(k+1)\length}^{(k)}$ contains exactly $\length$
 terms allows us to give the following bound (by following the proof of Proposition \ref{ineq_det_DLINUCB}),
$$
\det(W_{(k+1)\length}^{(k)}) \leq \left(  \lambda + \frac{L^2 \length}{d}  \right)^d.
$$
Finally,
\begin{align*}
\sum_{t=1}^T   \min\left(1,\lVert A_t\rVert_{V_{t-1}^{-1}}^2\right) &\leq 2 d \ceil{T/ \length} \log\left( 1+ \frac{L^2 \length}{\lambda d} \right).
\end{align*}
\end{proof}

With those results we can give a high probability upper bound for the
 cumulative dynamic regret of the $\SWLinUCB$ 
algorithm.

\begin{theorem}
Assuming that $\sum_{s=1}^{T-1} \lVert \theta^{\star}_s - \theta^{\star}_{s+1}\rVert_2 \leq B_T$, the regret of the $\SWLinUCB$ 
algorithm may be bounded for all $\length >0$, with probability at least $1-\delta$, by
$$
R_T \leq 2L B_T \length + 2\sqrt{2} \beta_T \sqrt{dT} \sqrt{\ceil{T/ \length}} \sqrt{\log \left( 1 + \frac{L^2 \length}{\lambda d} \right)},
$$
where $\beta_T$ is defined in Equation (\ref{beta_t_SW}).
\end{theorem}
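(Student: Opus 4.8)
The plan is to follow the same four-step scheme as the proof of Theorem~\ref{th:regret_dLinUCB}, replacing the discount factor by the hard threshold of the sliding window. Write $A_t^{\star} = \argmax_{a \in \mathcal{A}_t} \langle a, \theta_t^{\star}\rangle$ and $\theta_t = \argmax_{\theta \in \mathcal{C}_t} \langle A_t, \theta\rangle$, where $\mathcal{C}_t = \{\theta : \lVert \theta - \hat\theta_{t-1}\rVert_{V_{t-1}^{-1}} \leq \beta_{t-1}\}$ with $\beta_{t-1}$ given by~\eqref{beta_t_SW}. Decomposing the instantaneous regret as $r_t = \langle A_t^{\star} - A_t, \bar\theta_t\rangle + \langle A_t^{\star} - A_t, \theta_t^{\star} - \bar\theta_t\rangle$ and working on the event $\{\forall t,\ \bar\theta_t \in \mathcal{C}_t\}$ --- which by Proposition~\ref{proposition:bar_theta_SW} has probability at least $1-\delta$ --- optimism gives $\langle A_t^{\star}, \bar\theta_t\rangle \leq \text{UCB}_t(A_t^{\star}) \leq \text{UCB}_t(A_t) = \langle A_t, \theta_t\rangle$, hence $r_t \leq \langle A_t, \theta_t - \bar\theta_t\rangle + \langle A_t^{\star} - A_t, \theta_t^{\star} - \bar\theta_t\rangle$. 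Cauchy--Schwarz together with $\lVert a\rVert_2 \leq L$ then yields $r_t \leq \lVert A_t\rVert_{V_{t-1}^{-1}} \lVert \theta_t - \bar\theta_t\rVert_{V_{t-1}} + 2L \lVert \theta_t^{\star} - \bar\theta_t\rVert_2$.

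Second, I would bound the two factors separately. Since $\theta_t \in \mathcal{C}_t$ by construction and $\bar\theta_t \in \mathcal{C}_t$ with high probability, the triangle inequality in the $V_{t-1}$-norm gives $\lVert \theta_t - \bar\theta_t\rVert_{V_{t-1}} \leq 2\beta_{t-1}$. For the bias, the definitions of $\bar\theta_t$ and $V_{t-1}$ in Proposition~\ref{proposition:bar_theta_SW} give the identity $\theta_t^{\star} - \bar\theta_t = V_{t-1}^{-1} \sum_{s = \max(1,t-\length)}^{t-1} A_s A_s^{\top}(\theta_t^{\star} - \theta_s^{\star})$; I would then write $\theta_t^{\star} - \theta_s^{\star} = \sum_{p=s}^{t-1}(\theta_{p+1}^{\star} - \theta_p^{\star})$, exchange the order of summation over $s$ and $p$, and invoke the spectral argument from the proof of Theorem~\ref{th:regret_dLinUCB}: since $\sum_{s=\max(1,t-\length)}^{p} A_s A_s^{\top} \leq V_{t-1}$ for every $p \leq t-1$, the matrix $V_{t-1}^{-1}\sum_{s=\max(1,t-\length)}^{p} A_s A_s^{\top}$ has spectral radius at most $1$, so $\lVert \theta_t^{\star} - \bar\theta_t\rVert_2 \leq \sum_{p=\max(1,t-\length)}^{t-1} \lVert \theta_p^{\star} - \theta_{p+1}^{\star}\rVert_2$. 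In contrast with the discounted case, no residual tail term appears here: the sliding window retains only observations from the last $\length$ rounds, so there is nothing to discard separately, which is why the bound has no analogue of the $\frac{\gamma^D}{1-\gamma}$ term of~\eqref{eq:theorem_regret}.

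Third, using $r_t \leq 2$ (from $\lvert \langle A_t, \theta_t^{\star}\rangle\rvert \leq 1$) to pass from $\lVert A_t\rVert_{V_{t-1}^{-1}}$ to $\min(1, \lVert A_t\rVert_{V_{t-1}^{-1}})$ and summing over $t$, I obtain $R_T \leq 2L \sum_{t=1}^{T} \sum_{p=\max(1,t-\length)}^{t-1} \lVert \theta_p^{\star} - \theta_{p+1}^{\star}\rVert_2 + 2\beta_T \sum_{t=1}^{T} \min(1, \lVert A_t\rVert_{V_{t-1}^{-1}})$, where I used that $t \mapsto \beta_t$ is increasing since $\min(t,\length)$ is nondecreasing. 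In the double sum each increment $\lVert \theta_p^{\star} - \theta_{p+1}^{\star}\rVert_2$ is counted for at most $\length$ values of $t$, hence it is at most $\length B_T$. For the second sum, Cauchy--Schwarz gives $\sum_{t=1}^{T} \min(1, \lVert A_t\rVert_{V_{t-1}^{-1}}) \leq \sqrt{T}\,\sqrt{\sum_{t=1}^{T}\min(1, \lVert A_t\rVert_{V_{t-1}^{-1}}^2)}$, and Proposition~\ref{prop:elliptical_SW} bounds the remaining sum by $2d\ceil{T/\length}\log\!\left(1 + \length L^2/(\lambda d)\right)$. Collecting the terms yields exactly the claimed inequality. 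The step I expect to be most delicate is the bias bound --- in particular, being careful with the truncation at $\max(1,t-\length)$ when exchanging the two sums, and with the spectral control of the (in general non-symmetric) matrix $V_{t-1}^{-1}\sum_{s} A_s A_s^{\top}$; the rest is the routine adaptation of the stationary linear bandit argument of \cite{abbasi2011improved}, now supported by the sliding-window deviation inequality of Proposition~\ref{prop:S_t_SW_deviation}.
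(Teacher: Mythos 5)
Your proposal follows the paper's proof essentially step for step: the same optimistic decomposition of $r_t$ via $\bar\theta_t$, the same $2\beta_{t-1}$ bound on $\lVert\theta_t-\bar\theta_t\rVert_{V_{t-1}}$, the same telescoping-and-spectral argument for the bias (correctly noting that the hard window truncation removes the need for a residual tail term), and the same final summation using $r_t\leq 2$, Cauchy--Schwarz, and Proposition~\ref{prop:elliptical_SW}. It is correct and matches the paper's argument.
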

\begin{proof} 

\hspace{0.2cm}

\underline{1rst step:} Upper bound for the instantaneous regret 

Defining $A_t^{\star} = \argmax_{a \in \mathcal{A}_t} \langle a, \theta^{\star}_t\rangle$ and
$ \theta_t = \argmax_{\theta \in \mathcal{C}_t} \langle A_t, \theta \rangle.$
We have,
\begin{align*}
r_t &=  \max_{a \in \mathcal{A}_t} \langle a, \theta^{\star}_t \rangle - \langle A_t, \theta^{\star}_t \rangle  = \langle A_t^{\star} - A_t, 
\theta_t^{\star} \rangle \\
&=  \langle A_t^{\star} - A_t, \bar\theta_t \rangle + \langle A_t^{\star} - A_t, \theta_t^{\star} - \bar\theta_t \rangle 
\end{align*}

Under the event $\{ \forall t >0, \, \bar\theta_t \in \mathcal{C}_t\}$, 
that occurs with probability at least $1-\delta$ thanks to Proposition \ref{proposition:bar_theta_SW},
\begin{align}
\label{ineq:UCBs_SW}
\langle A_t^{\star}, \bar \theta_t \rangle \leq \argmax_{\theta \in \mathcal{C}_t} \langle A_t^{\star}, \theta \rangle = \text{UCB}_t(A_t^{\star}) 
\leq \text{UCB}_t(A_t) =  \argmax_{\theta \in \mathcal{C}_t} \langle A_t, \theta \rangle  = \langle A_t, \theta_t \rangle
\end{align}
Using Inequality (\ref{ineq:UCBs_SW}), with probability larger than $1-\delta$, $\forall t>0$,
\begin{align*}
r_t &\leq \langle A_t, \theta_t - \bar\theta_t \rangle +  \langle A_t^{\star} - A_t, \theta_t^{\star} - \bar\theta_t \rangle \\
&\leq \lVert A_t\rVert_{V_{t-1}^{-1}} \lVert \theta_t - \bar\theta_t \rVert_{V_{t-1}} + \lVert A_t^{\star} - A_t \rVert_2 \lVert \theta_t^{\star} - 
\bar\theta_t  \rVert_2 \quad \textnormal{(Cauchy-Schwarz)} \\
&\leq \lVert A_t\rVert_{V_{t-1}^{-1}} \lVert \theta_t - \bar\theta_t \rVert_{V_{t-1}} + 2L \lVert \theta_t^{\star} - \bar\theta_t  \rVert_2 
\quad (\text{Bounded action assumption}).
\end{align*}
As for the analysis of the regret for the $\DLinUCB$ algorithm, the two terms are upper bounded using different techniques. 
The first term is handled with  the deviation inequality of
Proposition \ref{proposition:bar_theta_SW}.

\underline{2nd step:} Upper bound for $\lVert \theta_t - \bar\theta_t \rVert_{V_{t-1}}$ 

We have,
$$
\lVert \theta_t - \bar\theta_t \rVert_{V_{t-1}}  \leq \lVert \theta_t - \hat{\theta}_{t-1}\rVert_{V_{t-1}} + \lVert \bar\theta_t - \hat{\theta}_{t-1}\rVert_{V_{t-1} } \leq 2 \beta_{t-1}.
$$
Where the last inequality holds because under our assumption $\bar\theta_t \in \mathcal{C}_t$ 
with probability at least $1-\delta$ and by definition $\theta_t \in \mathcal{C}_t$.

\underline{3rd step:} Upper bound for the bias. 

This step is similar to 
the proof proposed in \cite{cheung2018learning} for Lemma 1.
\begin{align*}
\lVert \theta_t^{\star} - \bar\theta_t  \rVert_2 &= 
\left\lVert V_{t-1}^{-1}\left(\sum_{s=\max(1,t- \length)}^{t-1}A_s A_s^{\top}
(\theta_s^{\star}- \theta_t^{\star}) \right) \right\rVert_2 \\
 &\leq  \left\lVert \sum_{s= \max(1,t- \length) }^{t-1} V_{t-1}^{-1}A_s A_s^{\top} 
 \sum_{p=s}^{t-1}(\theta_p^{\star}- \theta_{p+1}^{\star}) \right\rVert_2  \\
 & \leq \left\lVert \sum_{p=\max(1,t-\length)}^{t-1} V_{t-1}^{-1} 
 \sum_{s=\max(1,t-\length)}^{p} A_s A_s^{\top}  (\theta_p^{\star}- \theta_{p+1}^{\star}) \right\rVert_2 \\
  & \leq  \sum_{p=\max(1,t-\length)}^{t-1}  \left\lVert V_{t-1}^{-1}
  \sum_{s=\max(1,t-\length)}^{p} A_s A_s^{\top} 
  (\theta_p^{\star}- \theta_{p+1}^{\star}) \right\rVert_2 \\
  & \leq \sum_{p=\max(1,t-\length)}^{t-1}  \lambda_{\max} \left(V_{t-1}^{-1}
   \sum_{s=\max(1,t-\length)}^{p}  A_s A_s^{\top}\right)
   \lVert \theta_p^{\star}- \theta_{p+1}^{\star}\rVert_2.
\end{align*}
Furthermore,  for $x \in \mathbb{R}^d$ such that 
$\lVert x \rVert_2 \leq 1$, we have that for $\max(1,t-\length) \leq p \leq t-1$,
\begin{align*}
x^{\top} V_{t-1}^{-1} \sum_{s=\max(1,t-\length)}^{p} A_s A_s^{\top} x &
\leq x^{\top} V_{t-1}^{-1} \sum_{s=\max(1,t-\length)}^{t-1} A_s A_s^{\top} x  + 
\lambda x^{\top} V_{t-1}^{-1}x  \\
& \leq x^{\top} V_{t-1}^{-1} \left(\sum_{s=\max(1,t-\length)}^{t-1}  A_s A_s^{\top} +
 \lambda I_d\right)x = x^{\top}x \leq 1.
\end{align*}
By combining the second and the third step,
$$
r_t \leq 2L\sum_{p=\max(1,t-\length)}^{t-1}\lVert \theta_p^{\star}- \theta_{p+1}^{\star}\rVert_2 + 2\beta_{t-1} \lVert A_t\rVert_{V_{t-1}^{-1}}.
$$
By using the assumption $\forall a \in \mathcal{A}_t, \left| \langle A_t, \theta_t^{\star} \rangle \right|  \leq 1$, we also have $r_t \leq 2$.
So, with probability greater than $1-\delta$,
\begin{align}
r_t \leq 2L\sum_{p=\max(1,t-\length)}^{t-1}\lVert \theta_p^{\star}- \theta_{p+1}^{\star}\rVert_2 +
 2\beta_{t-1} \min\left(1,\lVert A_t\rVert_{V_{t-1}^{-1}}\right).
\end{align}
To conclude the proof, we use the results of Proposition \ref{prop:elliptical_SW}.

\underline{Final step:}
\begin{align*}
R_T &= \sum_{t=1}^T r_t \leq 2L \sum_{t=1}^T \sum_{p=\max(1,t-\length)}^{t-1}\lVert \theta_p^{\star}- \theta_{p+1}^{\star}\rVert_2 + 
2 \beta_{T} \sum_{t=1}^T  \min\left(1,\lVert A_t\rVert_{V_{t-1}^{-1}}\right) \\
& \leq 2L \sum_{t=1}^T \sum_{p=\max(1,t-\length)}^{t-1}\lVert \theta_p^{\star}- \theta_{p+1}^{\star}\rVert_2 + 
2 \beta_{T} \sqrt{T} \sqrt{\sum_{t=1}^T  \min\left(1,\lVert A_t\rVert_{V_{t-1}^{-1}}^2\right)}  \\
&\leq 2L B_T \length + 2\sqrt{2} \beta_T \sqrt{dT} \sqrt{\ceil{T/ \length}} \sqrt{ \log\left(  1+
 \frac{\length L^2}{\lambda d} \right)}.
\end{align*}
In the first inequality, we use the fact that $t \mapsto \beta_t$ is increasing.
The second inequality is an application of the Cauchy-Schwarz inequality to 
the second term. The last inequality is an application of Proposition \ref{prop:elliptical_SW}
\end{proof}

By denoting $\tilde{O}$ the function growth when omitting the logarithmic terms, we have the following Corollary.

\begin{corollary}[Asymptotic regret bound for $\SWLinUCB$]
If $B_T$ is known, by choosing $\length = (\frac{dT}{B_T})^{2/3}$, the regret of the $\SWLinUCB$ algorithm is 
asymptotically upper bounded with high 
probability by a term  $\tilde{O}(d^{2/3} B_T^{1/3}T^{2/3})$ when $T \to \infty$.

If $B_T$ is unknown, by choosing $\length = d^{2/3}T^{2/3}$, 
the regret of the $\SWLinUCB$ algorithm is asymptotically upper bounded with high probability by 
a term $\tilde{O}(d^{2/3} B_T T^{2/3})$ when $T \to \infty$.
\end{corollary}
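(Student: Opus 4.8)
The plan is to substitute the two prescribed values of the window length $\length$ into the non-asymptotic bound of the preceding theorem, $R_T \leq 2L B_T \length + 2\sqrt{2}\,\beta_T\sqrt{dT}\,\sqrt{\ceil{T/\length}}\,\sqrt{\log(1 + L^2\length/(\lambda d))}$, and to read off the polynomial order in $T$ of each term, discarding logarithmic factors as permitted by the $\tilde{O}$ convention. The only preliminary facts required are: (i) for any $\length$ polynomial in $T$, $\log(1 + L^2\length/(\lambda d)) = O(\log T)$; (ii) for $T$ large enough both choices satisfy $\length \le T$, so that $\min(T,\length) = \length$ in the definition \eqref{beta_t_SW} of $\beta_T$ and hence $\beta_T = \sqrt{\lambda}S + \sigma\sqrt{2\log(T/\delta) + d\log(1 + L^2\length/(\lambda d))} = O(\sqrt{d\log T})$; and (iii) $\ceil{T/\length} = (1+o(1))\,T/\length$, again by $\length \le T$. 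I would record the condition behind (ii) as a standing assumption: $d \le B_T\sqrt{T}$ in the first part and $d \le \sqrt{T}$ in the second, both automatic as $T\to\infty$.

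For the case where $B_T$ is known I take $\length = (dT/B_T)^{2/3}$. The bias term is then $2LB_T(dT/B_T)^{2/3} = 2L\,d^{2/3}B_T^{1/3}T^{2/3}$. For the exploration term, $\sqrt{dT}\cdot\sqrt{\ceil{T/\length}} \sim \sqrt{dT}\cdot\sqrt{T/\length} = d^{1/6}B_T^{1/3}T^{2/3}$, and multiplying by $\beta_T\sqrt{\log(1 + L^2\length/(\lambda d))} = O(d^{1/2}\log T)$ yields a term of order $d^{2/3}B_T^{1/3}T^{2/3}\log T$. Both terms are therefore $\tilde{O}(d^{2/3}B_T^{1/3}T^{2/3})$, which is the first claim.

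For the case where $B_T$ is unknown I take $\length = d^{2/3}T^{2/3}$, i.e. the window that would be optimal if $B_T$ were a constant. The bias term is then $2LB_T\,d^{2/3}T^{2/3}$, now carrying the full factor $B_T$ instead of $B_T^{1/3}$ since the window can no longer be matched to the unknown $B_T$. For the exploration term, $\sqrt{\length} = d^{1/3}T^{1/3}$, so $\sqrt{dT}\cdot\sqrt{\ceil{T/\length}} \sim d^{1/6}T^{2/3}$ and, after multiplying by $\beta_T\sqrt{\log(1 + L^2\length/(\lambda d))} = O(d^{1/2}\log T)$, the exploration term is of order $d^{2/3}T^{2/3}\log T = \tilde{O}(d^{2/3}T^{2/3})$, which is dominated by the bias term whenever $B_T \gtrsim 1$. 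Summing gives $R_T = \tilde{O}(d^{2/3}B_T T^{2/3})$.

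There is no genuine obstacle here beyond asymptotic bookkeeping; the two points I would be careful about are (a) checking that $\length \le T$ eventually under each regime, so that facts (ii)--(iii) apply and the $\min$ inside $\beta_T$ collapses, and (b) explaining why these particular window lengths are the right choices. For (b): up to logarithmic factors the exploration term behaves like its $\length$-independent coefficient $\beta_T\sqrt{dT} \asymp d\sqrt{T}$ times $\sqrt{T/\length}$, so equating it with the bias term $LB_T\length$ gives $dT \asymp LB_T\length^{3/2}$, hence $\length \asymp (dT/B_T)^{2/3}$; substituting $B_T = 1$ there produces the window $d^{2/3}T^{2/3}$ used when $B_T$ is unknown.
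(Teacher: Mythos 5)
Your proposal is correct and follows essentially the same route as the paper's own proof: substitute the prescribed window lengths into the non-asymptotic bound of the theorem and track the polynomial orders, absorbing logarithms into $\tilde{O}$. You are in fact somewhat more careful than the paper — you verify $\length \le T$ so that $\min(t,\length)$ collapses inside $\beta_T$, you treat the unknown-$B_T$ case explicitly (the paper only writes out the known-$B_T$ computation), and you correctly flag that the stated bound in the second case requires $B_T \gtrsim 1$ for the bias term to dominate.
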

 
 \begin{proof}
 With this particular choice of $\length$, we have:
 $$
 \length B_T \sim d^{2/3} T^{2/3} B_T^{1/3}.
 $$
 $\beta_T$ as defined by equation (\ref{beta_t_SW}) is equivalent to $\sqrt{d \log(T)}$.
 
 $\sqrt{T} \sqrt{\ceil{T/\length}}$ has a similar behavior than $d^{-1/3}T^{1-1/3} B_T^{1/3}$, 
 consequently the behavior of $ \beta_T \sqrt{dT} 
 \sqrt{\ceil{T/ \length}} \sqrt{ \log\left(  1+ \frac{\length L^2}{\lambda d} \right)}$ is similar 
 to $d^{2/3} B_T^{1/3} T^{2/3}\sqrt{\log(T)}\sqrt{\log(T/B_T)}$.
 
 By neglecting the logarithmic term, we have with high probability, 
$$ 
R_T = \tilde{O}_{T \to \infty}(d^{2/3} B_T^{1/3}T^{2/3}).
$$
 \end{proof}

\end{document}